\title[Optimal Online Bookmaking for Any Number of Outcomes]{Optimal Online Bookmaking for Any Number of Outcomes}
\pgfplotsset{width=10cm,compat=1.9}
\newcommand{\teq}[1]{\texorpdfstring{#1}{Lg}}
\newcommand{\algfont}[1]{{\fontfamily{qcr}\selectfont #1}}
\def\thmMargin{8pt}
\newmdenv[
    linewidth=1pt, 
    linecolor=black, 
    backgroundcolor=white, 
    innerleftmargin=\thmMargin, 
    innerrightmargin=\thmMargin, 
    innertopmargin=\thmMargin, 
    innerbottommargin=\thmMargin,
    skipabove=10pt, 
    skipbelow=10pt, 
]{TheoremBox}
\newtheorem{maintheorem}{Theorem}
\newenvironment{boxedmaintheorem}[1][]{
    \begin{TheoremBox}
    \begin{maintheorem}[#1]
}{
    \end{maintheorem}
    \end{TheoremBox}
}
\newcommand{\Nfield}[0]{\mathbb{N}}
\newcommand{\Nplus}[0]{\Nfield_{\scriptscriptstyle{+}}}
\newcommand{\Rfield}[0]{\mathbb{R}}
\newcommand{\Rdim}[1]{\Rfield^{#1}}
\newcommand{\Rplus}[1]{\Rfield^{#1}_{\scriptscriptstyle{+}}}
\newcommand{\Rpplus}[1]{\Rfield^{#1}_{\scriptscriptstyle{++}}}
\newcommand{\simplex}[1]{\Delta^{#1}}
\newcommand{\intsimplex}[1]{\operatorname{Int}\left(\simplex{#1}\right)}
\newcommand{\roots}[1]{\text{Roots}\left( #1 \right)}
\newcommand{\argmaxroot}[1]{\arg \max \roots{#1}}
\newcommand{\vvset}[1]{\mathfrak{P}_{{#1}}}
\newcommand{\nset}[1]{ \left[ #1  \right]}
\newcommand{\stdbasis}[1]{\mathcal{E}_{#1}}
\newcommand{\basis}[1]{ \mathbf{e}_{#1}}
\newcommand{\indneq}[3]{#1 \in #2 \setminus \{ #3 \}}
\newcommand{\hermp}{\mathrm{He}}
\newcommand{\potfunction}[2]{{\mathcal{V}}_{{#1}}}
\newcommand{\pot}[3]{\mathcal{V}_{{#1}} \left( #3 \right)}
\newcommand{\optimalloss}[2]{L^{\star}_{{#1}, {#2}}}
\newcommand{\esp}[2]{
    \sigma_{#1} \left( #2 \right)
}
\newcommand{\denomfunction}[2]{
    \mathcal{D}_{{{#1},{#2}}}
}
\newcommand{\denom}[3]{
    \denomfunction{#1}{#2} \left( #3 \right)
}
\newcommand{\biaseddenomfunction}[2]{
    \mathcal{F}_{#1, #2}
}
\newcommand{\biaseddenom}[3]{
    \biaseddenomfunction{#1}{#2} \left( #3 \right)
}
\newcommand{\numfunction}[2]{
    \mathcal{N}_{{{#1},{#2}}}
}
\newcommand{\num}[3]{
    \numfunction{#1}{#2} \left( #3 \right)
}
\newcommand{\ppolyfunction}[2]{
    \mathcal{P}_{{#1},{#2}} 
}
\newcommand{\ppoly}[3]{
    \ppolyfunction{#1}{#2} \left( #3 \right)
}
\newcommand{\hatppolyfunction}[2]{
    \widehat{\mathcal{P}}_{{#1},{#2}} 
}
\newcommand{\hatppoly}[3]{
    \hatppolyfunction{#1}{#2} \left( #3 \right)
}
\newcommand{\betapolyfunction}[2]{
    \widetilde{\mathcal{P}}_{{#1},{#2}} 
}
\newcommand{\betapoly}[3]{
    \betapolyfunction{#1}{#2} \left( #3 \right)
}
\newcommand{\fallingfact}[2]{
    {#1}^{\underline{{#2}}} 
}
\newcommand{\risingfact}[2]{
    {#1}^{\overline{{#2}}} 
}
\newcommand{\unstirling}[2]{
   \genfrac{[}{]}{0pt}{}{#1}{#2}  
}
\newcommand{\stirling}[2]{
    \mathrm{s}\left( #1,  #2 \right) 
}
\newcommand{\minind}[1]{^{\setminus #1}}
\newcommand{\univec}[2]{#1 \cdot\kern-0.2em \mathbf{1}_{\scriptscriptstyle#2}}
\newcommand{\zerovec}[1]{\mathbf{0}_{\scriptscriptstyle#1}}
\newcommand{\seq}[2]{#1^{ {#2} }}
\newcommand{\seqelem}[2]{{ #1 }_{ #2 }}
\newcommand{\vecind}[2]{#1 (#2)}
\newcommand{\naive}[0]{na\"{\i}ve }
\newcommand{\bm}[0]{\Psi}
\newcommand{\gamb}[0]{\Lambda}
\newcommand{\overround}[0]{\Gamma}
\newcommand{\eqdef}[0]{\coloneqq}
\newcommand{\opt}[1]{{#1}^{\star}}
\newcommand{\optr}[0]{\opt{r}}
\newcommand{\corr}[0]{\mathcal{Z}}
\begin{document}

\maketitle

\begin{abstract}
We study the \emph{Online Bookmaking} problem, where a bookmaker dynamically updates betting odds on the possible outcomes of an event. In each betting round, the bookmaker can adjust the odds based on the cumulative betting behavior of gamblers, aiming to maximize profit while mitigating potential loss. We show that for any event and any number of betting rounds, in a worst-case setting over all possible gamblers and outcome realizations, the bookmaker’s optimal loss is the largest root of a simple polynomial. Our solution shows that bookmakers can be as fair as desired while avoiding financial risk, and the explicit characterization reveals an intriguing relation between the bookmaker’s regret and Hermite polynomials. We develop an efficient algorithm that computes the optimal bookmaking strategy: when facing an optimal gambler, the algorithm achieves the optimal loss, and in rounds where the gambler is suboptimal, it reduces the achieved loss to the \emph{optimal opportunistic} loss, a notion that is related to subgame perfect Nash equilibrium. The key technical contribution to achieve these results is an explicit characterization of the \emph{Bellman-Pareto frontier}, which unifies the dynamic programming updates for Bellman's value function with the multi-criteria optimization framework of the Pareto frontier in the context of vector repeated games.
\end{abstract}

\begin{keywords}
    Online decision making, regret analysis, game theory  
\end{keywords}

\section{Introduction}\label{sec:introduction}

The online betting industry, particularly in the realm of sports betting, 
has experienced remarkable growth in recent years \citep{Zion2024SportsBetting}. 
Central to this market is the role of bookmakers, 
who set odds for uncertain events and accept wagers, 
while managing their exposure to risk. 
In online bookmaking, odds are dynamically adjusted based on incoming bets and evolving information, 
making it an intricate algorithmic task. 
The challenge is to dynamically calibrate the odds such that the bookmaker guarantees a profit regardless of the event's outcome.
In real-world settings, betting markets exhibit a wide range in the number of possible outcomes $K$.
For instance, in a National Football League (NFL) game, $K = 2$, 
as gamblers can bet on either of the two competing teams,
while the Eurovision Song Contest presents about $K = 40$ possible outcomes,
each indicating a possible winner of the contest.
As $K$ increases, the bookmaker faces greater uncertainty and must balance risk across a broader set of outcomes.

We consider \emph{online bookmaking} as a sequential game between a bookmaker and a gambler \citep{bhatt2025optimal},
noting that the behavior of multiple gamblers can be unified as a single gambler from the bookmaker's perspective.
The game unfolds over a horizon \( T \) and involves an event with \( K \) possible outcomes, where exactly one outcome $k \in [K]$ materializes. At time $t$, the bookmaker assigns the odds \( \gamma_t(k) \) for each possible outcome \( k \), and the gambler distributes a betting unit as a probability vector $q_t$, where $q_t(k)$ is the bet placed on outcome $k$. Following $T$ rounds of this procedure, the event's outcome is disclosed. If $k$ is indeed the outcome of the event, the gambler receives a payout of \( \gamma_t(k) \cdot q_t(k) \), and otherwise the gambler loses their placed bet $q_t(k)$.

The bookmaker's fairness is often measured using the \emph{overround} parameter,
also known as the bookmaker margin or \emph{vigorish} \citep{cortis2015expected},
\begin{equation}
    \overround = \sum_{k \in [K]} \frac1{\gamma_t(k)},
\end{equation}
assumed here to be time-independent.
The bookmaker's odds are considered \emph{fair} when $\Gamma = 1$, but in practice, $\Gamma > 1$.
For instance, in UK football betting, 
the overround typically\footnote{According to \citeauthor{football_data_overround}} 
remains below \(1.1\) for \( K=2 \) markets.
As the number of possible outcomes \(K\) increases, the overround also tends to rise:
reaching $\Gamma \approx 1.15$ for \( K=3 \) markets (e.g., football match winner), 
and potentially up to  \(1.6\) for \( K=24 \) markets (e.g., final score predictions).
The overround also serves to define the bookmaker's offered odds as a probability vector  $r_t(k) = \frac{1}{\Gamma \gamma_t(k)}$.

In each round, the bookmaker collects a betting unit from the gambler, 
resulting in a total of $T$ units collected over the game.  
The return to the gambler upon collecting the bets $q_1,\dots,q_T$ is 
\[
    \frac{1}{\Gamma} \sum_{k=1}^{K} \mathds{1}\{I = k\} \sum_{t=1}^{T} \frac{q_t(k)}{r_t(k)},
\]
where \(\mathds{1}\{I = k\}\) is the indicator function of the materialized outcome. 
In this paper, we consider the bookmaker's worst-case scenario:  
(i) the gambler aims to maximize the bookmaker's loss, and 
(ii) the event's outcome results in the largest possible return. 
The optimal bookmaking loss is then defined as
\begin{equation}\label{eq:intro_opt_loss}
    \optimalloss{T}{K}
    = \inf_{ r_1, \ldots r_T } \; \max_{ q_1, \ldots , q_T } \; \max_{ k \in \nset{K} } \; 
    \sum_{t=1}^{T}  
        \frac{\vecind{\seqelem{q}{t}}{k}}
        {\vecind{\seqelem{r}{t}}{k}},
\end{equation}
where $r_t,q_t$ depend on past offered bets and wagers (formally defined in \sectionref{sec:general_problem_formulation}).
Note that the overround is omitted from \eqref{eq:intro_opt_loss}, but the optimal loss still directly characterizes the fundamental limits of the overround.
In particular, after collecting $T$ money units and paying the worst-case loss to the gambler, 
the bookmaker is left with $T - \frac1{\Gamma}\optimalloss{T}{K}$ money units. 
Thus, to guarantee a positive gain, the bookmaker must choose the overround to be $\Gamma > \frac{\optimalloss{T}{K}}{T}$. 
We will show the convergence of $\frac{\optimalloss{T}{K}}{T}$ to $1$ as $T$ grows for all $K$. This implies that, even when facing events with many possible outcomes,
bookmakers can choose the overround to be arbitrarily close to $1$, given a sufficiently large $T$.

The online bookmaking problem is a \emph{repeated zero-sum game} 
falling under the broader framework of game theory pioneered by 
Von Neumann's Minimax Theorem \citeyearpar{vonNeumann1928}.
Blackwell’s approachability theorem \citeyearpar{blackwell1956analog} and 
Hannan’s no-regret algorithm for competing with the best fixed action \citeyearpar{Hannan1957} 
gave rise to online learning algorithms with provable regret bounds.
Unlike the classical experts problem \citep{cover1966behavior,vovk1990aggregating,littlestone1994weighted} 
where the optimal adversarial strategy for more than $4$ experts is unknown \citep{gravin2014optimal}, our setting benefits from a fully characterized optimal adversarial strategy \citep[][detailed below]{bhatt2025optimal}. 
Sequential prediction and adversarial decision-making are central to online learning, 
and incorporate concepts such as potential-based strategies \citep{nicol_cesabianchi_2003} and approachability \citep{abernethy2011blackwell}.
These settings have also been examined from an information-theoretic perspective \citep{merhav2002universal,Sht87},
where the minimax log-loss plays a key role, corresponding to the optimal code length in lossless compression \citep[Ch.~9]{LugosiPrediction}.

The optimal bookmaking loss in \eqref{eq:intro_opt_loss} is difficult to compute for several reasons. 
First, the bookmaker's strategy is a sequence of mappings that depend on the entire history of placed bets, 
resulting in a domain that grows exponentially with time. 
Second, the event's outcome becomes apparent only after the final betting round. 
In particular, the maximization over $k$ implies that the bookmaker does not know the \emph{true} objective function during the game,
turning it into a vector-valued game in which the bookmaker must balance the losses across outcomes throughout the betting process. 
Lastly, each possible objective function is unbounded and non-Lipschitz. 
While many online learning problems can be cast as instances of online convex optimization (OCO) \citep[see][]{hazan2019oco},  
the regret analyses in OCO hinge on losses being bounded and Lipschitz.  
In our bookmaking setting, however, neither condition holds.
In particular, an optimal bookmaking strategy may approach the simplex boundary.
Likewise, the loss is not convex–concave, ruling out minimax theorems such as \citet{sion1958minimax}.
Constant-factor considerations further complicate the analysis: 
while a loss of \(\$1\mathrm{M}\) or \(\$100\mathrm{M}\) might be considered equivalent in an asymptotic sense, their practical implications differ significantly.
To address the latter, one might formulate the problem as an \emph{extensive-form} game with a discrete action space \citep{Kroer2015DiscretizationOC}; 
however, the exponential growth of the game tree makes backward induction computationally infeasible.

Most sports betting studies adopt the gambler’s perspective, 
emphasizing outcome prediction and bankroll management.
\emph{Kelly betting} \citep{Kelly1956,RotandoThorp1992}, 
which optimizes long-term wealth through bet sizing, 
is commonly used to model gambler behavior in bookmaking algorithms \citep[e.g.,][]{zhu2024}.
Various techniques have been developed to adjust the bookmaker's \emph{implied probabilities},
such as additive, multiplicative, \emph{Shin}, and power methods \cite[see][]{clarke2017adjusting}. 
Other works examine dynamic pricing strategies
\citep[e.g.,][]{rollin2018riskneutralpricinghedginginplay,lorig2021optimal},
but do not consider adversarial betting behavior.
Most closely related to our work is \cite{bhatt2025optimal}, 
which introduced the problem of online bookmaking and solved it 
in terms of an explicit loss and an optimal algorithm for binary games, i.e., $K=2$. 
In contrast, we solve the online bookmaking problem for all $K$ and extend it to the opportunistic setting when the gambler is suboptimal (see \sectionref{subsec:problem_opportunistic}).

\paragraph{Contributions}
We present an analysis of the online bookmaking problem. 
\begin{itemize}
    \item 
    We derive an exact and computable expression for the optimal bookmaking loss $\optimalloss{T}{K}$ 
    for any number of outcomes \( K \) and betting rounds \( T \). 
    \item 
    We identify the regret  
    \(
        R_{T,K} = \optimalloss{T}{K} - T,
    \) measured with respect to a bookmaker that foresees the aggregated wagers but does not know the event's outcome.
    We prove that the regret scales as $\sqrt{T}$ and, for any $K$, 
    its scaling factor converges to the largest root of the $K$-th Hermite polynomial. 
    This establishes the fact that bookmakers can be as fair as desired.
    \item 
    We introduce the notion of opportunistic bookmaking, 
    which generalizes the setting above by accounting for the possibility that the gambler’s past behavior was not optimal. 
    We characterize the \emph{optimal opportunistic bookmaking loss} and present an efficient algorithm that achieves this loss at any time.
    \item 
    A key technique to achieve our explicit results is the characterization of the \emph{Bellman-Pareto Frontier},
    corresponding to the union of all optimal future payouts in vector games.
\end{itemize}

\paragraph{Paper Structure}
\sectionref{sec:problem_setting} includes the preliminaries 
and the formulation of the online bookmaking problem, along with its generalization to opportunistic bookmaking strategies. 
\sectionref{sec:main_results} presents our main results. 
\sectionref{sec:proof_outline} provides the main ideas and proofs.
The paper is concluded in \sectionref{sec:conclusion}.

\section{Problem Setting} \label{sec:problem_setting}

\subsection{Preliminaries} \label{sec:preliminaries}

\paragraph{Notation}
Constants are denoted by uppercase letters (e.g. \(T\)).
For \(K \in \Nplus\), we define \(\nset{K} = \{1, \dots, K\}\). 
The standard basis of \(\Rdim{K}\) is denoted \(\stdbasis{K} = \{\basis{1},\ldots , \basis{K}\}\).
Vectors are denoted with lowercase letters, e.g., \(r\).  
Vector sequences are expressed as \(\seq{r}{T} = (\seqelem{r}{1}, \dots, \seqelem{r}{T})\), 
where subscripts indicate time indices, i.e.,  
\(\seqelem{r}{t}\) is the \(t\)-th element of \(\seq{r}{T}\).  
For vectors \(r, r_t \in \Rdim{K}\), their \(k\)-th elements are \(\vecind{r}{k}\) and \(r_t(k)\), respectively.  
The notation \(r\minind{k}\) denotes \(r\) without its \(k\)-th element.
The simplex in \(\Rdim{K}\) is denoted by \(\simplex{} \equiv \simplex{K-1}\) when the dimension is clear from context.
For vectors $x,y \in \Rdim{K}$, the notation $x \oslash y$ represents element-wise division.
The \emph{weak coordinate-wise partial order} on \(\Rdim{K}\) is denoted by \(\succeq\) and defined as \( x \succeq y \) 
if \( \vecind{x}{k} \geq \vecind{y}{k} \) for all \( k \in \nset{K} \),
while \emph{strict coordinate-wise partial order} is denoted by \(\succ\) and holds if \( x \succeq y \) 
and there exists some \( k \in \nset{K} \) such that \( \vecind{x}{k} > \vecind{y}{k} \). 
For $x \in \Rfield$ and $m \in \Nfield$,
the \emph{falling factorial} is defined as
\(
    \fallingfact{x}{m}
    \, \eqdef \, \prod_{i=0}^{m-1} (x - i),
\)
and the \emph{rising factorial} is defined as
\(
    \risingfact{x}{m}
    \, \eqdef \, \prod_{i=0}^{m-1} (x + i).
\)

\begin{definition}[Elementary Symmetric Polynomial] \label{def:esp}
For $s \in \Rdim{K}$ and $m \in \Nfield$, the \emph{elementary symmetric polynomial (ESP)}
of degree \(m\) over the elements of \(s\) is 
\begin{equation}    
    \label{eq:def_of_esp}
    \esp{m}{s} 
    \eqdef
    \displaystyle\sum_{\mathfrak{I} \in \binom{\nset{K}}{m}}
        \: \prod_{k \in \mathfrak{I}} \: \vecind{s}{k},
\end{equation}
where \( \binom{\nset{K}}{m} \) denotes the set of all subsets of \(\nset{K}\) with cardinality \(m\). 
\end{definition}
\begin{example}
The non-zero elementary symmetric polynomials of \(s \in \Rdim{3}\) are:
\begin{equation*}
\begin{aligned}
    \esp{0}{s} &= 1, 
    &\quad \esp{1}{s} &= \vecind{s}{1} + \vecind{s}{2} + \vecind{s}{3},      
    \\
    \esp{2}{s} &= \vecind{s}{1} \vecind{s}{2} + \vecind{s}{1} \vecind{s}{3} + \vecind{s}{2} \vecind{s}{3},  
    & \esp{3}{s} &= \vecind{s}{1} \vecind{s}{2} \vecind{s}{3}.
\end{aligned}
\end{equation*}
\end{example}

\subsection{Problem Formulation} \label{sec:general_problem_formulation}

The \emph{online bookmaking problem} is modeled as a repeated game of \( T \) rounds with a fixed number of outcomes \( K \) and a fixed overround parameter \( \Gamma \geq 1\).
In each round \( t \), 
the bookmaker selects a probability distribution \( r_t \in \simplex{K-1} \),
and publishes betting odds 
\(
\gamma_t(k)=\frac{1}{\Gamma r_t(k)}
\)
for every outcome $k \in [K]$.
The gambler then chooses $q_t \in \simplex{K-1}$, which represents the distribution of its bets across the different outcomes. 
The bookmaker and the gambler repeat this procedure $T$ times,
where at each round they can use the previous odds $r^{t-1}$ and placed bets $q^{t-1}$. 
Formally, the bookmaker's strategy \( \bm^T \) is a sequence of mappings, defined by 
\begin{equation}\label{eq:def_strategy}
    \bm_t : \simplex{t-1} \times \Delta^{t-1} \to \Delta,  \quad t\in [T],
\end{equation}  
that determines the odds \( r_t \) as 
\(  
    r_t = \bm{}_t \, (r^{t-1},\, q^{t-1})
\).
Similarly, the gambler's strategy \( \gamb^{T} \) is a sequence of mappings, defined by
\begin{equation}
    \gamb_t : \simplex{t} \times \Delta^{t-1} \to \simplex{}, \quad t\in [T], 
\end{equation}
which produces the bet vector \( q_t \) as  
\(
    q_t = \gamb_t \, (r^t,\, q^{t-1})
\).

We consider the worst-case scenario, which defines the \emph{optimal bookmaking loss} as
\begin{equation}
    \label{eq:optimal_bookmaker_problem_setting}
    \optimalloss{T}{K}
    = \inf_{ \Psi^T } \; \max_{ q^T\in\simplex{T} } \; \max_{ k \in \nset{K} } \; 
        \sum_{t=1}^{T}  
        \frac{\vecind{\seqelem{q}{t}}{k}}
        {\vecind{\seqelem{r}{t}}{k}},
\end{equation}
where \( r^T \) is determined by \(\Psi^T\). 
Note that the maximization over $\gamb^T$ is replaced with $q^T \in \simplex{T}$ without loss of optimality, 
as the gambler acts as an adversary that responds to the bookmaker's strategy. 
Also, since the bookmaker's strategy is deterministic, we consider, without loss of optimality, a bookmaker's strategy $\Psi_t$ that depends on $q^{t-1}$. 
Our main objective is to characterize the optimal loss as a simple computable expression for all $T$ and $K$,
and to find an optimal bookmaking strategy $\bm^{T}$ that achieves the infimum 
in \eqref{eq:optimal_bookmaker_problem_setting}.

By \citet[Theorem~2]{bhatt2025optimal}, the maximization over \(q^T \in \simplex{T}\) 
in \eqref{eq:optimal_bookmaker_problem_setting} can be restricted to the vertices of the simplex, 
i.e., it suffices to maximize over \(q^T \in \stdbasis{K}^T \).
This restriction is related to the notion of decisive gamblers.
If the gambler adheres to this optimal behavior, 
we derive a strategy \(\Psi^T\) that achieves the optimal bookmaking loss simultaneously for all decisive gamblers. 
In real-world settings, however, the gambler may deviate from optimality,
reflecting the aggregate behavior of multiple gamblers. 
This motivates the design of a bookmaker's strategy capable of handling---and even benefiting from---such non-optimal behavior.
In the next section, 
we refine the notion of online bookmaking loss to capture the idea of \emph{opportunistic strategies}.

\subsection{Opportunistic Bookmaking Strategy}\label{subsec:problem_opportunistic} 
The optimal loss $\optimalloss{T}{K}$ is characterized as the solution to a repeated min-max game.
In this game, the first player, a bookmaker, selects a strategy, while the second player, the gambler, responds to it. 
The second player can be viewed as an adversarial environment that
consistently chooses the worst-case scenario for the bookmaker at each step.
However, the gambler may not always play optimally; that is, the environment is not as adversarial as expected. 
We then ask
\begin{align*}
    \text{
        \emph{Can the bookmaker reduce its optimal loss when facing a suboptimal gambler, and if so, how?}
    }
\end{align*}

This question is structured as follows: we provide the bookmaker with a state vector $s\in\mathbb{R}^K$ corresponding to the payouts that are already committed to the possible event outcomes. 
The state vector, for example, may arise from past betting rounds as $s = \sum_{i=1}^{t-1} q_i \oslash r_i$.
The dependence of the state on $t$ is omitted since only the committed payouts matter, and not the number of rounds that led to the state vector.
The bookmaker's objective is then to design a strategy for $H$ future betting rounds.
An opportunistic bookmaking strategy \( \hat{\bm}^{H} \) is a sequence of mappings,
defined by 
\[
    \hat{\bm{}}_h : \mathbb{R}^K\times \simplex{h-1} \times \simplex{h-1} \to \simplex{}, 
    \quad h \in [H],
\]
such that, for each of the remaining rounds $h \in [H]$, 
the odds are given by $r_h = \hat{\bm}_h(s,\, r^{h-1},\, q^{h-1})$.

The optimal opportunistic bookmaking loss (OOBL) given a state $s$ is defined as
\begin{equation}
    \label{eq:def_oppor_loss}
    \optimalloss{H}{K} \left( s \right)
    = \inf_{ \hat{\Psi}^H } \; \max_{q^H\in\simplex{H}} \; \max_{ k \in \nset{K} } \;
    s(k) + \sum_{h=1}^{H} 
        \frac{q_h(k)}
        {r_h(k)}.
\end{equation}
This general formulation allows us to define the loss with respect to an arbitrary state, 
rather than a state resulting from optimal adversarial dynamics.
The expression $s(k) + \sum_{h=1}^{H} \frac{q_h(k)}{r_h(k)}$ quantifies the total payout for outcome $k$,
combining the prior commitments encoded in $s(k)$ and the additional payout incurred over the remaining $H$ rounds.
Although the gambler may have acted suboptimally, 
we retain the worst-case maximization over $q^H$, since the bookmaker cannot predict future behavior. 
This conservative formulation avoids making optimistic assumptions about how the gambler will act going forward.
Clearly, \eqref{eq:def_oppor_loss} specializes to the online bookmaking loss $\optimalloss{T}{K}$ in \eqref{eq:optimal_bookmaker_problem_setting} 
by choosing $s = \zerovec{K}$ and $H=T$. 
Even though the latter is a special case,
we present the results for this setting first, 
as they are cleaner and provide stronger insights.

\section{Main Results} \label{sec:main_results}
This section presents our main results.
We first present \theoremref{theorem:optimal_loss,theorem:regret_factor}, 
which characterize the optimal bookmaking loss and its regret growth rate.
In \sectionref{subsec:results_opportunistic}, 
we establish the optimal opportunistic bookmaking loss, a strategy to achieve it (\theoremref{theorem:optimal_bookmaking_algorithm}), 
and present \algorithmref{algo:optimal_bookmaking}, which provides an opportunistically optimal strategy at all times.

Our first result characterizes the optimal bookmaking loss.
\begin{boxedmaintheorem}[The Optimal Bookmaking Loss]
    \label{theorem:optimal_loss}
    The optimal bookmaking loss
    \(
    \optimalloss{T}{K} 
    \)
    is equal to the largest root of the polynomial 
    \begin{equation}
        \label{eq:optimal_cost_polynomial}
        \ppoly{T}{K}{x} = \sum_{m=0}^{K} \binom{K}{m} \risingfact{(-T)}{K-m} \, x^{m},
    \end{equation}
    where \(\risingfact{\alpha}{\beta}\) denotes the rising factorial (defined above).
\end{boxedmaintheorem}
\theoremref{theorem:optimal_loss} provides an explicit and computable expression for any horizon $T$ and any number of outcomes $K$. 
In particular, to compute the loss for any pair of parameters, all that is required is to find the maximal root of a polynomial of degree $K$. That is, the computational complexity does not scale with $T$. 
We illustrate the benefits of the explicit loss expression via special cases.
\begin{itemize}
    \item 
For $K=2$, we obtain 
\(
    \ppoly{T}{2}{x} = x^2 - 2Tx + T(T-1)
\) 
whose largest root is
\begin{equation} \label{eq:regret_K2}
    \optimalloss{T}{2} = T + \sqrt{T}.
\end{equation}  
This reveals the optimal online bookmaking loss for binary games in \citet{bhatt2025optimal}.
\item 
For $K=3$, we obtain 
\( 
    \ppoly{T}{3}{x} = x^3 - 3Tx^2 + 3T(T - 1)x - T(T-1)(T-2)
\). 
It~is a cubic equation, and by \emph{Cardano's method} the largest root is 
\begin{equation} \label{eq:regret_K3}
    \optimalloss{T}{3} =  T + 2\sqrt{T} \textstyle\cos\left( \frac{1}{3} \arccos \left( T^{-\frac{1}{2}} \right) \right).
\end{equation} 
\end{itemize}

For $K\ge4$, explicit computation of $\optimalloss{T}{K}$ is difficult, but we can identify a structure for the optimal loss based on $K=2,3$:
\begin{align}
    \label{eq:main_results_optimal_loss_decomposition}
     \optimalloss{T}{K} = T + R_{T,K},
\end{align}
where \(R_{T,K}\) denotes the \emph{regret}. 
The notion of regret is not arbitrary: 
the linear \(T\) term corresponds to a lower bound on the optimal loss, 
achieved by a bookmaker that knows the sequence of bets in hindsight but does not know the event's outcome, so its optimal action is $r_t=q_t$. 
Recall that if \( {R_{T,K}}/{T} \) converges to zero, the overround can be arbitrarily close to $1$ (fair regime).
We establish this convergence by characterizing the growth rate of the regret.

Observe that for $K=2,3$, the regret scales as \(O(\sqrt{T})\);
as it holds that \(R_{T,2}~=~\sqrt{T} \), and
\(
    \lim_{T \to \infty} R_{T,3}/\sqrt{T} = \sqrt{3}.
\)
This suggests studying the asymptotic growth of regret by defining the \emph{asymptotic regret factor}
\begin{align}\label{eq:regret_factor}
     \beta_K \eqdef \lim_{T \to \infty} \frac{R_{T,K}}{\sqrt{T}},
\end{align}
provided that the limit exists. Here, \( K \) is treated as a fixed constant, independent of \( T \).  
Analysis shows the limit indeed exists and, in some cases, can even be expressed by radicals, e.g., \(\beta_4= \sqrt{3+\sqrt{6}}\) and \(\beta_5= \sqrt{5+\sqrt{10}}\). 
We provide a precise characterization of $\beta_K$ for all $K$.
\begin{boxedmaintheorem}[The Asymptotic Regret Factor]
    \label{theorem:regret_factor}
    The asymptotic regret factor $\beta_{K}$ in \eqref{eq:regret_factor} is equal to the largest root of 
    the $K$-th Hermite polynomial 
        \begin{equation*}
            \hermp_K(x) = (-1)^K \frac{1}{\varphi(x)} \frac{\mathrm{d}^K}{\mathrm{d} x^K} \varphi(x),
        \end{equation*}
    where \(\varphi(x)\) is the probability density function of the standard normal distribution.
    
    \noindent Consequently, 
    \begin{equation}
        \beta_{K} = 2 \sqrt{K} + o( \sqrt{K} ), \nonumber
    \end{equation}
    and explicit bounds on $\beta_{K}$ are given below in \eqref{eq:bounds_hermite_root}.
\end{boxedmaintheorem}

\theoremref{theorem:regret_factor} 
establishes the $\sqrt{T}$-scaling of the regret for all $K$,  
and provides a simple method to compute the optimal regret factor via Hermite polynomials.
The roots of Hermite polynomials are well-studied: all roots are real, 
and the largest root satisfies $A_K\le \beta_K \le  B_K$ where the bounds are given by
\begin{equation}\label{eq:bounds_hermite_root}
    A_K = 2 \sqrt{K} - 9 \cdot {2}^{-5/3} \cdot {K}^{-1/6}, \; B_K = \sqrt{4K + 2} - \sqrt{2} \cdot (6^{-1/3}i_1) \cdot (2K + 1)^{-1/6},
\end{equation}
and \( i_1 \) denotes the smallest positive root of the \emph{Airy function} (\( 6^{-1/3}i_1 \approx 1.85574 \)) \citep{krasikov2004bounds}.
\begin{remark} \label{remark:opt_loss_grows_to_infinity_for_fixed_T}
    For any fixed \(T \in \Nplus \), 
    \begin{equation} \label{eq:opt_regret_grows_to_infinity_for_fixed_T}
        \lim_{K \to \infty} \frac{R_{T,K}}{\sqrt{T}} = \infty.
    \end{equation}
    Thus, the optimal bookmaking loss grows to infinity as the number of outcomes $K$ grows.
    The proof is given in \appendixref{appendix:proof_remark_opt_loss_grows_to_infinity_for_fixed_T}.
\end{remark}

\newpage
\subsection{The Opportunistically Optimal Bookmaking Strategy} \label{subsec:results_opportunistic}

\begin{wrapfigure}{r}{0.36\textwidth}
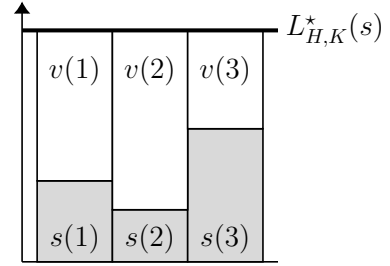

\hspace*{-6.em}
\floatconts
{fig:waterlevel}
{\caption{Illustration of a single step in a game with $K=3$. 
The state $s(k)$ corresponds to the committed payouts to each possible outcome so far, 
and the residual losses $v(k)$ correspond to the anticipated future losses. 
Theorem \ref{theorem:nash} establishes the Nash equilibrium that these vectors sum up to the optimal loss $\optimalloss{H}{K}(s)$.}}
{\includeteximage[width=0.49\textwidth]{residual_loss_vector_figure}}
\end{wrapfigure}

We illustrate in \figureref{fig:waterlevel} the main idea of the optimal algorithm presented in this section. 
The gray bars correspond to the state vector $s$, representing the payouts committed by the bookmaker prior to designing an opportunistic strategy with horizon~$H$. Given a state vector and $H$, the bookmaker computes $\optimalloss{H}{K}(s)$, 
which corresponds to the total loss—comprising the current state and the forthcoming worst-case payouts. 
In \theoremref{theorem:nash}, we establish a Nash equilibrium of the game: for all state vectors, 
if the gambler bets decisively in each of the remaining betting rounds (a strategy we refer to as a \emph{decisive gambler}), 
the bookmaker is forced to return $\optimalloss{H}{K}(s)$—that is, the maximum among the gray bars will reach the water level. We emphasize that a decisive gambler may choose a different outcome in each betting round. Conversely, regardless of the initial state vector, the optimal bookmaker is able to guarantee the same loss $\optimalloss{H}{K}(s)$ for any decisive gambler's betting sequence.
 This equilibrium resembles a \emph{water-filling} solution: the bookmaker balances the losses across outcomes such that the maximum loss reaches the water level $\optimalloss{H}{K}(s)$.
The idea behind the opportunistic strategy is now apparent: when a decisive bet is placed by the gambler, the corresponding gray bar increases, while all other coordinates of the state remain unchanged. However, if the gambler distributes its bet among multiple outcomes, the gambler's strategy is suboptimal, and the bookmaker exploits this opportunity by lowering the water level accordingly. The difference between the water level and the current state vector is called the \emph{residual loss} vector.

The optimal opportunistic bookmaking loss will be defined using the polynomial
\begin{align} \label{eq:biased_denominator_function}
    \begin{aligned}
    \biaseddenom{H}{s}{x}
    = 
    \sum_{m=0}^{K} c_{H,m}(s) \, x^{K - m},     \ \
    \text{where} \ \   c_{H,m}(s) 
    =  
    (-1)^{m} \sum_{n=0}^{m} \fallingfact{ H }{m-n} \, \binom{K - n}{m - n} \, \esp{n}{s},
    \end{aligned}
\end{align}
where the dependence on $K$ is omitted from the notation  $\biaseddenomfunction{H}{s}$ for brevity.
The optimal strategy will be calculated from the residual loss vector using the polynomial
\begin{equation}
\label{eq:main_results_def_of_denom_poly}
    \denom{H}{K}{v} \eqdef 
        \sum_{m=0}^{K} \risingfact{ (-H) }{K-m} \, \esp{m}{v}.
\end{equation}

\algorithmref{algo:optimal_bookmaking} provides a simple procedure to sequentially compute the odds based on the bets placed so far. 
The main idea is to compute the odds based on the current residual loss vector. 
In each round, if the gambler is not decisive, the loss can be decreased to the OOBL, 
and if the gambler is optimal, the bookmaker sticks to the strategy computed using the previous loss. 
We formalize the performance of \algorithmref{algo:optimal_bookmaking} in \theoremref{theorem:optimal_bookmaking_algorithm}.

\begin{boxedmaintheorem}[Optimal Opportunistic Bookmaking: Loss and Algorithm]
    \label{theorem:optimal_bookmaking_algorithm}
    For any number of outcomes $K$, horizon $H$, and state $s \in \Rdim{K}$,
    the optimal opportunistic bookmaking loss $\optimalloss{H}{K}(s)$ is equal to the largest real root of the polynomial 
    \(\biaseddenomfunction{H}{s}\) in \eqref{eq:biased_denominator_function}.
    The optimal odds for the $k$-th outcome are 
    \begin{equation}
        \label{eq:optimal_odd_thm}
        r(k) = \frac{\denom{H-1}{K-1}{v\minind{k}}}{\denom{H-1}{K}{v}},
    \end{equation}
    where $v = \univec{\optimalloss{H}{K} \left( s \right)}{K} - s$ is the residual loss vector.         
    Consequently, \algorithmref{algo:optimal_bookmaking} is opportunistically optimal at any time.
\end{boxedmaintheorem}
We note that \eqref{eq:optimal_odd_thm} constitutes a bookmaking strategy: 
since $r(k)$ depends only on the residual loss vector $v$ and the horizon $H$, 
recomputing $v$ after each bet and reusing this formula yields the bookmaker's action at every step. This is the main procedure in \algorithmref{algo:optimal_bookmaking}.
\begin{remark} \label{remark:all_polys_are_denom}
The polynomials $\ppoly{T}{K}{x}$ and $\biaseddenom{H}{s}{x}$ that are used to compute the optimal bookmaking loss and the optimal opportunistic bookmaking loss are related to $\denom{H}{K}{v}$ via the relation 
\begin{equation}
    \biaseddenom{H}{s}{x} \eqdef \denom{H}{K}{\univec{x}{K} - s}.
    \label{eq:remark_def_of_biaseddenom}
\end{equation}
Consequently, we also have $\ppoly{T}{K}{x} 
    \eqdef \denomfunction{T}{K}(\univec{x}{K})$. The particular vector argument of $\denom{H}{K}{\cdot }$ in \eqref{eq:remark_def_of_biaseddenom} corresponds to a residual vector when the water level is $x$. That is, the OOBL is the minimal water level such that the difference to the state constitutes a valid residual vector. 
    The proof of the identity in \eqref{eq:remark_def_of_biaseddenom} appears in \appendixref{appendix:proof_of_all_polys_are_denom}.
\end{remark}

\begin{remark}[Computational Complexity]
\label{remark:algo_complexity}
\algorithmref{algo:optimal_bookmaking} is implemented in a forward manner, 
so that only $T$ sequential operations are required.
This is noteworthy since it is uncommon for an exactly optimal algorithm to be both forward-running and to exhibit a per-round computational complexity that is independent of the horizon $T$. 
The overall complexity of the algorithm is \(O(TK^2)\) with a per-round complexity of \(O(K^2)\).
The complexity analysis can be found in \appendixref{appendix:complexity_analysis}.
\end{remark}

\begin{algorithm2e}
\caption{Opportunistically Optimal Bookmaking Algorithm}
\label{algo:optimal_bookmaking}
\SetKw{Initialization}{Initialization:}
\SetKw{output}{output}
\SetNoFillComment
\DontPrintSemicolon
\LinesNumbered
\KwIn{$K$, $T$, $\seq{q}{T-1}$ (bets obtained sequentially)}
\KwOut{$r_1,\dots,r_T$ (outputs $\seqelem{r}{t}$ after observing $\seq{q}{t-1}$)}
\Initialization{$s \gets \zerovec{K},\; L \gets \optimalloss{T}{K}$ }

\output $\seqelem{r}{1}\gets \univec{\frac{1}{K}}{K}$ \label{line:output_first_odds}

\For{$t = 2:T$}{
    
    $s \gets s + \seqelem{q}{t-1} \oslash \seqelem{r}{t-1}$  \tcp*{update the state vector} 
    \label{line:update_the_state_vector}

    \If(\hfill \tcp*[h]{check if the gambler is not optimal (decisive)})
    {$\seqelem{q}{t-1} \not\in \stdbasis{K}$ \label{line:check_if_decisive}}
    {
        $L \gets \argmaxroot{\biaseddenomfunction{T-t+1}{s}}$ \tcp*{reduce loss to the optimal opportunistic loss} 
        \label{line:update_the_loss}

    }

    $v \gets \left( \univec{L}{K} - s \right)$
    \label{line:opt_res_loss_vector_update}
    \tcp*{calculate the residual (future) loss vector} 

    \For(\label{line:iterations_for_optimal_odds_update}){$k=1:K$}{
        $r(k) \gets \denom{T-t}{K-1}{v\minind{k}}$
        \label{line:optimal_odds_update}
        \tcp*{$r(k) \propto$ the optimal betting odds in \eqref{eq:optimal_odd_thm}} 
    }\label{line:end_iterations_for_optimal_odds_update}
    
    \output $\seqelem{r}{t} \gets r/\|r\|_1$  \label{line:output_odds}
}
\end{algorithm2e}

\begin{remark}
\label{remark:prev_algo_comparison}
\citet{bhatt2025optimal} consider the binary case (\(K = 2\)) and design two algorithms for the bookmaker, 
depending on the gambler's behavior. 
Against a decisive gambler, they propose the \emph{Optimal Strategy For Decisive Gamblers} algorithm (\algfont{ODG}), 
which coincides with \algorithmref{algo:optimal_bookmaking}. 
Against a non-decisive gambler, they construct a mixture of \algfont{ODG} strategies, 
based on the observed betting sequence. This approach is shown to be suboptimal in \appendixref{appendix:prev_algo_comparison}. 
They also propose a Monte Carlo approximation of the mixture, which incurs a computational cost of \( \omega(T^3) \).
In contrast, \algorithmref{algo:optimal_bookmaking} applies to both gambler types and achieves optimality with significantly lower complexity.
\end{remark}

\begin{remark}[The Effect of \teq{$\varepsilon$}-Approximate Root-Finding]
    \label{remark:approximate_root_finding}
    \algorithmref{algo:optimal_bookmaking} computes the optimal opportunistic bookmaking loss 
    by evaluating the largest real root of a polynomial (Line~\ref{line:update_the_loss}).
    In \appendixref{appendix:oracle_optimal_bookmaking}, we present a modified version of the algorithm 
    that approximates this root with precision \(\varepsilon\).
    In this case, for every non-decisive gambler action (Line~\ref{line:check_if_decisive}),  
    an additional loss of at most \(2\varepsilon\) is incurred.
    Therefore, the additional loss accumulated over the course of the game is at most \(2T\varepsilon\), 
    which is \(o(1)\) when \(\varepsilon = O(T^{-1})\).
    Utilizing the standard Newton–Raphson method---
    which converges in \(O(K \log{\varepsilon^{-1}})\) iterations---for root-finding, 
    the resulting algorithm incurs only an additional constant loss, 
    with overall \(O(TK^2 + KT \log T)\) complexity.
\end{remark}

\section{Proofs and Key Ideas} \label{sec:proof_outline}
This section presents the proofs of the main results.
In \sectionref{sec:nash}, we establish a Nash Equilibrium of the game (\theoremref{theorem:nash}).
\sectionref{sec:bpf} introduces and characterizes the \emph{Bellman-Pareto}.
The proofs of the main results in \sectionref{sec:main_results} are provided in \sectionref{sec:proofs_of_main_thms}.

\subsection{Nash Equilibrium of the Game}
\label{sec:nash}
We present the optimal opportunistic bookmaking loss (OOBL), defined in \eqref{eq:def_oppor_loss},
as the result of a sequential optimization process, 
where at each step, we optimize a function that depends on the previous function in a backward manner. This formulation allows optimization only over the instantaneous bookmaker and gambler actions. 
\begin{definition}[Value Function] 
    \label{def:value_function}
    The value function quantifies the OOBL given a specified horizon \(H\) and state vector \(s\). 
    It is defined recursively for $H > 0$
    \begin{equation}
        \label{eq:value_function}
        \pot{ H }{K}{s} \; \eqdef \;
            \inf_{ r \in \simplex{} } \; \max_{ q \in \simplex{}} \;
                \pot{ H - 1}{K}{
                     s + q \oslash r 
                 } 
    \end{equation}
    with the initial condition $\pot{0}{K}{s}  \eqdef  \max_{ k \in \nset{K} } \vecind{s}{k}$.
\end{definition}
Note that the value function is akin to that in Markov Decision Processes using the dynamic programming principle. 
Indeed, it corresponds to the Bellman operator, 
but the maximization over $k\in[K]$ at the last round implies that the immediate reward is zero at all times, 
and the overall reward is measured at the final time. 
It is straightforward to see that 
$\pot{ H }{K}{s} = \optimalloss{H}{K} \left( s \right)$. 
In the special case $s=\zerovec{K}$, we obtain 
$\pot{ H }{K}{\zerovec{K}} = \optimalloss{H}{K}$.
Before presenting a Nash equilibrium, 
we define the notion of a \emph{decisive gambler}.
\begin{definition}[Decisive Gambler]\label{def:decisive_action}
    A gambler is decisive if it bets on a single outcome, i.e., $q \in \stdbasis{K}$ in \eqref{eq:value_function}.
\end{definition}
We now present a result that characterizes a Nash equilibrium 
and establishes the existence and uniqueness of an optimal opportunistic bookmaking strategy. 
\begin{theorem}[Nash Equilibrium of the Game]
    \label{theorem:nash}  
    For any horizon \( H \geq 1 \) and state \( s \), the optimization of the value function in \eqref{eq:value_function} satisfies the following properties: 
    (i) for any \( r \in \simplex{}\), there exists an optimal gambler who is decisive, 
    meaning it is sufficient to maximize over \( q \in \stdbasis{K} \),
    and (ii) the optimal $\optr \in \simplex{}$ is unique and satisfies
    \begin{equation}
        \label{eq:main_results_thm_the_game_dynamics}
        \pot{ H }{K}{s} = \pot{ H-1 }{K}{s + q \oslash \optr } \quad \forall q \in \stdbasis{K}.
    \end{equation}
\end{theorem}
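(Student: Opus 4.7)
The plan is to prove Theorem~\ref{theorem:nash} by induction on $H$, alongside two auxiliary structural properties of the value function $\pot{H}{K}{s}$: convexity and coordinatewise monotonicity in $s$. I would first establish these properties by induction. The base case $\pot{0}{K}{s} = \max_k \vecind{s}{k}$ is convex and coordinatewise nondecreasing. For the inductive step, the $k$-th coordinate of $s + \basis{k}/\vecind{r}{k}$, namely $\vecind{s}{k} + 1/\vecind{r}{k}$, is jointly convex in $(r,s)$, while the remaining coordinates are linear in $s$; composing with the convex and coordinatewise nondecreasing function $\pot{H-1}{K}{\cdot}$ via the vector composition rule preserves joint convexity. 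Taking the max over $k$ preserves joint convexity, and the infimum over $r$ of a jointly convex function of $(r,s)$ is convex in $s$, so $\pot{H}{K}{\cdot}$ is convex; monotonicity is immediate from the recursion.

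I would then prove Part~(i) using the identity
\[
    s + q \oslash r = \sum_{k=1}^{K} \vecind{q}{k}\, \bigl( s + \basis{k}/\vecind{r}{k} \bigr),
\]
which holds because $\sum_k \vecind{q}{k} = 1$, thereby realising the argument of $\pot{H-1}{K}{\cdot}$ as a convex combination of the $K$ vertex-induced vectors. Convexity of $\pot{H-1}{K}{\cdot}$ then yields
\[
    \pot{H-1}{K}{s + q\oslash r} \;\le\; \sum_{k} \vecind{q}{k}\, \pot{H-1}{K}{s + \basis{k}/\vecind{r}{k}} \;\le\; \max_k \pot{H-1}{K}{s + \basis{k}/\vecind{r}{k}},
\]
so the inner maximum is attained at some vertex $q \in \stdbasis{K}$.

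For Part~(ii), I would use Part~(i) to rewrite the recursion as $\pot{H}{K}{s} = \inf_{r \in \simplex{}} \max_k g_k(\vecind{r}{k})$, with $g_k(\rho) \eqdef \pot{H-1}{K}{s + \basis{k}/\rho}$. Each $g_k$ depends only on the coordinate $\vecind{r}{k}$, is continuous, and is strictly decreasing in $\rho$ (by strict coordinatewise monotonicity of $\pot{H-1}{K}{\cdot}$, addressed below); moreover $g_k(\rho) \to \infty$ as $\rho \to 0^+$, so the infimum is attained in the relative interior of $\simplex{}$. Suppose for contradiction the minimiser $\optr$ does not equalize, and let $S = \{k : g_k(\optr(k)) = \max_j g_j(\optr(j))\} \ne \nset{K}$. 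Shifting an infinitesimal mass out of $\optr(k^\circ)$ for some $k^\circ \notin S$ and distributing it uniformly across $S$ preserves the simplex constraint; by strict monotonicity, every $g_k(\optr(k))$ for $k \in S$ strictly decreases, while $g_{k^\circ}$ stays below the common value of $S$ for a small enough shift. Hence $\max_k g_k$ strictly decreases, contradicting optimality. Therefore $g_k(\optr(k)) = \pot{H}{K}{s}$ for all $k \in \nset{K}$, giving \eqref{eq:main_results_thm_the_game_dynamics}. Uniqueness of $\optr$ follows because strict monotonicity gives each equation $g_k(\rho) = \pot{H}{K}{s}$ a unique root, pinning every coordinate of $\optr$.

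The main obstacle is the strict coordinatewise monotonicity of $\pot{H-1}{K}{\cdot}$ underlying Part~(ii). It fails at $H = 0$ (since $\max_k \vecind{s}{k}$ is insensitive to perturbing a non-maximal coordinate), so the induction has to be bootstrapped from $H = 1$. I would handle this by combining the lower bound $\pot{H}{K}{s} \ge \max_k \vecind{s}{k} + H$---realised by the mirroring gambler $q_h = r_h$, which makes $\vecind{q_h}{k}/\vecind{r_h}{k} = 1$ for every $k$---with an inductive analysis of the Bellman recursion to propagate the strict coordinate dependence from $H$ to $H+1$.
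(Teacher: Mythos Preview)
Your plan is sound and tracks the paper's strategy closely: induction on $H$ carrying convexity and coordinatewise monotonicity of $\potfunction{H}{K}$, Part~(i) via Jensen applied to the convex combination $s + q \oslash r = \sum_k \vecind{q}{k}(s + \basis{k}/\vecind{r}{k})$, and Part~(ii) via a mass-shifting equalization argument driven by strict monotonicity. You correctly flag that strict monotonicity fails for $\potfunction{0}{K}$ and must be bootstrapped at $H=1$; the paper handles this with a dedicated lemma (Lemma~\ref{lemma:nash_last_round}), and your lower bound $\pot{H}{K}{s} \ge \max_k \vecind{s}{k} + H$ is indeed enough for that bootstrap---it forces $g_k(\vecind{\optr}{k}) > \max_{j \ne k}\vecind{s}{j}$ at the optimum, so locally $g_k(\rho) = \vecind{s}{k} + 1/\rho$ is strictly decreasing---but you should spell this out rather than leave it as a sketch.

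Two implementation differences are worth recording. First, your convexity step is cleaner: you use the vector composition rule (convex, coordinatewise nondecreasing $\potfunction{H-1}{K}$ composed with componentwise-convex $(r,s)\mapsto s + \basis{k}/\vecind{r}{k}$) and then partial minimization over the simplex. The paper instead argues by hand with the optimal actions $r_a,r_b$ at two states and the convexity of $\rho\mapsto 1/\rho$; in doing so it invokes Part~(ii) inside the convexity induction (step~(a) of Lemma~\ref{lemma:convexity_of_value_function}), a coupling your route avoids. Second, the paper spends effort establishing continuity and attainment of the infimum via Berge's maximum theorem and an explicit compact-valued correspondence (Lemma~\ref{lemma:continuity_of_the_value_function}); in your framing, continuity comes for free once convexity on $\Rdim{K}$ is established, and your coercivity remark $g_k(\rho)\to\infty$ as $\rho\to 0^+$ handles attainment, so the Berge machinery is unnecessary.
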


\theoremref{theorem:nash} characterizes optimal strategies for both the bookmaker and the gambler: 
a decisive gambler maximizes the bookmaker's return,
and the optimal bookmaker equalizes the loss associated with each decisive gambler's
bet $q \in \stdbasis{K}$. 
This balancing mechanism resembles the \emph{water-filling algorithm} \citep[Ex.~5.2]{Boyd2004}, 
but only the loss of a single outcome will reach the water level as we illustrate in \figureref{fig:waterlevel}.
We remark that the optimality of decisive gamblers was established in \cite{bhatt2025optimal} for \(s = \zerovec{K}\),
while \theoremref{theorem:nash} extends it to general \(s\) and proves the existence of a balancing \(\optr\). 
The proof of \theoremref{theorem:nash} is provided in \appendixref{appendix:the_adversarial_game_dynamics}.

Although the value function enables us to characterize the players' optimal strategies, 
its recursive structure makes it computationally intractable for large $H$. 
A further challenge lies in the state space, 
as \eqref{eq:value_function} must be solved for any state $s \in \Rdim{K}$.
Leveraging the implication of \theoremref{theorem:nash}, 
we restrict our attention to decisive gamblers without loss of optimality.

\subsection{Bellman-Pareto Frontier}
\label{sec:bpf}

\theoremref{theorem:nash} implies that the optimal opportunistic bookmaking strategy $\hat{\Psi}^H$
and any decisive gambler's betting sequence $q^H \in \left(\stdbasis{K} \right)^H$ satisfy
\begin{equation} \label{eq:conclusion_from_nash_and_value_function}
    \pot{H}{K}{s} \;=\; 
        \max_{k \in [K]} \left( s(k) + \sum_{h=1}^{H} \frac{q_h(k)}{r_h(k)} \right).
\end{equation}
The next lemma is a simple observation that the remaining optimization over $k\in[K]$ in \eqref{eq:conclusion_from_nash_and_value_function} is determined by the gambler's last bet $q_H$.
\begin{lemma} \label{lemma:last_gamblers_bet}
The index $k \in [K]$ that attains the maximum in \eqref{eq:conclusion_from_nash_and_value_function}
is determined by the gambler's bet in the final round as $q_H = \basis{k}$. 
\end{lemma}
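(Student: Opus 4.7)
The plan is to reduce the claim to a one-step analysis via \theoremref{theorem:nash} and then exploit the resulting water-filling structure of the optimal odds at the final round.

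First, I would write the cumulative state after the first $H-1$ decisive rounds as $s_{H-1} = s + \sum_{h=1}^{H-1} \seqelem{q}{h} \oslash \seqelem{r}{h}$ and rewrite the right-hand side of \eqref{eq:conclusion_from_nash_and_value_function} as $\max_{\ell \in [K]} (s_{H-1}(\ell) + \seqelem{q}{H}(\ell)/\seqelem{r}{H}(\ell))$. Iterating the identity $\pot{H}{K}{s} = \pot{H-1}{K}{s + q \oslash \optr}$ from \theoremref{theorem:nash} along the decisive gambler's path collapses the horizon one round at a time, so that $\pot{H}{K}{s} = \pot{1}{K}{s_{H-1}}$. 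Denote this common value by $L$; the lemma then reduces to showing that, when $\seqelem{q}{H} = \basis{k}$, the maximum over $\ell$ of $s_{H-1}(\ell) + \seqelem{q}{H}(\ell)/\seqelem{r}{H}(\ell)$ is attained at $\ell = k$.

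Second, I would apply \theoremref{theorem:nash} at the final round itself. Since $\pot{0}{K}{x} = \max_\ell x(\ell)$, the equilibrium identity \eqref{eq:main_results_thm_the_game_dynamics} with horizon $1$ yields, for every $j \in [K]$,
\begin{equation*}
    L \;=\; \pot{0}{K}{s_{H-1} + \basis{j} \oslash \seqelem{r}{H}} \;=\; \max_\ell \left( s_{H-1}(\ell) + \frac{\mathds{1}\{\ell = j\}}{\seqelem{r}{H}(j)} \right).
\end{equation*}
I would argue by contradiction that this maximum is always attained at $\ell = j$, so that $L = s_{H-1}(j) + 1/\seqelem{r}{H}(j)$ for every $j$. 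Indeed, if instead the maximum were attained at some $\ell_0 \neq j$, then $s_{H-1}(\ell_0) = L$; substituting $j' = \ell_0$ into the same identity, the $\ell = \ell_0$ term equals $s_{H-1}(\ell_0) + 1/\seqelem{r}{H}(\ell_0) = L + 1/\seqelem{r}{H}(\ell_0) > L$, contradicting the fact that the maximum equals $L$. This implicitly uses $\seqelem{r}{H}(j) > 0$ for every $j$, which follows from finiteness of $L$: if some coordinate of $\seqelem{r}{H}$ vanished, the corresponding decisive bet would force an infinite payout, contradicting optimality.

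With the water-filling identity $s_{H-1}(j) + 1/\seqelem{r}{H}(j) = L$ for all $j$ in hand, the lemma is immediate: when $\seqelem{q}{H} = \basis{k}$, the $\ell = k$ summand equals $L$, while for every $\ell \neq k$ the summand reduces to $s_{H-1}(\ell) = L - 1/\seqelem{r}{H}(\ell) < L$, so the maximum is attained uniquely at $\ell = k$. The main obstacle is the short contradiction argument that promotes the per-$j$ equilibrium equalities into the uniform water-filling identity; once that identity, together with strict positivity of $\optr$, is established, the conclusion is a one-line comparison.
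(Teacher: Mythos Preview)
Your proof is correct and follows essentially the same approach as the paper: both argue by contradiction that if some $j\neq k$ attained the maximum, then replaying the last round with the decisive bet $\basis{j}$ would yield loss $L + 1/\seqelem{r}{H}(j) > L$, contradicting optimality. The paper's version is simply more terse (it skips stating the water-filling identity explicitly and cites \lemmaref{lemma:continuity_of_the_value_function} for the strict positivity of $\seqelem{r}{H}$), but the core contradiction is identical.
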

The proof of \lemmaref{lemma:last_gamblers_bet} is provided in \appendixref{appendix:technical_C}.
Combining \eqref{eq:conclusion_from_nash_and_value_function} and Lemma \ref{lemma:last_gamblers_bet} allows us to omit the maximization over $k$, 
thereby obtaining the simpler problem of finding the unique bookmaking strategy that equates  
\begin{equation} \label{eq:oron}
  \pot{H}{K}{s} =  \left( 
        s(k) + \sum_{h=1}^{H} \frac{q_h(k)}{r_h(k)} \right), 
\end{equation}
for all $ q^{H}\in (\stdbasis{K})^{H}$, where $k$ is determined by the last bet as $q_H = \basis{k}$. 
Recall that the sum in \eqref{eq:oron} is the \emph{residual loss} vector (RLV), 
illustrated in \figureref{fig:waterlevel}, as the difference between the current state and the water level. 
Our solution approach is not to solve \eqref{eq:oron} directly, but to look at a more general problem. 
Our proposed idea is to find the set of all possible residual vectors (possible sums in \eqref{eq:oron}). 
That is, we ignore the state vector and the optimal strategy looked for, 
and aim to characterize the set of all possible residual vectors when the gambler is decisive. 
If we are able to characterize the set of all residual vectors, 
then it is straightforward to find the residual vector that solves \eqref{eq:oron} since the sum of the state and the residual vectors should be a constant vector. 
We proceed to formalize the set of possible residual vectors. 
\begin{definition}[Bellman-Pareto Frontier] \label{def:bpf}
    A strategy \(\bm^{H}\) achieves a residual vector \( v \in \Rdim{K} \)
    if for any sequence $\seq{q}{H} \in \left(\stdbasis{K}\right)^H$ it holds that
    \begin{equation}  \label{eq:def_bpf}
        \sum_{h=1}^{H}  
        \frac{\vecind{\seqelem{q}{h}}{k}}
            {\vecind{\seqelem{r}{h}}{k}} = \vecind{v}{k},
    \end{equation}
    where \( k \in [K] \) corresponds to the gambler's bet at the final betting round \( q_H = \basis{k} \).
    A vector $v$ is \emph{$H$-achievable} if there exists a strategy $\bm^{H}$ that achieves it. 
    The set of all $H$-achievable vectors, denoted by \( \vvset{H, K} \), is the \emph{Bellman-Pareto (BP) frontier} of the online bookmaking game.
\end{definition}
The BP frontier captures the fundamental trade-offs in the bookmaker's decision-making. 
Each residual vector in the frontier represents a possible configuration 
of future payouts that cannot be improved for one outcome without worsening another.

The following theorem characterizes the Bellman-Pareto frontier.
\begin{theorem}[The Bellman-Pareto Frontier] \label{theorem:characterization_of_bpf}
    The Bellman-Pareto frontier of the online bookmaking game is given by 
     \begin{equation} \label{eq:thm_characterization_of_bpf}
        \vvset{H,K} = \left\{ \univec{\optimalloss{H}{K}(s)}{K} - s \; \middle| \; 
        s \in \Rdim{K},\,
        \min_{k \in [K]} s(k) = 0 \right\},
     \end{equation}
     where
     \(
         \optimalloss{H}{K}(s) 
     \)
     is equal to the largest real root of 
    \(\biaseddenomfunction{H}{s}\) in \eqref{eq:biased_denominator_function}.
\end{theorem}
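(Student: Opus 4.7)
The plan is to prove the set equality by induction on the horizon $H$, combining \theoremref{theorem:nash} with a recursive decomposition of the strategy space based on the first round. The base case $H=1$ follows from direct inspection: a single-round strategy picks $r_1\in\simplex{}$ and forces $v=\mathbf{1}\oslash r_1$, so $\vvset{1,K}=\{v\in\Rpplus{K}:\sum_k 1/v(k)=1\}$, and a short computation (clearing denominators in $\sum_k 1/(L-s(k))=1$) identifies this set with the largest roots of $\biaseddenomfunction{1}{s}$ as $s$ ranges over states with $\min_k s(k)=0$.

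For the inductive step, the inclusion $\supseteq$ follows by running the optimal opportunistic bookmaker from state $s$. By \theoremref{theorem:nash}, the balancing identity $\pot{H-h}{K}{s+\sum_{i\le h}q_i\oslash r_i}=\pot{H}{K}{s}$ holds at every round along any decisive sequence; iterating it to $h=H$ and invoking \lemmaref{lemma:last_gamblers_bet} yields $s(k)+\sum_{h=1}^H q_h(k)/r_h(k)=\optimalloss{H}{K}(s)$ for every decisive sequence ending in $\basis{k}$, so the residual vector of this strategy is exactly $\optimalloss{H}{K}(s)\cdot\mathbf{1}-s$. For the converse $\subseteq$, I would take $v\in\vvset{H,K}$ achieved by some $\Psi^H$, set $L:=\max_k v(k)$ and $s:=L\mathbf{1}-v$ (so $\min_k s(k)=0$), and decompose $\Psi^H$ by its first odds $r_1$: each sub-strategy achieves a residual vector $v^{(j)}:=v-\basis{j}/r_1(j)\in\vvset{H-1,K}$. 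By the inductive hypothesis applied to each $v^{(j)}$, together with the shift invariance $\optimalloss{H-1}{K}(s'+c\mathbf{1})=\optimalloss{H-1}{K}(s')+c$ (a direct consequence of $\biaseddenom{H-1}{s'+c\mathbf{1}}{x}=\biaseddenom{H-1}{s'}{x-c}$ via Remark \ref{remark:all_polys_are_denom}), one obtains $\optimalloss{H-1}{K}(s+\basis{j}/r_1(j))=L$ for every $j$, which yields the upper bound $\optimalloss{H}{K}(s)\le L$.

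The hard part is the matching lower bound $\optimalloss{H}{K}(s)\ge L$. \theoremref{theorem:nash}(ii) asserts uniqueness of the optimal balancing odds $\optr$ at state $s$, but balance alone does not a priori exclude a different $r'$ that balances $\pot{H-1}{K}{s+\basis{j}/r'(j)}$ at a value strictly below $L$. To close this gap I would translate the question into algebra via Remark \ref{remark:all_polys_are_denom}: the identity $\biaseddenom{H}{s}{L}=\denom{H}{K}{L\mathbf{1}-s}=\denom{H}{K}{v}$, combined with the optimal-odds formula $r(k)=\denom{H-1}{K-1}{v\minind{k}}/\denom{H-1}{K}{v}$ from \theoremref{theorem:optimal_bookmaking_algorithm}, recasts the simplex constraint $\sum_k r(k)=1$ as a recursive polynomial identity linking $\denomfunction{H}{K}$ to its coordinate restrictions $\denomfunction{H-1}{K-1}(v\minind{k})$. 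Establishing this identity, and verifying that $L=\max_k v(k)$ is the \emph{largest} real root of $\biaseddenomfunction{H}{s}$ rather than a smaller one, is the main technical obstacle; I expect it to follow from a combinatorial expansion of $\denom{H}{K}{\cdot}$ using the coefficients $\risingfact{(-H)}{K-m}$ and the elementary symmetric polynomials of $v$, which simultaneously supplies the lower bound and pins down the correct root.
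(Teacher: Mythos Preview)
Your skeleton is sound, and the $\supseteq$ direction via \theoremref{theorem:nash} and \lemmaref{lemma:last_gamblers_bet} is exactly what the paper does. The divergence is in the $\subseteq$ direction and in the root identification.

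On the lower bound $\optimalloss{H}{K}(s)\ge L$: you over-estimate its difficulty. Once you have established that your $r_1$ balances, i.e.\ $\pot{H-1}{K}{s+\basis{j}/r_1(j)}=L$ for every $j$, a short monotonicity argument closes the gap without any polynomial algebra. If the optimal $\optr$ from \theoremref{theorem:nash} balanced at a strictly smaller level $L^\star<L$, then for each $j$ strict coordinate-wise monotonicity of $\pot{H-1}{K}$ (\lemmaref{lemma:cw_strict_m}, the same tool that yields uniqueness of $\optr$) forces $1/\optr(j)<1/r_1(j)$, hence $\optr(j)>r_1(j)$; summing over $j$ gives $1>1$. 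So for $H\ge 2$ the balance level is unique and equals $\optimalloss{H}{K}(s)$, which together with your base case finishes the set equality.

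The genuine gap is the second assertion of the theorem, that $\optimalloss{H}{K}(s)$ is the \emph{largest} real root of $\biaseddenomfunction{H}{s}$. Your plan to ``recast the simplex constraint as a recursive polynomial identity'' and then appeal to a ``combinatorial expansion'' is where the real work hides, and it does not fall out of your single induction on $H$. The paper isolates this into \lemmaref{lemma:main_lemma_in_conditions}: for every $v\in\vvset{H,K}$ one has both $\denom{H}{K}{v}=0$ and, crucially, $\denom{H}{K}{u}>0$ whenever $u\succ v$. The sign condition is what pins down the \emph{largest} root (any larger root $L'$ would make $L'\mathbf{1}-s\succ v$, hence positive, not zero). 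Its proof requires a \emph{nested} induction---outer on $H$, inner on the number $m$ of removed coordinates---propagating positivity of \emph{all} partial evaluations $\denom{H}{K-m}{v\minind{\mathfrak I}}$ through the recurrence in \lemmaref{lemma:recurrence_relation_or_the_denom}. Your inductive hypothesis on $\vvset{H-1,K}$ alone does not carry these sign conditions; to make your route work you would have to strengthen the hypothesis to include them, at which point you are essentially reproving \lemmaref{lemma:main_lemma_in_conditions} inline.
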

\theoremref{theorem:characterization_of_bpf} identifies the BP frontier through its connection to the OOBL.
The fact that $\optimalloss{H}{K}(s)$ is efficiently computable for any state $s$ allows the bookmaker to determine all achievable RLVs, and thus, all valid future payouts.
To prove \theoremref{theorem:characterization_of_bpf}, we first establish several intermediate results.

\begin{lemma}[Recurrence Relation for \( \denomfunction{ H }{K} \)] 
    \label{lemma:recurrence_relation_or_the_denom}
    For every \(H \geq 1 \) and \(k \in \nset{K}\),
    \[
        \denom{ H }{K}{v} = \vecind{v}{k} \cdot \denomfunction{ H }{K-1}( v\minind{k} ) - H \cdot \denomfunction{ H-1 }{K-1}(v\minind{k} ).
    \]
\end{lemma}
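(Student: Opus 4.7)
The plan is to prove the identity by direct algebraic manipulation of the definition $\denom{H}{K}{v} = \sum_{m=0}^{K} \risingfact{(-H)}{K-m} \esp{m}{v}$ from \eqref{eq:main_results_def_of_denom_poly}. The key combinatorial tool is the standard splitting identity for elementary symmetric polynomials,
\begin{equation*}
    \esp{m}{v} = \vecind{v}{k} \cdot \esp{m-1}{v\minind{k}} + \esp{m}{v\minind{k}},
\end{equation*}
which is obtained from \eqref{eq:def_of_esp} by partitioning the index subsets $\mathfrak{I} \in \binom{[K]}{m}$ into those that contain $k$ and those that do not, with the convention $\esp{-1}{\cdot} = 0$.

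First I would substitute this splitting identity into the definition of $\denom{H}{K}{v}$ and separate the result into two sums: one carrying the factor $\vecind{v}{k}$ and one involving only $v\minind{k}$. In the first sum, the $m=0$ contribution vanishes by $\esp{-1}{\cdot}=0$, and reindexing $n = m-1$ transforms $\risingfact{(-H)}{K-m}$ into $\risingfact{(-H)}{(K-1)-n}$. Since $v\minind{k}$ is a $(K-1)$-dimensional vector, the resulting expression matches the definition of $\denomfunction{H}{K-1}$ applied to $v\minind{k}$, producing exactly $\vecind{v}{k} \cdot \denomfunction{H}{K-1}(v\minind{k})$.

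For the remaining sum $\sum_{m=0}^{K} \risingfact{(-H)}{K-m} \esp{m}{v\minind{k}}$, the $m=K$ term is zero because $\esp{K}{v\minind{k}} = 0$ (only $K-1$ variables are available). The crucial observation is then the elementary rising-factorial identity
\begin{equation*}
    \risingfact{(-H)}{K-m} = (-H) \cdot \risingfact{-(H-1)}{(K-1)-m},
\end{equation*}
which follows by peeling the leading factor $(-H)$ off the product $\prod_{i=0}^{K-m-1}(-H+i)$. Applying this factor-by-factor collapses the truncated sum into $-H \cdot \denomfunction{H-1}{K-1}(v\minind{k})$, and combining the two pieces yields the claimed recurrence.

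I do not anticipate any substantive obstacle: the argument is routine, and the only delicate points are bookkeeping the index shift $n = m-1$ and handling the boundary conventions $\esp{-1}{\cdot} = 0$ and $\esp{K}{v\minind{k}} = 0$ so that the reindexed sums align precisely with the definition of $\denomfunction{\cdot}{K-1}$ on the $(K-1)$-dimensional vector $v\minind{k}$.
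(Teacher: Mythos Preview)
Your proposal is correct. Both your argument and the paper's rest on the same two ingredients---the ESP recurrence $\esp{m}{v} = \vecind{v}{k}\,\esp{m-1}{v\minind{k}} + \esp{m}{v\minind{k}}$ and a factorial peeling identity---but the organization differs. The paper works in the falling-factorial form $\denom{H}{K}{v} = \sum_{m=0}^{K}(-1)^m\fallingfact{H}{m}\,\esp{K-m}{v}$, starts from the right-hand side by expanding $\vecind{v}{k}\cdot\denomfunction{H}{K-1}(v\minind{k})$, and then combines the leftover piece with $H\cdot\denomfunction{H-1}{K-1}(v\minind{k})$ via the identity $H\fallingfact{(H-1)}{m}=\fallingfact{H}{m+1}$, producing a telescoping sum that collapses to a single boundary term. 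Your route is more direct: by working from the left-hand side in the rising-factorial form and peeling off the leading factor $\risingfact{(-H)}{K-m} = (-H)\cdot\risingfact{-(H-1)}{(K-1)-m}$, you avoid the telescoping step entirely and the two target pieces fall out immediately. Both arguments are equally rigorous; yours is shorter.
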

\lemmaref{lemma:recurrence_relation_or_the_denom}, proved in \appendixref{appendix:technical_C},
is established primarily by leveraging the recurrence relation for ESPs (\lemmaref{lemma:elementary_symmetric_polynomial_recurrence_relation}) and the identity $H \cdot \fallingfact{(H-1)}{m} = \fallingfact{H}{m+1}\,$ for all $H, m \in \Nfield$.

\begin{lemma}[Necessary Constraints for $H$-Achievable Vectors] \label{lemma:main_lemma_in_conditions}
If \( v \in \vvset{H, K} \) then
\begin{enumerate}[label={{\arabic*.}}, ref={\ref{lemma:main_lemma_in_conditions}.\arabic*}]
    \item \label{lemma:proof_of_bp_1}
    The first action of a bookmaker that achieves $v$ is given by \eqref{eq:optimal_odd_thm}.
    \item  \label{lemma:proof_of_bp_2}
   For every \( k \in \nset{K} \), the value $v(k)$ is determined by the remaining $K-1$ entries  \( v\minind{k} \) via
   \begin{equation} \label{eq:necessary_conditions_proof_value_of_v_k}
        \vecind{v}{k} 
        = \frac{
            H \cdot \denom{ H-1 }{K-1}{v\minind{k}}
        }{
            \denom{ H }{K-1}{v\minind{k}}
        }.
   \end{equation}
   Consequently, by \lemmaref{lemma:recurrence_relation_or_the_denom}, $\denom{H}{K}{v} = 0$.
    \item \label{lemma:proof_of_bp_3}
    For all \( u \in \Rdim{K}, \ \ u \succ v \implies \denom{H}{K}{u} > 0 \).
\end{enumerate}
\end{lemma}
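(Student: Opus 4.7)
The plan is to prove all three parts simultaneously by induction on the horizon $H$, exploiting the recursive structure of the game. Fix $v \in \vvset{H, K}$ with a strategy $\bm^H$ achieving it and first odds $r_1 \in \simplex{}$; since $v$ has finite coordinates, $r_1$ must lie in the interior of the simplex (otherwise a gambler who always bets $\basis{j}$ forces $v(j) = \infty$). For each $j \in [K]$, after a decisive bet $q_1 = \basis{j}$ the $(H-1)$-round subgame must achieve the residual $\tilde v^{(j)}$ with $\tilde v^{(j)}(k) = v(k)$ for $k \neq j$, $\tilde v^{(j)}(j) = v(j) - 1/r_1(j)$, and crucially $\tilde v^{(j)}\minind{j} = v\minind{j}$; moreover $\tilde v^{(j)} \in \vvset{H-1, K}$. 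The base case $H = 1$ is handled directly from the closed form $\denom{1}{K}{u} = \prod_k u(k)\bigl(1 - \sum_k 1/u(k)\bigr)$, from which Parts 1--3 are immediate.

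For the inductive step, Parts 1 and 2 are tightly linked. By the IH Part 2, $\denom{H-1}{K}{\tilde v^{(j)}} = 0$; applying \lemmaref{lemma:recurrence_relation_or_the_denom} at index $j$ and using $\tilde v^{(j)}\minind{j} = v\minind{j}$ yields
\[
    \tfrac{1}{r_1(j)} \;=\; v(j) - \tfrac{(H-1)\,\denom{H-2}{K-1}{v\minind{j}}}{\denom{H-1}{K-1}{v\minind{j}}} \;=\; \tfrac{\denom{H-1}{K}{v}}{\denom{H-1}{K-1}{v\minind{j}}},
\]
the second equality being \lemmaref{lemma:recurrence_relation_or_the_denom} rewritten at $v$. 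This is Part 1. Enforcing $\sum_j r_1(j) = 1$ then gives $\sum_j \denom{H-1}{K-1}{v\minind{j}} = \denom{H-1}{K}{v}$, and I will show---via the ESP symmetry $\sum_j \esp{m}{v\minind{j}} = (K-m)\esp{m}{v}$ together with the shift $\risingfact{-H}{K-m} = -H\cdot\risingfact{-(H-1)}{K-m-1}$---that this is equivalent to the \emph{unconditional identity}
\[
    \denom{H}{K}{w} \;=\; \denom{H-1}{K}{w} - \sum_{j=1}^{K} \denom{H-1}{K-1}{w\minind{j}}, \qquad w \in \Rdim{K}.
\]
Evaluating at $w = v$ gives $\denom{H}{K}{v} = 0$, which is Part 2; the explicit expression for $v(k)$ is then \lemmaref{lemma:recurrence_relation_or_the_denom} solved for $v(k)$.

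For Part 3, fix $u \succ v$ and, reusing the \emph{same} $r_1$, define $\tilde u^{(j)}$ analogously, so that $\tilde u^{(j)} - \tilde v^{(j)} = u - v$ and therefore $\tilde u^{(j)} \succ \tilde v^{(j)}$. Subtracting \lemmaref{lemma:recurrence_relation_or_the_denom} applied to $\tilde u^{(j)}$ from \lemmaref{lemma:recurrence_relation_or_the_denom} applied to $u$ gives $\denom{H-1}{K}{u} = \denom{H-1}{K}{\tilde u^{(j)}} + \denom{H-1}{K-1}{u\minind{j}}/r_1(j)$. Multiplying by $r_1(j)$, summing over $j$, and substituting into the unconditional identity above collapses everything to
\[
    \denom{H}{K}{u} \;=\; \sum_{j=1}^{K} r_1(j)\;\denom{H-1}{K}{\tilde u^{(j)}}.
\]
Since $r_1(j) > 0$ and $\denom{H-1}{K}{\tilde u^{(j)}} > 0$ by the IH Part 3 applied to $\tilde u^{(j)} \succ \tilde v^{(j)}$, the right-hand side is strictly positive.

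The principal obstacle is extracting the identity $\denom{H}{K}{u} = \sum_j r_1(j)\,\denom{H-1}{K}{\tilde u^{(j)}}$, which expresses the polynomial as a convex combination of subgame residual polynomials. Its derivation combines \lemmaref{lemma:recurrence_relation_or_the_denom} with the ESP symmetry identity, and the argument only closes because the shift $\tilde u^{(j)} - \tilde v^{(j)} = u - v$ preserves strict dominance into every subgame, allowing the inductive hypothesis to propagate.
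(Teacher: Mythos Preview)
Your induction on $H$ and the handling of Parts~1 and~2 match the paper's approach closely: both derive $r_1(j)$ from the inductive hypothesis on the subgame residuals $\tilde v^{(j)}$, then enforce the simplex constraint to force $\denom{H}{K}{v}=0$. Your packaging via the unconditional identity
\[
    \denom{H}{K}{w} \;=\; \denom{H-1}{K}{w} - \sum_{j=1}^{K}\denom{H-1}{K-1}{w\minind{j}}
\]
is a cleaner formulation of the same ESP computation the paper carries out in its appendix. One exposition gap: when you write $1/r_1(j)$ and $v(k)$ as ratios, you should note that the denominators $\denom{H-1}{K-1}{v\minind{j}}$ and $\denom{H}{K-1}{v\minind{k}}$ are nonzero. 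Both follow from the inductive hypothesis on Part~3 via your own convex-combination identity evaluated at $u=v+\varepsilon\basis{k}$ (giving $\denom{H}{K}{v+\varepsilon\basis{k}}=\varepsilon\,\denom{H}{K-1}{v\minind{k}}>0$), so nothing new is required, but it deserves a sentence.

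Where you genuinely diverge from the paper is Part~3. The paper does \emph{not} prove Part~3 directly; instead it strengthens the inductive hypothesis to the joint positivity condition $\denom{H}{K-m}{v\minind{\mathfrak I}}>0$ for \emph{every} subset $\mathfrak I\subset[K]$, proved by an inner induction on $m$ at each level $H$, and then deduces Part~3 from the partial-derivative structure of $\denomfunction{H}{K}$ (their Lemma~31). Your route is more economical: the identity $\denom{H}{K}{u}=\sum_j r_1(j)\,\denom{H-1}{K}{\tilde u^{(j)}}$ recycles Part~3 at level $H-1$ immediately, with no inner induction and no auxiliary gradient lemma. The trade-off is that the paper's stronger positivity condition is reused elsewhere (e.g., in verifying that the optimal odds form a valid probability vector), whereas your argument yields exactly the stated lemma and no more. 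Both are correct; yours is tighter for this lemma in isolation, the paper's yields a reusable intermediate result.
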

\lemmaref{lemma:main_lemma_in_conditions} establishes structural constraints on any vector $v$ that lies on the Bellman-Pareto frontier.  
It highlights the built-in trade-off among the coordinates: reducing one component necessarily increases another.
The proof of \lemmaref{lemma:main_lemma_in_conditions} is deferred to \appendixref{appendix:value_vectors}.

\begin{proof}[Proof of \theoremref{theorem:characterization_of_bpf}]
We first prove that $u$ is an $H$-achievable vector if and only if there exists a state $s$ such that $u$ is the RLV for $s$.
\begin{itemize}
    \item[$\Leftarrow$]
    Let $u$ be the RLV for $s$, that is, $s+u = \univec{\optimalloss{H}{K}(s)}{K}$.
    By Theorem \ref{theorem:nash}, there exists a bookmaking strategy $\hat{\Psi}^H$ whose loss is $\optimalloss{H}{K}(s)$;
    combined with \lemmaref{lemma:last_gamblers_bet}, this implies that for any betting sequence $q^H=(q_1,\dots,q_{H-1},\basis{k}) \in (\stdbasis{K})^H$, 
    $$
        \optimalloss{H}{K}(s) - s(k) = \sum_{h=1}^H \frac{q_h(k)}{r_h(k)} = u(k).
    $$
    \item[$\Rightarrow$]
    Suppose that $u$ is an $H$-achievable vector.
    We prove that $u$ is the RLV for the state $\hat{s} = -u$.
    Let $x=\pot{H}{K}{\hat{s}}$ denote the optimal loss given the state $\hat{s}$.
    By \theoremref{theorem:nash}, an optimal bookmaking strategy exists whose loss equals $x$. 
    As shown in \eqref{eq:oron} and the subsequent discussion, 
    this strategy achieves an RLV of the form  
    $  {v} = \univec{ x }{K} - \hat{s}$ for some $x$. We proceed to show that $x=0$, and thus $v=u$ is an RLV. 
    The vector $u$ satisfies 
    \(
        u = \univec{0}{K} -\hat{s},
    \)
    i.e., it follows the structure of ${v}$ with $x=0$. 
    Since $u$ is $H$-achievable, by \lemmaref{lemma:proof_of_bp_2}, $\denom{H}{K}{u} = 0$.
    Also, by \lemmaref{lemma:proof_of_bp_3},
    for any $x < 0$, the vector $\univec{x}{K}- \hat{s} \prec u$ is not $H$-achievable.
    As a result, the RLV for $\hat{s}$ is $u$. 
\end{itemize}
We conclude that
\begin{align*}
    \vvset{H,K} 
    \; &= \;
    \left\{ \univec{\pot{H}{K}{s}}{K} - s  \; \middle| \; s \in \Rdim{K}  \right\}
    \\
    \; &= \; 
    \left\{ \univec{\pot{H}{K}{s - \univec{\min_{k\in[K]}s(k)}{K}}}{K} - \left( s - \univec{\min_{k\in[K]}s(k)}{K} \right) 
    \; \middle| \; s \in \Rdim{K}  \right\},
\end{align*}
where the second equality follows from the uniform translation property of the value function, 
stated in \lemmaref{lemma:uniform_translation_property}.
Substituting $\pot{H}{K}{s} = \optimalloss{H}{K}(s)$ yields the characterization in \eqref{eq:thm_characterization_of_bpf}.

We next show that $\optimalloss{H}{K}(s)$ equals the largest real root of 
$\biaseddenom{H}{s}{x} \coloneqq \denom{H}{K}{\univec{x}{K} - s}$,
given explicitly in \eqref{eq:biased_denominator_function}, following \remarkref{remark:all_polys_are_denom}.
Let $v = \univec{\optimalloss{H}{K}(s)}{K}-s$ be the RLV for a state $s \in \Rdim{K}$.
Given that $v\in \vvset{H,K}$, \lemmaref{lemma:proof_of_bp_2} implies
$\denomfunction{ H }{K}(\univec{ \optimalloss{H}{K}(s) }{K} - s) = 0$.
Let \( \rho = \left\{ \rho_1 , \ldots , \rho_Z \right\} \) denote the real roots of \( \biaseddenomfunction{H}{s} \), ordered increasingly. 
Since \( \optimalloss{H}{K}(s) \in \rho \), this set %non-empty set of roots 
contains a unique \( z \in [Z] \) such that \( \optimalloss{H}{K}(s) = \rho_z \). 
If $z < Z$, it holds that $\univec{\rho_z}{K} - s \prec \univec{\rho_Z}{K} - s$, 
which, by \lemmaref{lemma:proof_of_bp_3}, 
implies that $\univec{\rho_z}{K} - s$ is not $H$-achievable. 
Thus, $\optimalloss{H}{K}(s)$ must equal $\rho_Z$.
\end{proof}

\subsection{Proofs of Main Results} \label{sec:proofs_of_main_thms}
This section includes the proofs for \theoremref{theorem:optimal_loss,theorem:optimal_bookmaking_algorithm}. 
The proof of \theoremref{theorem:regret_factor} is proved in \appendixref{appendix:proof_of_theorem_B}.

\begin{proof}[Proof of \theoremref{theorem:optimal_bookmaking_algorithm}]
\theoremref{theorem:characterization_of_bpf} shows that $\optimalloss{H}{K}(s)$ equals the largest real root of \(\biaseddenomfunction{H}{s}\),
for any $H,K \in \Nplus$ and $s \in \Rdim{K}$.
Once $\optimalloss{H}{K}(s)$ is computed, 
the RLV $v = \univec{\optimalloss{H}{K}(s)}{K} - s$ determines the optimal bookmaker's first action $r$ via \eqref{eq:optimal_odd_thm},
as established in \lemmaref{lemma:proof_of_bp_1}.

The correctness of \algorithmref{algo:optimal_bookmaking} for a game with $T$ rounds then follows.
Before the first betting round, 
the state is $s = \zerovec{K}$, 
and the loss is initialized at $L = \optimalloss{T}{K}$.
Since $s$ is a constant vector, so is the RLV $v$;
by the symmetry in \eqref{eq:optimal_odd_thm}, $r_1 = \univec{\frac{1}{K}}{K}$ is the optimal action.
Let $t \in [2, T]$ be a round in the game, with state $s$ and horizon $T - t + 1$.
In Line~\ref{line:update_the_state_vector}, 
the state is updated based on the gambler's previous action $q_{t-1}$.
In Lines~\ref{line:iterations_for_optimal_odds_update}-\ref{line:output_odds}, 
the odds \( r_t \) are computed from the vector \( v \) according to \eqref{eq:optimal_odd_thm}.
It remains to show that Line~\ref{line:opt_res_loss_vector_update} correctly computes the RLV $v$. 
Suppose that upon entering round~$t$, it holds that $L$ equals the optimal loss prior to observing $q_{t-1}$,
and that the previous odds $r_{t-1}$ were computed optimally. 
This holds for $t = 2$ by initialization and is preserved inductively.  
If $q_{t-1} \in \stdbasis{K}$, i.e., the gambler is decisive,  
then by \theoremref{theorem:nash} the optimal loss cannot decrease. 
As $r_{t-1}$ is chosen to achieve the RLV from the previous round,
the updated $v$ remains consistent with the definition of the RLV.
If $q_{t-1} \not\in \stdbasis{K}$, then $L$ is recomputed using the updated state $s$ and horizon $T-t+1$
(Line~\ref{line:optimal_odds_update}), and Line~\ref{line:opt_res_loss_vector_update} produces the RLV.
\end{proof}

\begin{proof}[Proof of \theoremref{theorem:optimal_loss}]
    The optimal bookmaking loss is a special case of the optimal opportunistic bookmaking loss. Thus, we can utilize \theoremref{theorem:optimal_bookmaking_algorithm} to conclude that $\optimalloss{T}{K}$ is the 
    largest root of the polynomial $\biaseddenom{T}{\zerovec{K}}{x}$ as follows
    \[
    \begin{aligned}
        \biaseddenom{T}{\zerovec{K}}{x} \;
        &\stackrel{(a)}{=} \; \denom{ T }{K}{\univec{ x }{K}} 
        \\
        &\stackrel{(b)}{=} \;
        \sum_{m=0}^{K} \binom{K}{m} \risingfact{(-T)}{K-m} \, x^{m},
    \end{aligned}
    \]
    where $(a)$ follows from $\biaseddenom{H}{s}{x} \coloneqq \denom{H}{K}{\univec{x}{K} - s}$ in \eqref{eq:remark_def_of_biaseddenom} and $(b)$ follows from $\denom{H}{K}{v} \eqdef \sum_{m=0}^{K} \risingfact{(-H)}{K-m}  \esp{m}{v}$ in \eqref{eq:main_results_def_of_denom_poly}, 
    together with \definitionref{def:esp}, noting that 
    $\esp{m}{\univec{x}{K}} = \binom{K}{m} x^m$. This concludes the proof that $\biaseddenom{T}{\zerovec{K}}{x} = \ppoly{T}{K}{x}$.
\end{proof}

\section{Conclusions} \label{sec:conclusion}

We considered the online bookmaking problem, 
and derived the optimal bookmaking loss for any number of possible outcomes \( K \) and number of betting rounds \( T \).
Our solution, expressed as the largest root of a polynomial, 
provides a tractable and exact characterization of the optimal loss and strategy,
a form of exact characterization that is uncommon in repeated vector games.
This solution enabled us to show that the regret scales as \( \sqrt{T} \) 
and, for any $K$, its scaling factor converges to the largest root of the $K$-th Hermite polynomial. 
We introduced the concept of opportunistic strategies
and characterized the optimal opportunistic bookmaking loss, 
as well as an efficient algorithm that achieves it. 
This can be viewed as a water-filling solution in which the water level decreases whenever the stronger player (in our case, the gambler) fails to play optimally. 
The key to our analysis was the characterization of the Bellman-Pareto frontier, 
which may have broader applications to vector-valued problems in online learning and game theory.
Future work may consider extending our approach to alternative loss functions or constrained settings, 
building on the structure offered by the Bellman-Pareto framework.
\acks{
    This work was supported by the Israel Science Foundation (ISF), grant No. 1096/23.
}

\bibliography{colt_bib.bib}

\appendix
\numberwithin{equation}{section}

\paragraph{Appendix Structure}  
\appendixref{appendix:preliminaries} presents preliminaries for the analysis.  
\appendixref{appendix:the_adversarial_game_dynamics} establishes the proof of \theoremref{theorem:nash}.  
\appendixref{appendix:value_vectors} provides the proof of \lemmaref{lemma:main_lemma_in_conditions}.  
\appendixref{appendix:main_results} contains the omitted proofs from \sectionref{sec:main_results}.  
\appendixref{appendix:technical_proofs} provides technical proofs that support the overall analysis.

\section{Preliminaries}
\label{appendix:preliminaries}

This appendix presents preliminary results used in the main analysis.
\appendixref{appendix:additional_notation} defines notation used in the appendix,
\appendixref{appendix:correspondences} reviews formal definitions and properties of correspondences.
\appendixref{appendix:falling_factorials} provides key identities involving falling and rising factorials 
and Stirling numbers.
\appendixref{appendix:elementary_symmetric_polynomials} provides several properties of elementary symmetric polynomials.

\subsection{Additional Notation} \label{appendix:additional_notation}

We denote $\Rpplus{} \eqdef \{x \in \Rfield \mid x > 0 \}$.
For vectors $x, y \in \Rdim{K}$, the notation $x \odot y$ represents element-wise multiplication.
For a set \(\mathfrak{I} \subseteq \nset{K}\), we write \(x\minind{\mathfrak{I}}\) for \(x\) excluding indices in \(\mathfrak{I}\). 
The interior of the simplex in \(\Rdim{K}\) is denoted by \(\intsimplex{K-1}\).

\subsection{Correspondences}
\label{appendix:correspondences}

\begin{definition}[Correspondence]
Let $\Theta$ and $X$ be topological spaces. 
A \emph{correspondence} from $\Theta$ to $X$, 
denoted $\corr: \Theta \rightrightarrows X$, 
is a mapping that assigns to each $\theta \in \Theta$ a subset of $X$, that is,
\(
   \corr(\theta) \subseteq X,  \; \text{for every} \, \theta \in \Theta.
\)
\end{definition}

\begin{definition}[Compact-valued Correspondence]
    Let $X$ and $\Theta$ be topological spaces. 
    A correspondence $\corr: \Theta \rightrightarrows X$ is \emph{compact-valued} 
    if for every $\theta \in \Theta$, the set $\corr(\theta)$ is compact in $X$.
\end{definition}

\begin{definition}[Continuity of a Correspondence] \label{def:continuity_of_a_correspondence}
Let $X$ and $\Theta$ be topological spaces, and let $\corr: \Theta \rightrightarrows X$ be a correspondence.
\begin{itemize}
    \item $\corr$ is \textbf{\emph{upper hemicontinuous}} at $\theta \in \Theta$ if for every open set $V$ with $\corr(\theta) \subset V$, 
    there exists a neighborhood $U$ of $\theta$ such that  
    \[
    \corr(\theta') \subset V, \quad \forall \theta' \in U.
    \]
    \item $\corr$ is \textbf{\emph{lower hemicontinuous}} at $\theta \in \Theta$ if for every open set $V$ with $V \cap \corr(\theta) \neq \emptyset$, there exists a neighborhood $U$ of $\theta$ such that  
    \[
    \corr(\theta') \cap V \neq \emptyset, \quad \forall \theta' \in U.
    \]
    \item $\corr$ is \textbf{\emph{continuous}} if it is both upper and lower hemicontinuous at every $\theta \in \Theta$.
\end{itemize}
\end{definition}

\begin{lemma}[Berge's Maximum Theorem] \label{lemma:berges_maximum_theorem}
Let $X$ and $\Theta$ be topological spaces, and let $f: X \times \Theta \to \mathbb{R}$ be a continuous function. 
Suppose that $\corr: \Theta \rightrightarrows X$ is a compact-valued correspondence and that $\corr(\theta) \neq \emptyset$ for all $\theta \in \Theta$. Define the function
\[
    f^\star(\theta) = \sup_{x \in \corr(\theta)} f(x, \theta)
\]
and the solution correspondence
\[
    \corr^\star(\theta) = \arg\max_{x \in \corr(\theta)} f(x, \theta).
\]
If $\corr$ is continuous, then the function $f^\star: \Theta \to \mathbb{R}$ is continuous and
the solution correspondence $\corr^\star: \Theta \rightrightarrows X$ is compact-valued and upper hemicontinuous.
\end{lemma}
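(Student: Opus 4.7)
The plan is to prove the three conclusions separately: (i) continuity of the optimal value $f^\star$, (ii) non-emptiness and compactness of $\corr^\star(\theta)$ for every $\theta\in\Theta$, and (iii) upper hemicontinuity of $\corr^\star$. Part (ii) is immediate: for fixed $\theta$, the map $x\mapsto f(x,\theta)$ is continuous on the non-empty compact set $\corr(\theta)$, so by the extreme value theorem a maximizer exists, and $\corr^\star(\theta)$ is closed as the preimage of the singleton $\{f^\star(\theta)\}$ under this continuous map, hence compact as a closed subset of $\corr(\theta)$.

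For (i), I would establish upper and lower semicontinuity of $f^\star$ independently. For the upper direction at $\theta_0$, fix $\epsilon>0$ and note that the set $W=\{(x,\theta) : f(x,\theta)<f^\star(\theta_0)+\epsilon\}$ is open by continuity of $f$ and contains the compact slice $\corr(\theta_0)\times\{\theta_0\}$; a tube-lemma argument on this compact slice, combined with upper hemicontinuity of $\corr$, yields a neighborhood $U$ of $\theta_0$ on which every $\corr(\theta)$ sits inside the $\theta$-slice of $W$, so $f^\star(\theta)<f^\star(\theta_0)+\epsilon$ on $U$. For the lower direction, I would pick $x_0\in\corr(\theta_0)$ achieving $f(x_0,\theta_0)=f^\star(\theta_0)$, choose a product neighborhood $V\times U_1$ of $(x_0,\theta_0)$ on which $|f-f(x_0,\theta_0)|<\epsilon$, and use lower hemicontinuity of $\corr$ to find $U_2$ with $\corr(\theta)\cap V\neq\emptyset$ for every $\theta\in U_2$; any such witness point gives $f^\star(\theta)>f^\star(\theta_0)-\epsilon$ throughout $U_1\cap U_2$.

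For (iii), I would argue by contradiction using nets. Suppose $V\supset\corr^\star(\theta_0)$ is open yet no neighborhood of $\theta_0$ is carried into $V$ by $\corr^\star$; this produces a net $\theta_\alpha\to\theta_0$ together with points $x_\alpha\in\corr^\star(\theta_\alpha)\setminus V$. Upper hemicontinuity together with compact values of $\corr$ allows extraction of a convergent subnet $x_\alpha\to x_0$ with $x_0\in\corr(\theta_0)$, via the standard closed-graph characterization of such correspondences (relative to a compact neighborhood of $\corr(\theta_0)$ obtained from upper hemicontinuity). Passing to the limit in the equality $f(x_\alpha,\theta_\alpha)=f^\star(\theta_\alpha)$, the left-hand side tends to $f(x_0,\theta_0)$ by continuity of $f$, while the right-hand side tends to $f^\star(\theta_0)$ by the continuity of $f^\star$ already shown. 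Hence $x_0\in\corr^\star(\theta_0)\subset V$, contradicting the fact that $x_\alpha\notin V$ and $V$ is open.

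The main obstacle is not any single calculation but the careful topological bookkeeping: both the upper-semicontinuity half of (i) and the subnet extraction in (iii) lean on a tube/compactness lemma for upper hemicontinuous compact-valued correspondences, and in the stated generality (no metrizability assumed on $X$ or $\Theta$) these arguments must be phrased via nets and the tube lemma rather than ordinary sequences. Once that machinery is in place, the remaining work reduces to routine $\epsilon$-manipulations combining the two half-semicontinuity arguments with the extreme value theorem.
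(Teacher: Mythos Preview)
The paper does not prove this lemma at all; it states Berge's Maximum Theorem as a standard result and simply appends ``For more information see \cite{beavis1990optimization}.'' So there is no in-paper proof to compare against.

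That said, your outline is the standard textbook argument and is correct in this generality. The only caveat worth flagging is in part (iii): the ``closed-graph characterization'' you invoke to extract a convergent subnet requires that the range space be locally compact (or at least that you can confine the net to a fixed compact set), and you correctly note this is obtained from upper hemicontinuity of $\corr$ plus compact values---specifically, upper hemicontinuity gives a neighborhood $U$ of $\theta_0$ with $\corr(\theta)\subset \corr(\theta_0)\cup W$ for some open $W$, but to get a compact ambient set you actually use $\corr(U)\subset \overline{\corr(U)}$ and need this closure to be compact. In full generality (arbitrary topological $X$) this step needs slightly more care: one typically argues directly that the graph of $\corr^\star$ is closed in $\Theta\times X$ and then uses that a closed-graph correspondence dominated by an upper hemicontinuous compact-valued one is itself upper hemicontinuous. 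Your sketch essentially does this, but be sure the compactness you need is justified rather than assumed.
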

For more information see \cite{beavis1990optimization}.

\subsection{Falling and Rising Factorials and Stirling Numbers}
\label{appendix:falling_factorials}

For clarity, we restate the factorial operators given in \sectionref{sec:preliminaries}.
\begin{definition}[Falling and Rising Factorials] \label{def:falling_factorial}
For $x \in \Rfield$ and $m \in \Nfield$ ,
the \emph{falling factorial} is defined as
\begin{equation*} 
    \fallingfact{x}{m}
    \eqdef \prod_{i=0}^{m-1} (x - i)
    = x (x - 1) \cdots (x - m + 1),
\end{equation*}
and the \emph{rising factorial} is defined as
\begin{equation*}
    \risingfact{x}{m}
    \eqdef \prod_{i=0}^{m-1} (x + i)
    = x (x + 1) \cdots (x + m - 1).
\end{equation*}
In both cases, the empty product is equal to one so that 
$\fallingfact{x}{0}=\risingfact{x}{0}=1$.
\end{definition}

\begin{lemma}
    \label{lemma:falling_factorial_identities}
    For all $T, m \in \Nfield$,
    \begin{enumerate}[label={\arabic*.}, ref=\ref{lemma:falling_factorial_identities}.\arabic*]
        \item
        \label{lemma:falling_factorial_identities_1}
        $\fallingfact{T}{m+1} = T \cdot \fallingfact{(T-1)}{m}$
        
        \item
        \label{lemma:falling_factorial_identities_2}
        $\fallingfact{T}{m} + m \cdot \fallingfact{T}{m-1} = \fallingfact{(T+1)}{m}$

        \item
        \label{lemma:falling_factorial_identities_3}
        \(\fallingfact{T}{m} = (-1)^m \risingfact{(-T)}{m}\)
    \end{enumerate}
\end{lemma}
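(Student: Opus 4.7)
The three identities are standard consequences of the product definitions of the falling and rising factorials, so the plan is purely algebraic manipulation with no real obstacle. I would handle them in the order stated, since item 2 benefits from item 1 and item 3 stands on its own.

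For item 1, I would simply peel off the leading factor in the defining product:
\[
\fallingfact{T}{m+1} = \prod_{i=0}^{m}(T-i) = T \cdot \prod_{i=1}^{m}(T-i) = T \cdot \prod_{j=0}^{m-1}\bigl((T-1)-j\bigr) = T \cdot \fallingfact{(T-1)}{m},
\]
after re-indexing $j=i-1$. No case split is needed; the $m=0$ case gives the empty product convention $\fallingfact{(T-1)}{0}=1$, consistent with $\fallingfact{T}{1}=T$.

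For item 2, I would factor out the common prefix $\fallingfact{T}{m-1}=T(T-1)\cdots(T-m+2)$ from both terms on the left-hand side. Since $\fallingfact{T}{m} = \fallingfact{T}{m-1}\cdot(T-m+1)$, we obtain
\[
\fallingfact{T}{m} + m\,\fallingfact{T}{m-1} = \fallingfact{T}{m-1}\bigl[(T-m+1)+m\bigr] = (T+1)\,\fallingfact{T}{m-1}.
\]
The right-hand side is then recognized as $\fallingfact{(T+1)}{m}$ by applying item 1 with $T$ replaced by $T+1$ and $m$ replaced by $m-1$. The degenerate case $m=0$ must be noted separately, where both sides reduce to $1$.

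For item 3, I would match up factors term by term. Starting from the rising factorial definition,
\[
\risingfact{(-T)}{m} = \prod_{i=0}^{m-1}(-T+i) = \prod_{i=0}^{m-1}(-1)(T-i) = (-1)^m \prod_{i=0}^{m-1}(T-i) = (-1)^m \fallingfact{T}{m},
\]
which rearranges to the claim. The only subtlety anywhere in the lemma is the empty-product bookkeeping at $m=0$, which I would mention once at the start of the proof so it does not have to be repeated. Nothing here is deep; the lemma is recorded purely for reference by later parts of the paper.
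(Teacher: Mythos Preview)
Your proof is correct. The paper itself does not give a proof of this lemma; it is stated as a standard fact with a reference to \citet{graham1994concrete}, so your direct algebraic verification from the product definitions is exactly what is needed and more explicit than what the paper provides.
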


\begin{definition}[The Stirling Numbers of the First Kind]
    \label{def:stirling_numbers_of_the_first_kind}
    The \emph{Stirling numbers of the first kind} $ \unstirling{n}{k}$
    count the number of permutations of \( n \) elements consisting of exactly \( k \) disjoint cycles, where $1 \leq k \leq n$.
    They satisfy the recurrence relation:
    \[
        \unstirling{n+1}{k} = n \unstirling{n}{k} + \unstirling{n}{k-1}, \quad \text{for}\ \ n \geq 1,\, k \geq 1,
    \]
    with base cases:
    \[
        \unstirling{0}{0} = 1, \qquad \unstirling{n}{0} = 0 \ \ \text{for}\ \ n > 0, \quad \unstirling{0}{k} = 0 \ \ \text{for}\ \ k > 0. 
    \]
    The \emph{signed Stirling numbers of the first kind} $\stirling{n}{k}$ are defined by:
    \[
        \stirling{n}{k} = (-1)^{n-k} \unstirling{n}{k}.
    \]
\end{definition}

\begin{lemma}[Expansion of Falling Factorials]
    \label{lemma:falling_factorial_expansion}
    The rising and falling factorial can be expressed in terms of the Stirling numbers of the first kind as:
    \begin{equation*}
        \risingfact{x}{n} = \sum_{k=0}^{n} \unstirling{n}{k} x^k, 
        \quad \quad
        \fallingfact{x}{n} = \sum_{k=0}^{n} \stirling{n}{k} x^k.
    \end{equation*}    
\end{lemma}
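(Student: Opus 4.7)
The plan is to prove the rising factorial identity directly by induction on $n$, exploiting the Stirling recurrence built into \definitionref{def:stirling_numbers_of_the_first_kind}, and then obtain the falling factorial identity as an immediate corollary via the sign-flip relation $\fallingfact{x}{n} = (-1)^n \risingfact{(-x)}{n}$ from \lemmaref{lemma:falling_factorial_identities_3}.

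For the rising factorial, the base case $n=0$ is immediate since $\risingfact{x}{0}=1$ and the sum reduces to $\unstirling{0}{0}=1$ by the stated initial conditions. For the inductive step, assuming $\risingfact{x}{n} = \sum_{k=0}^{n} \unstirling{n}{k}x^k$, I would multiply by $(x+n)$, which is the next factor appended to the rising factorial, and split the product into two sums:
\begin{equation*}
    \risingfact{x}{n+1} \;=\; \risingfact{x}{n}\cdot(x+n) \;=\; \sum_{k=0}^{n}\unstirling{n}{k}x^{k+1} \;+\; n\sum_{k=0}^{n}\unstirling{n}{k}x^{k}.
\end{equation*}
After shifting the index in the first sum from $k$ to $k-1$, matching coefficients of $x^k$ gives $\unstirling{n}{k-1} + n\,\unstirling{n}{k}$, which equals $\unstirling{n+1}{k}$ by the recurrence in \definitionref{def:stirling_numbers_of_the_first_kind}. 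Combined with the boundary values (the boundary terms at $k=0$ and $k=n+1$ are handled by the zero boundary conditions), this yields the claim for $n+1$.

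For the falling factorial, I would substitute $x\mapsto -x$ into the proven rising factorial identity and multiply by $(-1)^n$:
\begin{equation*}
    \fallingfact{x}{n} \;=\; (-1)^n\risingfact{(-x)}{n} \;=\; (-1)^n\sum_{k=0}^{n}\unstirling{n}{k}(-x)^k \;=\; \sum_{k=0}^{n}(-1)^{n-k}\unstirling{n}{k}\,x^{k},
\end{equation*}
which equals $\sum_{k=0}^{n}\stirling{n}{k}x^k$ by the definition $\stirling{n}{k} = (-1)^{n-k}\unstirling{n}{k}$.

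There is no real obstacle here; the only subtlety is bookkeeping around the boundary terms of the Stirling recurrence at $k=0$ and $k=n+1$, both of which are handled cleanly by the stated zero boundary conditions. This is a classical identity, and the induction is the standard argument.
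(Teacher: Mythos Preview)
Your proof is correct and is the standard inductive argument. The paper does not actually prove this lemma; it simply states it and cites \citet{graham1994concrete} as a reference, treating the identity as classical background material. Your induction on $n$ via the Stirling recurrence, followed by the sign-flip reduction using \lemmaref{lemma:falling_factorial_identities_3}, is precisely the textbook proof one would find in that reference, so there is nothing to contrast.
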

\citet{graham1994concrete}  provides a thorough overview of rising and falling factorials, 
along with Stirling numbers and their combinatorial properties.

\subsection{Properties of Elementary Symmetric Polynomials}
\label{appendix:elementary_symmetric_polynomials}

\begin{remark}[Generating Function for ESPs]
    \label{remark:esps_gen_func}
    The ESPs of a vector \( x \in \mathbb{R}^K \)
    can be expressed through their generating function by
    \[
        \sum_{m=0}^{K} \sigma_m(x) t^m = \prod_{k=1}^{K} (1 + x(k) \cdot t) 
    \]
\end{remark}

\begin{lemma}[Recurrence Relation for ESPs] 
    \label{lemma:elementary_symmetric_polynomial_recurrence_relation}
    Let $m, K \in \Nplus$ and $x \in \Rdim{K}$.
    It holds that
    \begin{equation*}
    \esp{m}{x} = x(k) \cdot \esp{m-1}{ x\minind{k} } + \esp{m}{ x\minind{k} },
    \end{equation*} 
    for every  $k \in [K]$.
\end{lemma}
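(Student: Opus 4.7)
The plan is to prove this classical identity via a direct combinatorial partition of the indexing set in the definition of the ESP, using the fact that subsets of $[K]$ of cardinality $m$ either contain the distinguished index $k$ or do not.

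Fix $k \in [K]$ and $m \in [K]$. Starting from the definition in \eqref{eq:def_of_esp}, I would split
\[
    \binom{[K]}{m} \;=\; \left\{ \mathfrak{I} \in \binom{[K]}{m} : k \in \mathfrak{I}\right\} \;\sqcup\; \left\{\mathfrak{I} \in \binom{[K]}{m} : k \notin \mathfrak{I}\right\},
\]
and rewrite $\sigma_m(x)$ as a sum of two contributions corresponding to these two families. For subsets containing $k$, I would use the bijection $\mathfrak{I} \mapsto \mathfrak{I}\setminus\{k\}$ between $\{\mathfrak{I} : k\in \mathfrak{I}, \lvert \mathfrak{I}\rvert = m\}$ and $\binom{[K]\setminus\{k\}}{m-1}$, which factors out $x(k)$ and produces $x(k)\cdot \sigma_{m-1}(x\minind{k})$. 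For subsets not containing $k$, the sum is simply $\sigma_m(x\minind{k})$ by definition, since such subsets are exactly the $m$-subsets of $[K]\setminus\{k\}$.

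Adding the two contributions gives the claimed identity. There is essentially no obstacle: the only edge cases to mention are $m=1$ (where $\sigma_0(x\minind{k})=1$, so the identity reads $\sigma_1(x)=x(k)+\sigma_1(x\minind{k})$, which is immediate) and $m=K$ (where $\sigma_m(x\minind{k})=0$ since there are no $K$-subsets of a $(K-1)$-set, and the identity reduces to the standard factoring of $\prod_j x(j)$). As an optional alternative, one could instead compare coefficients of $t^m$ in the factorization
\[
    \sum_{m=0}^{K} \sigma_m(x)\, t^m \;=\; \prod_{j=1}^{K} (1 + x(j)\, t) \;=\; (1 + x(k)\, t) \sum_{m=0}^{K-1} \sigma_m(x\minind{k})\, t^m,
\]
as suggested by \remarkref{remark:esps_gen_func}, which yields the same recurrence with no further calculation.
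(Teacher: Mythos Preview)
Your proof is correct. The paper does not actually supply its own argument for this lemma; it simply cites \cite{macdonald1998symmetric} and treats the recurrence as a known identity. Your combinatorial partition of $\binom{[K]}{m}$ into subsets containing or not containing $k$ is the standard direct proof, and the generating-function alternative you mention is exactly the kind of argument the paper's \remarkref{remark:esps_gen_func} would support, so both routes are consistent with the paper's presentation.
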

See \cite{macdonald1998symmetric} for more information about recurrence relations in symmetric polynomials.

\begin{lemma}[Summation of reduced ESPs] 
    \label{lemma:elementary_symmetric_polynomial_sum_identity}
    For all $K \in \Nplus$ and $x \in \Rdim{K}$,
    \begin{equation*}
        \sum_{i=1}^{K} \esp{m}{ x\minind{i} } = (K - m) \cdot \esp{m}{x}
        , \ \  0\le m\le K.
    \end{equation*}
\end{lemma}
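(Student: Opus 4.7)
The plan is to prove the identity by a direct double-counting argument on the defining sum of the elementary symmetric polynomial. The identity is essentially combinatorial: each subset of $[K]$ of size $m$ contributes to $\esp{m}{x\minind{i}}$ precisely for those indices $i$ that lie outside the subset, and there are exactly $K-m$ such indices.

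Concretely, I would start from the definition in \eqref{eq:def_of_esp} and write
\[
\sum_{i=1}^{K} \esp{m}{x\minind{i}}
= \sum_{i=1}^{K} \; \sum_{\mathfrak{I} \in \binom{[K]\setminus\{i\}}{m}} \; \prod_{k \in \mathfrak{I}} x(k).
\]
Then I would swap the order of summation so that the outer sum ranges over all $\mathfrak{I} \in \binom{[K]}{m}$, and the inner sum ranges over those $i \in [K]$ with $i \notin \mathfrak{I}$. Since $|[K] \setminus \mathfrak{I}| = K - m$ for every such $\mathfrak{I}$, each term $\prod_{k \in \mathfrak{I}} x(k)$ is counted exactly $K-m$ times, yielding $(K-m)\,\esp{m}{x}$. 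The edge cases $m = 0$ (both sides equal $K$) and $m = K$ (both sides equal $0$, since removing any coordinate leaves only $K-1$ coordinates so no subset of size $K$ exists) are handled automatically by the same counting.

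As an alternative, one could iterate the recurrence of \lemmaref{lemma:elementary_symmetric_polynomial_recurrence_relation}, namely $\esp{m}{x\minind{k}} = \esp{m}{x} - x(k)\,\esp{m-1}{x\minind{k}}$, sum over $k$, and then reduce the remaining sum $\sum_k x(k)\,\esp{m-1}{x\minind{k}} = m\,\esp{m}{x}$ by the same counting principle (each size-$m$ subset is obtained by choosing its distinguished element in $m$ ways). This gives $\sum_k \esp{m}{x\minind{k}} = K\,\esp{m}{x} - m\,\esp{m}{x} = (K-m)\,\esp{m}{x}$.

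I do not anticipate a genuine obstacle: the identity is a one-line counting statement, and the only care needed is to verify the range of summation after exchanging the order and to confirm the boundary values $m=0$ and $m=K$. I would go with the direct double-counting approach since it is the cleanest and avoids invoking any auxiliary lemma.
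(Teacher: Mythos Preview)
Your proposal is correct and matches the paper's proof essentially line for line: the paper also writes out the double sum from the definition of $\esp{m}{\cdot}$ and observes that each subset $\mathfrak{J} \in \binom{[K]}{m}$ appears for exactly the $K-m$ choices of $i \notin \mathfrak{J}$. The paper treats $m=0$ separately (trivially $1$ on each side), while you note both endpoints explicitly, but otherwise the arguments are identical.
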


\begin{lemma}[Shifted ESP Expansion]
    \label{lemma:elementary_symmetric_polynomial_with_shift}
    Let \( x \in \mathbb{R}^K \), \( t \in \mathbb{R} \), and \( n \in \Nfield \). 
    Then, 
    \begin{equation*}
        \sigma_n(\univec{t}{K} - x) = \sum_{i=0}^{n} (-1)^i \sigma_i(x) \binom{K - i}{n - i} t^{n - i}.
    \end{equation*}
\end{lemma}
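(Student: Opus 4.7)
The plan is to prove the identity by direct combinatorial expansion of the product defining $\sigma_n(\univec{t}{K} - x)$, then swap the order of summation and count the resulting subsets. This is a routine symmetric-function manipulation; I do not expect any structural difficulty beyond careful bookkeeping of indices.

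First I would write
\begin{equation*}
\sigma_n(\univec{t}{K} - x)
= \sum_{\mathfrak{I} \in \binom{[K]}{n}} \, \prod_{k \in \mathfrak{I}} (t - x(k))
\end{equation*}
directly from \definitionref{def:esp}. Next, for each fixed subset $\mathfrak{I}$ of size $n$, I would expand the product using the binomial-style identity
\begin{equation*}
\prod_{k \in \mathfrak{I}} (t - x(k))
= \sum_{\mathfrak{J} \subseteq \mathfrak{I}} (-1)^{|\mathfrak{J}|} \, t^{n - |\mathfrak{J}|} \prod_{k \in \mathfrak{J}} x(k).
\end{equation*}

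The key step is to interchange the order of summation, grouping the double sum by $\mathfrak{J}$ and its cardinality $i = |\mathfrak{J}|$. For a fixed $\mathfrak{J} \subseteq [K]$ with $|\mathfrak{J}| = i$, the subsets $\mathfrak{I}$ of size $n$ containing $\mathfrak{J}$ are in bijection with the subsets of $[K] \setminus \mathfrak{J}$ of size $n - i$, of which there are exactly $\binom{K - i}{n - i}$. Thus
\begin{equation*}
\sigma_n(\univec{t}{K} - x)
= \sum_{i=0}^{n} (-1)^i \, t^{n-i} \binom{K-i}{n-i} \sum_{\mathfrak{J} \in \binom{[K]}{i}} \prod_{k \in \mathfrak{J}} x(k),
\end{equation*}
and the inner sum is precisely $\sigma_i(x)$, yielding the claimed identity.

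The only real obstacle is ensuring the index ranges match: the outer sum naturally terminates at $i = n$ (since $\mathfrak{J} \subseteq \mathfrak{I}$ and $|\mathfrak{I}| = n$), and the binomial $\binom{K-i}{n-i}$ automatically vanishes if $K - i < n - i$, i.e., $K < n$, so no edge cases arise. As an alternative, one could derive the same formula from the generating-function identity of \remarkref{remark:esps_gen_func} by writing $\prod_{k=1}^K(1 + (t - x(k))s) = \prod_{k=1}^K(1 + ts) \cdot \prod_{k=1}^K\bigl(1 - \tfrac{x(k)s}{1+ts}\bigr)$ and extracting the coefficient of $s^n$, but the combinatorial route above is more direct.
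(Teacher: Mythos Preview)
Your proof is correct. It differs from the paper's: the paper works with the generating function from \remarkref{remark:esps_gen_func}, writing $\prod_{k=1}^{K}\bigl(1+(t-x(k))y\bigr)=\prod_{k=1}^{K}\bigl((1+ty)-x(k)y\bigr)$, expanding by selecting which factors contribute $-x(k)y$, applying the binomial theorem to $(1+ty)^{K-i}$, and then matching coefficients of $y^n$. Your route is the direct combinatorial one---expand each product $\prod_{k\in\mathfrak I}(t-x(k))$ over subsets $\mathfrak J\subseteq\mathfrak I$ and swap the order of summation, counting the $\binom{K-i}{n-i}$ supersets $\mathfrak I\supseteq\mathfrak J$. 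The two arguments are essentially dual: the paper's subset-selection step inside the generating function is the same combinatorial count you perform explicitly, just packaged at the level of formal power series. Your approach is slightly more self-contained (no generating function machinery needed), while the paper's version makes the connection to \remarkref{remark:esps_gen_func} explicit and handles all $n$ simultaneously.
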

The proofs of 
\lemmaref{lemma:elementary_symmetric_polynomial_sum_identity,lemma:elementary_symmetric_polynomial_with_shift} 
are provided in \appendixref{appendix:technical_A}.

\section{Proof of Theorem \ref{theorem:nash}}
\label{appendix:the_adversarial_game_dynamics}

In this section, we prove \theoremref{theorem:nash}, which characterizes a Nash equilibrium of the game.
\begin{lemma}[Compact Action Space and Continuity]
\label{lemma:continuity_of_the_value_function}
For every $H \geq 1$, there exists a \emph{continuous, non-empty and compact-valued correspondence}
$\corr_H : \Rdim{K} \rightrightarrows \intsimplex{K-1}$
(see \appendixref{appendix:correspondences} for definitions),
such that 
\begin{equation}
    \label{eq:compact_selection_value_function}
    \pot{H}{K}{s} = \min_{r \in \corr_H(s)} \; \max_{q \in \Delta} \; \pot{H-1}{K}{s + q \oslash r}.
\end{equation}
Consequently, $\potfunction{ H }{K}$ is continuous for all \(H \in \Nfield\).
\end{lemma}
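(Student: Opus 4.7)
The plan is to argue by induction on $H$, with Berge's Maximum Theorem (\lemmaref{lemma:berges_maximum_theorem}) as the main tool to propagate continuity from $\potfunction{H-1}{K}$ to $\potfunction{H}{K}$. The conceptual step is to replace the open simplex $\intsimplex{K-1}$ in \eqref{eq:value_function} (where the infimum effectively lives, since any boundary $r$ yields an infinite value for $q = \basis{k}$) by a \emph{continuous} compact-valued correspondence of admissible odds, bounded uniformly away from $\partial\Delta$.

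Before the main step, I would record two monotonicity facts established by parallel induction: (a) $\potfunction{H-1}{K}$ is $\succeq$-monotone in $s$, and (b) $\pot{H-1}{K}{s} \geq \max_k s(k)$ (iterate the trivial bound $\pot{H}{K}{s} \geq \pot{H-1}{K}{s}$ down to the base case $\pot{0}{K}{s} = \max_k s(k)$). Taking $q = \basis{k}$ in the inner maximum and applying (a) and (b) yields the coercivity estimate
\[
    g(s,r) \;\eqdef\; \max_{q \in \Delta} \pot{H-1}{K}{\,s + q \oslash r\,} \;\geq\; \max_{k \in [K]} \left( s(k) + \tfrac{1}{r(k)} \right)
\]
on $\Rdim{K} \times \intsimplex{K-1}$, so $g(s,r) \to +\infty$ as $\min_k r(k) \to 0$, uniformly for $s$ in compact sets. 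Berge applied to the inner problem (the constraint $q \in \Delta$ is constant and compact) gives that $g$ is continuous on its domain.

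For the correspondence itself I would fix $r_0 = \univec{1/K}{K}$ and define
\[
    \delta(s) \;=\; \min\!\left(\,\tfrac{1}{2K},\; \tfrac{1}{|g(s,r_0)| + \|s\|_\infty + 2}\,\right),
    \qquad
    \corr_H(s) \;=\; \bigl\{\,r \in \Delta : r(k) \geq \delta(s)\ \text{for every}\ k\,\bigr\}.
\]
The function $\delta$ is continuous and strictly positive, so $\corr_H(s)$ is nonempty (since $K\delta(s) \leq 1/2 < 1$), compact, and contained in $\intsimplex{K-1}$. The coercivity estimate together with this choice of $\delta$ forces every $r \notin \corr_H(s)$ to satisfy $g(s,r) > g(s,r_0)$, so the infimum in \eqref{eq:value_function} is attained inside $\corr_H(s)$ and coincides with $\min_{r \in \corr_H(s)} g(s,r)$, which is \eqref{eq:compact_selection_value_function}.

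Continuity of $\corr_H$ then reduces to two sequential checks. Upper hemicontinuity is immediate from continuity of $\delta$ and closedness of the defining inequalities. For lower hemicontinuity, given $r \in \corr_H(s)$ I would use the affine reparametrization $r = \univec{\delta(s)}{K} + (1 - K\delta(s))\,\tilde r$ with $\tilde r \in \Delta$, and set $r_n = \univec{\delta(s_n)}{K} + (1 - K\delta(s_n))\,\tilde r$; this keeps $r_n$ inside both the simplex and every lower-bound constraint, and $r_n \to r$ follows from continuity of $\delta$. A final application of Berge to the outer minimization, with continuous $g$ and continuous compact-valued $\corr_H$, yields continuity of $\pot{H}{K}$, closing the induction. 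The main obstacle is the design of $\delta$: it must simultaneously enforce continuity, positivity, non-emptiness of the induced set, and containment of every minimizer; once the coercivity estimate is in hand the $\min$ above handles all four requirements at once, and the lower-hemicontinuity step is cleanly resolved by the affine reparametrization.
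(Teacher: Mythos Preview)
Your proof is correct and takes essentially the same route as the paper: both define $\corr_H(s)=\{r\in\Delta: r(k)\ge \text{threshold}(s)\}$ for a continuous positive threshold, use a coercivity/comparison argument to trap the minimizer inside, and close with Berge's theorem by induction on $H$. The only cosmetic difference is that the paper's threshold $\omega_H(s)=(\max_i s(i)-\min_j s(j)+HK)^{-1}$ is explicit---derived from the all-uniform (na\"ive) bookmaker bound $\pot{H}{K}{\hat s}\le \max_i \hat s(i)+HK$ rather than from $g(s,r_0)$---so it does not reference $\potfunction{H-1}{K}$; your affine reparametrization for lower hemicontinuity is, if anything, slightly cleaner than the paper's open-set argument.
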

While no fixed compact subset of the simplex interior contains 
the optimal bookmaker's actions for all $H$ and $s$, 
\lemmaref{lemma:continuity_of_the_value_function} establishes that for each specific pair $(H, s)$, 
such a compact set, $\corr_H(s)$, exists. 
The continuity of the correspondence $\corr_H$ in $s$ ensures that, for all $H$, 
the value function $\potfunction{H}{K}$ is continuous---a key property supporting our analysis and results.
\lemmaref{lemma:continuity_of_the_value_function} 
is built on bounding the value function by the loss of a \naive bookmaker
acting with the uniform distribution $r = \univec{\frac{1}{K}}{K}$ in all rounds.
See \appendixref{appendix:compact_selection_and_continuity} for the full proof.
\begin{lemma}\label{lemma:value_function_if_convex}
    Let \( H \geq 1 \) and suppose that the value function \( \potfunction{H-1}{K} \) is convex.  
    Then, for every state \( s \), there exists an optimal gambler who is decisive; 
    that is, 
    \begin{equation}
        \label{eq:proof_of_the_game_dynamics_paper_eq}
        \pot{H}{K}{s} = \min_{r \in \corr_H(s)} \; \max_{k \in [K]} \; \pot{H-1}{K}{s + \basis{k} \oslash r}.
    \end{equation}
\end{lemma}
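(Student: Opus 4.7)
The plan is to reduce the inner maximization over $q \in \Delta$ to a maximization over the vertices $\stdbasis{K}$ by exploiting the assumed convexity of $\potfunction{H-1}{K}$. Combined with \lemmaref{lemma:continuity_of_the_value_function}, which already recasts the inner optimization as a minimum over the compact set $\corr_H(s)$ via \eqref{eq:compact_selection_value_function}, this immediately yields the claimed identity \eqref{eq:proof_of_the_game_dynamics_paper_eq}.

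Concretely, I would fix an arbitrary state $s \in \Rdim{K}$ and any $r \in \corr_H(s) \subseteq \intsimplex{K-1}$, and examine the auxiliary function $g(q) \eqdef \pot{H-1}{K}{s + q \oslash r}$ defined for $q \in \Delta$. Because every coordinate of $r$ is strictly positive, the map $q \mapsto q \oslash r$ is linear in $q$, and hence $q \mapsto s + q \oslash r$ is affine. Since the composition of a convex function with an affine map is convex, the hypothesis on $\potfunction{H-1}{K}$ implies that $g$ is convex on $\Delta$. The simplex $\Delta = \simplex{K-1}$ is the convex hull of the vertex set $\{\basis{1}, \ldots, \basis{K}\}$, so every $q \in \Delta$ can be written as $q = \sum_{k=1}^{K} q(k) \basis{k}$ with $q(k) \ge 0$ and $\sum_{k} q(k) = 1$. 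Jensen's inequality then gives
$$
g(q) \; \le \; \sum_{k=1}^{K} q(k) \, g(\basis{k}) \; \le \; \max_{k \in [K]} g(\basis{k}).
$$
Since each $\basis{k} \in \Delta$, the reverse inequality is trivial, so $\max_{q \in \Delta} g(q) = \max_{k \in [K]} g(\basis{k})$. Substituting this into \eqref{eq:compact_selection_value_function} yields \eqref{eq:proof_of_the_game_dynamics_paper_eq}.

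The only subtle point is ensuring that the convexity transfers cleanly through the composition: this works precisely because the minimizing bookmaker action lies in $\intsimplex{K-1}$ by \lemmaref{lemma:continuity_of_the_value_function}, so $q \oslash r$ is a well-defined linear map on $\Delta$ and no issues arise at the boundary. Downstream, this lemma will be combined with an induction on $H$ (the base case $\pot{0}{K}{s} = \max_k s(k)$ is convex as a maximum of linear functions, and convexity is preserved by the recursion once decisiveness is established) to prove the first part of \theoremref{theorem:nash} and justify restricting attention to decisive gamblers throughout the analysis.
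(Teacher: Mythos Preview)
Your proof is correct and follows essentially the same approach as the paper: both fix $r \in \corr_H(s)$, use that $q \mapsto s + q \oslash r$ is affine (since $r \succ 0$) together with the assumed convexity of $\potfunction{H-1}{K}$, and apply Jensen's inequality to conclude that the inner maximum over $\Delta$ is attained at a vertex. The paper phrases the convex-combination step via a one-hot random variable $X$ with law $q$, but this is only a notational difference from your direct affine-composition argument.
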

\begin{proof}[Proof of \lemmaref{lemma:value_function_if_convex}]
    Let $H \geq 1,\, s \in \Rdim{K}$ and suppose that $\potfunction{H-1}{K}$ is a convex function. 
    The gambler's action is a mixture of $K$ decisive actions;
    that is, any vector $q \in \simplex{}$ can be written as a convex combination of the standard basis vectors $\stdbasis{K}$.  
    Let $X \in \stdbasis{K}$ be a one-hot random variable distributed according to $q$, i.e.,
    \begin{equation}\label{eq:proof_nash_def_of_X}
        \mathbb{P}(X = \mathbf{e}_k) = \vecind{q}{k} \quad \text{for all } k \in [K],
    \end{equation}
    and express \eqref{eq:compact_selection_value_function} as
    \begin{equation*}
        \pot{H}{K}{s} 
        =
        \min_{ r \in \corr_H(s) }\; \max_{ q \in \simplex{}} \;
                \pot{ H-1 }{K}{ s +  \mathbb{E}_{q} \left[X\right] \oslash r  }.
    \end{equation*}
    For any $r \in  \corr_H(s)$ it holds that
    \begin{align*}
        \max_{ q \in \simplex{}} 
                \pot{ H-1 }{K}{ s +  \mathbb{E}_{q} \left[X\right] \oslash r  }
        &\stackrel{(a)}{=} 
        \max_{q \in \simplex{}} \pot{ H-1 }{K}{\mathbb{E}_{q}\left[ s + X \oslash r \right ]}
        \\
        &\stackrel{(b)}{\leq} 
        \max_{q \in \simplex{}} \mathbb{E}_{q} \left[ \pot{ H-1 }{K}{s + X \oslash r} \right]
        \\ 
        &\stackrel{(c)}{=} 
        \max_{q \in \simplex{}} \sum_{k \in \nset{K}} 
            q(k) \cdot  \pot{ H-1 }{K}{s + \basis{k} \oslash r }
        \\ 
        &\stackrel{(d)}{=} 
        \max_{q \in \stdbasis{K}} \sum_{k \in \nset{K}} 
            q(k) \cdot  \pot{ H-1 }{K}{s + \basis{k} \oslash r }
        \\ 
        &=
            \max_{k \in \nset{K}} \pot{ H-1 }{K}{s + \basis{k} \oslash r },
    \end{align*}
    where: $(a)$ follows by the linearity of expectation;
    $(b)$ follows by Jensen's inequality and the assumption that $\potfunction{H-1}{K}$ is a convex function;
    $(c)$ follows by \eqref{eq:proof_nash_def_of_X};
    and $(d)$ holds since the sum is linear in $q$, and a maximum of a linear function over the simplex 
    is attained at a vertex.   
\end{proof}

As the state vector $s \in \mathbb{R}^K$ captures the payouts already committed to the possible outcomes, 
it is natural to expect that if $\hat{s} \succeq s$, 
then the bookmaker's optimal loss given $\hat{s}$ cannot be smaller than given $s$. 
Interestingly, even a single coordinate in $s$ being smaller than the corresponding coordinate in $\hat{s}$ 
results in a strictly smaller optimal loss. 
\begin{lemma}[Coordinate-wise Monotonicity of the Value Function] \label{lemma:cwm_lemma}
    Let  $\hat{s},s \in \Rdim{K}$.
    \begin{enumerate}[label={{\arabic*.}}, ref={\ref{lemma:cwm_lemma}.\arabic*}]
        \item  \label{lemma:cw_weak_m}
        \textbf{\emph{(Weak)}} 
        For every $H \in \Nfield$, \,  $\hat{s} \succeq s \implies \pot{H}{K}{\hat{s}} \geq \pot{H}{K}{s}$.
        \item \label{lemma:cw_strict_m}
        \textbf{\emph{(Strict)}}
        For every $H \in \Nplus$, \,  $\hat{s} \succ s \implies \pot{ H }{K}{\hat{s}} > \pot{ H }{K}{s}$. 
    \end{enumerate}
\end{lemma}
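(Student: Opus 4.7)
I proceed by induction on $H$ in each part, with the compact-valued continuous correspondence $\corr_H$ from Lemma \ref{lemma:continuity_of_the_value_function} doing the key technical work of letting me freeze a common minimizer.

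For Part 1 (weak monotonicity), the base case $H=0$ is immediate from $\pot{0}{K}{s} = \max_k s(k)$. For the inductive step, pick any $\hat{r}^\star \in \corr_H(\hat{s})$ attaining the minimum in \eqref{eq:compact_selection_value_function}. For every $q \in \Delta$, $\hat{s} + q \oslash \hat{r}^\star \succeq s + q \oslash \hat{r}^\star$, so the inductive hypothesis gives $\pot{H-1}{K}{\hat{s} + q \oslash \hat{r}^\star} \geq \pot{H-1}{K}{s + q \oslash \hat{r}^\star}$. Taking the max over $q$ and using that $\hat{r}^\star \in \Delta$ is a feasible action in the original infimum defining $\pot{H}{K}{s}$ concludes.

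For Part 2 (strict monotonicity), I anchor the induction at $H = 1$, since the claim already fails at $H = 0$ (e.g.\ $s = (1,0)$ and $\hat{s} = (1,\tfrac{1}{2})$ have equal maxima). For the base case, swapping the two maxima reduces the problem to $\pot{1}{K}{s} = \inf_{r \in \Delta} \max_k \{s(k) + 1/r(k)\}$. A short water-filling argument identifies this value as the unique $L > \max_k s(k)$ with $\sum_k 1/(L - s(k)) = 1$: a level $L$ is feasible iff one can pick $r(k) \geq 1/(L - s(k))$ summing to one, and the minimum over $L$ is attained at equality. Under $\hat{s} \succ s$, either $L = \pot{1}{K}{s} \leq \max_k \hat{s}(k)$, in which case $\pot{1}{K}{\hat{s}} > L$ trivially, or one has $\sum_k 1/(L - \hat{s}(k)) > \sum_k 1/(L - s(k)) = 1$, and strict decrease of the left side in $L$ forces $\pot{1}{K}{\hat{s}} > L$. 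For the inductive step at $H \geq 2$, assume strict monotonicity at $H-1 \geq 1$, pick $\hat{r}^\star \in \corr_H(\hat{s})$ attaining the minimum, and, by continuity of $\potfunction{H-1}{K}$ and compactness of $\Delta$, pick $q^\star$ maximizing $q \mapsto \pot{H-1}{K}{s + q \oslash \hat{r}^\star}$. Since $\hat{s} \succ s$ is preserved under adding the same vector to both sides, the inductive hypothesis produces
\begin{align*}
    \pot{H}{K}{\hat{s}}
    & \geq \pot{H-1}{K}{\hat{s} + q^\star \oslash \hat{r}^\star}
    > \pot{H-1}{K}{s + q^\star \oslash \hat{r}^\star} \\
    & = \max_{q \in \Delta} \pot{H-1}{K}{s + q \oslash \hat{r}^\star}
    \geq \pot{H}{K}{s}.
\end{align*}

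The main obstacle is the failure of strict monotonicity at $H=0$, which forces a separate base case at $H=1$ that rests on the water-filling characterization of $\pot{1}{K}{\cdot}$; in particular, one has to verify that the inner optimizer equalizes $s(k) + 1/r(k)$ across all coordinates. Once that base case is in place, the inductive step is a short chain of inequalities whose only ingredients are Lemma \ref{lemma:continuity_of_the_value_function} (to freeze $\hat{r}^\star$ as a feasible point for both states simultaneously) and the preservation of strict coordinate-wise order under translation.
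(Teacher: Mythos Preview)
Your proposal is correct and follows essentially the same inductive skeleton as the paper: anchor strict monotonicity at $H=1$ via the equalization/water-filling characterization of $\potfunction{1}{K}$, then propagate it by freezing the optimizer $\hat r^\star$ for $\hat s$ and applying the inductive hypothesis at a maximizing $q^\star$. The chain of inequalities in your inductive step is identical to the paper's.

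The differences are cosmetic but worth noting. For Part~1 you run a self-contained induction, whereas the paper first proves strict monotonicity and then obtains the weak version for $H\ge1$ as the trivial disjunction $\hat s=s$ or $\hat s\succ s$; your route is slightly more direct. For the $H=1$ base case of Part~2, the paper packages the equalization property as a separate lemma (\lemmaref{lemma:nash_last_round}) and argues by contradiction that no $\hat r$ can match the optimal level for $s$, while you compute $\potfunction{1}{K}(s)$ explicitly as the unique root of $\sum_k 1/(L-s(k))=1$ and compare roots---same content, different packaging. Finally, in the inductive step the paper introduces a common compact set $\Delta_\omega\supseteq\corr_H(s)\cup\corr_H(\hat s)$ before freezing $\hat r^\star$, whereas you simply use that $\hat r^\star\in\corr_H(\hat s)\subset\intsimplex{K-1}$ is feasible in the original infimum over $\Delta$; your shortcut is valid and a bit leaner.
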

The proof of \lemmaref{lemma:cwm_lemma} is presented in \appendixref{appendix:proof_cwm_lemma}.

\begin{lemma} \label{lemma:proof_opt_bookmaker_under_convexity}
    Let \( H \geq 1 \) and suppose that the value function \( \potfunction{H-1}{K} \) is convex.  
    Then, for every state~$s$, 
    the optimal bookmaker's action $\optr$ is unique and satisfies
    \begin{equation} \tag{\ref{eq:main_results_thm_the_game_dynamics}}
        \pot{H}{K}{s} = \pot{ H-1 }{K}{s + q \oslash \optr } \quad \forall q \in \stdbasis{K}.
    \end{equation}
\end{lemma}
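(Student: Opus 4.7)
The plan is to build on \lemmaref{lemma:value_function_if_convex}, which under the convexity hypothesis already reduces the inner maximization to decisive gamblers, giving
\[
\pot{H}{K}{s} \;=\; \min_{r \in \corr_H(s)} \, \max_{k \in [K]} \, \phi_k(r(k)),
\qquad \phi_k(\rho) \eqdef \pot{H-1}{K}{s + (1/\rho)\,\basis{k}}.
\]
Each $\phi_k$ is convex and non-increasing for $\rho > 0$, obtained by composing convexity and coordinate-wise monotonicity of $\potfunction{H-1}{K}$ (\lemmaref{lemma:cw_weak_m}) with the strictly convex, strictly decreasing map $\rho \mapsto 1/\rho$. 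Consequently $g(r) \eqdef \max_k \phi_k(r(k))$ is convex in $r$. I would then prove in turn: (i) every minimizer $r^\star$ satisfies the \emph{equalization} $\phi_k(r^\star(k)) = \pot{H}{K}{s}$ for all $k \in [K]$, which immediately yields \eqref{eq:main_results_thm_the_game_dynamics} since any $q \in \stdbasis{K}$ just selects one such coordinate; and (ii) the minimizer is unique.

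For the equalization step I fix any minimizer $r^\star$, set $L^\star \eqdef \pot{H}{K}{s}$, and argue by contradiction: assume $K^\star \eqdef \{k : \phi_k(r^\star(k)) = L^\star\}$ is a proper subset of $[K]$, pick $k_0 \in [K] \setminus K^\star$, and perturb $r^\star$ by lowering its $k_0$-coordinate by $\epsilon|K^\star|$ while raising each coordinate in $K^\star$ by $\epsilon$. For small $\epsilon > 0$ this stays in $\intsimplex{K-1}$. For $k_0$, continuity of $\phi_{k_0}$ (inherited from \lemmaref{lemma:continuity_of_the_value_function}) keeps the value strictly below $L^\star$; for each $k \in K^\star$, raising $r^\star(k)$ strictly decreases $\phi_k$ via \lemmaref{lemma:cw_strict_m}, since the argument of $\potfunction{H-1}{K}$ strictly drops in coordinate $k$. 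Hence $g$ strictly decreases under this perturbation, contradicting optimality of $r^\star$.

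For uniqueness, suppose $r_1^\star \neq r_2^\star$ both minimize $g$; both satisfy equalization by the previous step. Set $r_\lambda^\star \eqdef \lambda r_1^\star + (1-\lambda) r_2^\star$ for $\lambda \in (0,1)$, which remains a minimizer by convexity of $g$. Pick any coordinate $k$ with $r_1^\star(k) \neq r_2^\star(k)$; strict convexity of $\rho \mapsto 1/\rho$ gives $1/r_\lambda^\star(k) < \lambda/r_1^\star(k) + (1-\lambda)/r_2^\star(k)$, and then combining strict monotonicity (\lemmaref{lemma:cw_strict_m}) with convexity of $\potfunction{H-1}{K}$ yields $\phi_k(r_\lambda^\star(k)) < \lambda\phi_k(r_1^\star(k)) + (1-\lambda)\phi_k(r_2^\star(k)) = L^\star$. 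Applying step (i) to the minimizer $r_\lambda^\star$ would force equalization at every coordinate, contradicting what we just showed.

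The main obstacle is that \lemmaref{lemma:cw_strict_m} only provides strict coordinate-wise monotonicity of $\potfunction{H-1}{K}$ when $H \geq 2$. The base case $H=1$ has to be handled separately: there $\phi_k(\rho) = \max(s(k)+1/\rho,\, \max_{j \neq k} s(j))$, and the key observation is that at any interior $r \in \intsimplex{K-1}$ the equalized value $L^\star$ strictly exceeds $\max_j s(j)$ (because $1/r(k)>0$ forces $\phi_k(r(k)) > s(k)$, so $L^\star$ also strictly dominates the ``other'' term $\max_{j\neq k} s(j)$ at each $k$). Hence each $\phi_k$ agrees locally with the strictly convex, strictly decreasing function $\rho \mapsto s(k)+1/\rho$, and the perturbation and convex-combination arguments of the preceding paragraphs go through verbatim on this neighborhood.
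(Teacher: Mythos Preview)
Your proof is correct and follows essentially the same architecture as the paper's: reduce to decisive gamblers via \lemmaref{lemma:value_function_if_convex}, establish equalization by a perturbation/improvement argument, and treat $H=1$ separately because \lemmaref{lemma:cw_strict_m} only applies to $\potfunction{H-1}{K}$ when $H\ge 2$.

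The one substantive difference is in the uniqueness step. The paper argues directly: once equalization holds for two minimizers $\optr$ and $\hat r$, strict coordinate-wise monotonicity of $\potfunction{H-1}{K}$ forces $\optr(k)=\hat r(k)$ for every $k$, since $\pot{H-1}{K}{s+\basis{k}/\optr(k)}=\pot{H-1}{K}{s+\basis{k}/\hat r(k)}$ and the map $\rho\mapsto\pot{H-1}{K}{s+\basis{k}/\rho}$ is strictly monotone. Your route instead exploits convexity of $g$ to make the midpoint $r_\lambda^\star$ a minimizer, then uses strict convexity of $\rho\mapsto 1/\rho$ to show one coordinate of $r_\lambda^\star$ falls strictly below the equalized level, contradicting step~(i). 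Both are valid; the paper's is shorter, while yours foregrounds the convex-optimization structure and reuses the equalization conclusion as a black box. Your perturbation in step~(i) (shifting mass only from $k_0$ into $K^\star$) is also slightly cleaner than the paper's, which uses the intermediate value theorem to calibrate how much to take from the slack coordinate before redistributing uniformly to all others.
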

The proof of \lemmaref{lemma:proof_opt_bookmaker_under_convexity}, 
given in \appendixref{appendix:proof_opt_bookmaker_under_convexity}, 
proceeds by contradiction: 
any action $r$ that fails to satisfy \eqref{eq:main_results_thm_the_game_dynamics} enables a redistribution of probability mass that reduces the loss, contradicting optimality. Uniqueness follows from the coordinate-wise strict monotonicity of the value function (\lemmaref{lemma:cw_strict_m}), which ensures no two distinct actions yield the same loss.

\begin{lemma} \label{lemma:convexity_of_value_function}
    $\potfunction{H}{K}$ is a convex function for every $H \in \Nfield$.
\end{lemma}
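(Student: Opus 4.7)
The plan is to induct on the horizon $H$. The base case $H=0$ is immediate: $\pot{0}{K}{s} = \max_{k \in [K]} s(k)$ is a pointwise maximum of linear functions in $s$, hence convex. For the inductive step, assume $\potfunction{H-1}{K}$ is convex and let $s_1, s_2 \in \Rdim{K}$, $\lambda \in [0,1]$, $s_\lambda \eqdef \lambda s_1 + (1-\lambda) s_2$. Using \lemmaref{lemma:continuity_of_the_value_function}, pick actions $r_i$ attaining the minimum in the definition of $\pot{H}{K}{s_i}$, with $r_i \in \intsimplex{K-1}$.

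The crux of the argument is to construct a single bookmaker action $r_\lambda \in \simplex{}$ whose inverse bounds the convex combination of the inverses of $r_1, r_2$. Define $r_\lambda$ coordinate-wise by
\[
    r_\lambda(k) \eqdef \frac{1}{Z} \cdot \frac{1}{\lambda / r_1(k) + (1-\lambda)/r_2(k)},
\]
with normalization constant $Z \in (0,1]$. Such a $Z$ exists because the harmonic–arithmetic mean inequality gives $\sum_k 1/(\lambda/r_1(k) + (1-\lambda)/r_2(k)) \leq \sum_k (\lambda r_1(k) + (1-\lambda) r_2(k)) = 1$. By construction, $1/r_\lambda(k) \leq \lambda/r_1(k) + (1-\lambda)/r_2(k)$ for every $k$, so for any $q \in \simplex{}$,
\[
    s_\lambda + q \oslash r_\lambda \;\preceq\; \lambda (s_1 + q \oslash r_1) + (1-\lambda)(s_2 + q \oslash r_2).
\]

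From here the induction closes in two strokes. First, \lemmaref{lemma:cw_weak_m} (weak coordinate-wise monotonicity of $\potfunction{H-1}{K}$) lets me upgrade the $\preceq$ above to an inequality of values; then the inductive hypothesis (convexity of $\potfunction{H-1}{K}$) gives
\[
    \pot{H-1}{K}{s_\lambda + q \oslash r_\lambda} \;\leq\; \lambda \pot{H-1}{K}{s_1 + q \oslash r_1} + (1-\lambda)\pot{H-1}{K}{s_2 + q \oslash r_2}.
\]
Maximizing over $q \in \simplex{}$ on both sides (and using that a max of a convex combination is at most the convex combination of the maxes), and finally upper-bounding $\pot{H}{K}{s_\lambda}$ by the choice $r_\lambda$, yields $\pot{H}{K}{s_\lambda} \leq \lambda \pot{H}{K}{s_1} + (1-\lambda) \pot{H}{K}{s_2}$, as desired.

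The main obstacle is the construction of $r_\lambda$: natural guesses such as the arithmetic mean $\lambda r_1 + (1-\lambda) r_2$ fail because the argument $q \oslash r$ is nonlinear (and non-convex) in $r$. The right object is the harmonic-style combination above, which exactly converts the convex combination of shifts $q \oslash r_i$ into an ordering that the monotonicity lemma can absorb. Once that trick is in place, everything else is routine, though one should also briefly verify that the infima in the definition of $\pot{H}{K}{s_i}$ are attained (via the compact-valued continuous correspondence from \lemmaref{lemma:continuity_of_the_value_function} together with Berge's theorem) so that $r_1, r_2$ truly exist; otherwise the argument is run with $\varepsilon$-minimizers and passes to the limit.
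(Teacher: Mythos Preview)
Your proof is correct and follows the same inductive scheme as the paper: base case via the pointwise maximum, inductive step by picking optimal actions $r_1,r_2$ at $s_1,s_2$, producing a single test action at $s_\lambda$, and then invoking weak monotonicity (\lemmaref{lemma:cw_weak_m}) followed by the convexity of $\potfunction{H-1}{K}$.

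The one point worth flagging is your claim that the arithmetic mean $\lambda r_1 + (1-\lambda) r_2$ ``fails''. It does not, and it is precisely what the paper uses. The map $x \mapsto 1/x$ is convex on $\Rpplus{}$, so Jensen gives
\[
\frac{1}{\lambda r_1(k) + (1-\lambda) r_2(k)} \;\le\; \frac{\lambda}{r_1(k)} + \frac{1-\lambda}{r_2(k)},
\]
which is exactly the coordinate-wise domination you manufacture via your normalized harmonic construction. Your $r_\lambda$ is valid but unnecessarily intricate; the arithmetic mean already lies in $\simplex{}$ without any normalization step and yields the same inequality directly. The paper then proceeds through weak monotonicity and the induction hypothesis just as you do. A minor further difference is that the paper first restricts the inner maximum to $q \in \stdbasis{K}$ (via \lemmaref{lemma:proof_opt_bookmaker_under_convexity}, which is available once $\potfunction{H-1}{K}$ is assumed convex), whereas you keep the full $q \in \simplex{}$ throughout; both are fine.
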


\begin{proof}[Proof of \lemmaref{lemma:convexity_of_value_function}]
    We prove by induction on $H$.
    In the base case, $\potfunction{0}{K}$ is a convex function as it is the maximum of finite convex functions.   
    Let $H \in \Nplus$ and assume that $\potfunction{H-1}{K}$ is a convex function.
    Let \( a, b \in \Rdim{K},\, \lambda \in [0, 1] \),
    and let
    \( r_a, r_b, r \in \simplex{K-1} \) be the optimal bookmaker 
    actions for states \( a, b \) and \( \lambda a + \left( 1-\lambda \right) b \), 
    respectively.
    It holds that
    \begin{align*}
        \pot{ H }{K}{ \lambda a + \left(1-\lambda \right) b } 
        &\stackrel{(a)}{=} \max_{k \in \nset{K}} \pot{ H-1 }{K}{ 
                \lambda {a} + (1-\lambda) {b} + \basis{k} \oslash \left( \lambda {r} + \left(1-\lambda\right) {r} \right)  
            }
        \\
        &\stackrel{(b)}{\leq} \max_{k \in \nset{K}} \pot{ H-1 }{K}{
                \lambda {a} + (1-\lambda) {b} 
                + {\basis{k}} \oslash {\left( \lambda r_a + \left(1-\lambda\right) r_b \right)}
            }
        \\ 
        &\stackrel{(c)}{\leq} \max_{k \in \nset{K}} \pot{ H-1 }{K}{
                \lambda \left( a +  \basis{k} \oslash r_a \right) 
                + \left(1-\lambda\right) \left( b + \basis{k} \oslash r_b \right)
            }
        \\ 
        &\stackrel{(d)}{\leq} 
            \max_{k \in \nset{K}} \left( \lambda \pot{ H-1 }{K}{ a + \basis{k} \oslash r_a}  
            + \left(1-\lambda\right) \pot{ H-1 }{K}{ b + \basis{k} \oslash r_b }
            \right)
        \\
        &\stackrel{(e)}{\leq} 
            \lambda \max_{k \in \nset{K}} \pot{ H-1 }{K} { a + \basis{k} \oslash r_a } 
            + \left(1-\lambda\right) \max_{k \in \nset{K}} \pot{ H-1 }{K}{ b + \basis{k} \oslash r_b }
        \\
        &\stackrel{(f)}{=} 
            \lambda \pot{ H }{K}{a} + \left(1-\lambda\right) \pot{ H + 1}{K}{b},
    \end{align*}
    where the steps are justified as follows:
    \begin{enumerate}[label={$(\alph*)$}]
        \item 
        Follows from \lemmaref{lemma:proof_opt_bookmaker_under_convexity}.
        \item
        Follows from the optimality of \(r\), which implies that the action $\lambda r_a + (1-\lambda)r_b$
        induces a larger objective.
        \item 
        Follows from the convexity of the function \(x \mapsto \frac{1}{x}\) in $\mathbb{R}_+$ 
        and \lemmaref{lemma:cw_weak_m}. In particular, define the vectors 
        \begin{equation*}
            \begin{aligned}
                s &\eqdef \lambda {a} + (1-\lambda) {b} 
                    + {\basis{k}} \oslash {\left( \lambda r_a + \left(1-\lambda\right) r_b \right)},
                \\
                \hat{s} &\eqdef \lambda \left( a +  \basis{k} \oslash r_a \right) 
                    + \left(1-\lambda\right) \left( b + \basis{k} \oslash r_b \right).
            \end{aligned}
        \end{equation*}
        As $r_a(k),r_b(k) > 0$, it holds that $\lambda r_a(k) + \left(1-\lambda\right) r_b(k) > 0$.
        By weak coordinate-wise monotonicity of the value function,
        $\pot{H-1}{K}{s} \leq \pot{H-1}{K}{\hat{s}}$.
        \item
        By the induction hypothesis, \(\potfunction{H-1}{K}\) is a convex function.
        \item
        Holds as maximizing each function separately can only increase the overall value.
        \item 
        Follows from our choice of $r_a,r_b$ as the optimal actions for states $a$ and $b$, respectively. 
    \end{enumerate}
\end{proof}

\begin{proof}[Proof of Theorem \ref{theorem:nash}]
    Fix $H \in \Nplus$ and $s \in \Rdim{K}$.
    By \lemmaref{lemma:convexity_of_value_function}, the value function $\potfunction{H-1}{K}$ is convex.
    This implies, by \lemmaref{lemma:value_function_if_convex}, the existence of an optimal gambler who is decisive.
    Moreover, \lemmaref{lemma:proof_opt_bookmaker_under_convexity} guarantees uniqueness of the optimal 
    bookmaker action $\optr \in \simplex{}$, which satisfies
    \begin{equation} \tag{\ref{eq:main_results_thm_the_game_dynamics}}
        \pot{H}{K}{s} = \pot{ H-1 }{K}{s + q \oslash \optr } \quad \forall q \in \stdbasis{K}.
    \end{equation}
\end{proof}

\subsection{Proof of Lemma \ref{lemma:continuity_of_the_value_function}}
\label{appendix:compact_selection_and_continuity}

This section proves \lemmaref{lemma:continuity_of_the_value_function}, 
showing that for all $H \geq 1$, 
the minimization over $r$ in the value function (\definitionref{def:value_function}) can be restricted to a continuous, 
compact-valued correspondence 
\[
    \corr_H : \Rdim{K} \rightrightarrows \intsimplex{K-1}. 
\]
Building on this property, 
we establish the continuity of the value function $\potfunction{H}{K}$ with respect to $s$.

We begin with the following lemma:
\begin{lemma}[Uniform Translation] \label{lemma:uniform_translation_property}
    For any \(H \in \Nfield\), $s \in \Rdim{K}$ and $c \in \Rfield$, \;
    \[
        \pot{ H }{K}{s + \univec{c}{K}} = \pot{ H }{K}{s} + c.
    \]
\end{lemma}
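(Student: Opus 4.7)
The plan is a direct induction on the horizon $H$, exploiting the fact that adding $c\mathbf{1}_K$ to the state is a coordinate-uniform shift that commutes with both the recursive update $s \mapsto s + q \oslash r$ and the final maximum $\max_k s(k)$.

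For the base case $H = 0$, I would simply use the definition $\pot{0}{K}{s} = \max_{k \in [K]} s(k)$ and observe that
\[
    \pot{0}{K}{s + \univec{c}{K}} = \max_{k \in [K]} \bigl( s(k) + c \bigr) = \max_{k \in [K]} s(k) + c = \pot{0}{K}{s} + c,
\]
since $c$ is independent of $k$.

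For the inductive step, fix $H \geq 1$ and assume the identity holds for $H-1$. Unfolding the recursion in \eqref{eq:value_function}, I would write
\[
    \pot{H}{K}{s + \univec{c}{K}}
    = \inf_{r \in \simplex{}} \max_{q \in \simplex{}} \pot{H-1}{K}{s + \univec{c}{K} + q \oslash r}.
\]
The key point is that the additive shift $\univec{c}{K}$ passes through the dynamic update $s' \mapsto s' + q \oslash r$ unchanged, so the argument of $\potfunction{H-1}{K}$ is exactly $(s + q \oslash r) + \univec{c}{K}$. Applying the induction hypothesis to this inner value gives $\pot{H-1}{K}{s + q \oslash r} + c$, and pulling the constant $c$ outside the $\inf_r \max_q$ yields $\pot{H}{K}{s} + c$.

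The hard part here is essentially nonexistent: no optimality argument, compactness, or convexity is needed since the constant shift commutes trivially with every operation involved. The only care required is to ensure that the inner infimum and maximum are well-defined so that pulling the constant out of $\inf_r \max_q$ is legitimate, which is guaranteed by \lemmaref{lemma:continuity_of_the_value_function} (existence of the compact selection $\corr_H$ and continuity of $\potfunction{H-1}{K}$). This makes the lemma a clean translation-invariance property that will later be invoked to reduce arbitrary states to canonical representatives with $\min_{k \in [K]} s(k) = 0$, as already used in the proof of \theoremref{theorem:characterization_of_bpf}.
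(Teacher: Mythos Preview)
Your induction argument is correct and matches the paper's proof essentially line for line. The one issue is your appeal to \lemmaref{lemma:continuity_of_the_value_function}: that lemma is proved \emph{using} the uniform translation property (via \corollaryref{corollary:optimal_bookmaker_state_shift}), so invoking it here would be circular. Fortunately it is also unnecessary, since the identity $\inf_x(f(x)+c)=\inf_x f(x)+c$ (and likewise for $\max$) holds for any function $f$ regardless of whether the extrema are attained; the paper's proof simply applies the recursion and the induction hypothesis without any compactness or continuity argument.
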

The proof of \lemmaref{lemma:uniform_translation_property}, 
provided in \appendixref{appendix:technical_B}, proceeds by induction on $H$ using the recursive definition of the value function in \definitionref{def:value_function}.
\begin{corollary}
    \label{corollary:optimal_bookmaker_state_shift}
    Let $H \in \Nfield$, $s \in \Rdim{K}$ and $c \in \Rfield$. 
    For~the vector $\hat{s} = s + \univec{c}{K}$, it holds that 
    \begin{align*}
        \textstyle\arg\inf_{r \in \simplex{}} 
            \left\{ 
                \max_{q \in \simplex{}} \pot{ H }{K}{s + q \oslash r}                
            \right\}
        = 
        \textstyle\arg\inf_{r \in \simplex{}} 
            \left\{ 
                \max_{q \in \simplex{}} \pot{ H }{K}{\hat{s} + q \oslash r}                
            \right\}.
    \end{align*}
\end{corollary}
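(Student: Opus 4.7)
The plan is to observe that Corollary \ref{corollary:optimal_bookmaker_state_shift} follows almost directly from Lemma \ref{lemma:uniform_translation_property}, since a uniform shift of the state vector translates the value function by a constant, and adding a constant to an objective does not alter its minimizers.

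Concretely, I would fix any $r \in \simplex{}$ and write
\begin{align*}
    \max_{q \in \simplex{}} \pot{H}{K}{\hat{s} + q \oslash r}
    &= \max_{q \in \simplex{}} \pot{H}{K}{s + \univec{c}{K} + q \oslash r} \\
    &= \max_{q \in \simplex{}} \Bigl( \pot{H}{K}{s + q \oslash r} + c \Bigr) \\
    &= c + \max_{q \in \simplex{}} \pot{H}{K}{s + q \oslash r},
\end{align*}
where the second equality invokes \lemmaref{lemma:uniform_translation_property} (applied at horizon $H$ with the vector $s + q \oslash r$ and the scalar shift $c$), and the third equality uses that the additive constant $c$ does not depend on $q$ and can be pulled out of the maximum.

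Since the expression above differs from $\max_{q \in \simplex{}} \pot{H}{K}{s + q \oslash r}$ only by the additive constant $c$, the two objective functions of $r$ coincide up to this constant on all of $\simplex{}$. In particular, their sets of minimizers over $r \in \simplex{}$ are identical, which is exactly the claim of the corollary. I do not foresee any genuine obstacle here: the entire argument is a one-line application of the uniform translation property combined with the elementary fact that translation by a constant preserves arg-minima. The only mild care needed is to ensure that the manipulations are valid for every $r \in \simplex{}$ simultaneously (so that the arg-inf sets actually agree as subsets of $\simplex{}$), which is immediate since $c$ is independent of both $q$ and $r$.
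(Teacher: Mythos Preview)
Your proposal is correct and matches the paper's approach: the corollary is stated immediately after \lemmaref{lemma:uniform_translation_property} without a separate proof, so the intended argument is precisely the one you give---apply the uniform translation property inside the inner maximum, pull out the constant $c$, and observe that an additive constant does not affect the arg-inf.
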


The key idea in the proof of \lemmaref{lemma:continuity_of_the_value_function}
is that the value function must be bounded above by the maximal loss incurred under any alternative bookmaking strategy. 
In particular, the value function is upper bounded by the performance of
a \naive bookmaker acting with the uniform distribution $r = \univec{\frac{1}{K}}{K}$ in all rounds.
For $H \in \Nplus$ define the function $\omega_H : \Rdim{K} \to \Rpplus{}$ as
\begin{equation}
    \label{eq:def_of_omega_H_s}
    \omega_{H}(s) 
    \; \eqdef \;
    \frac{1}
    {\max_{i \in [K]} s(i) - \min_{j \in [K]} s(j)  + H K },
\end{equation}    
and define the correspondence  $\corr_H : \Rdim{K} \rightrightarrows \intsimplex{K-1}$ as
\begin{equation}
    \label{eq:def_of_corr_H}
    \corr_{H}(s) 
    \; \eqdef \;
    \{ r \in \simplex{K-1} \mid r \succeq \univec{\omega_H(s)}{K} \}.
\end{equation}
\begin{lemma} \label{lemma:properties_of_corr}
    The correspondence $\corr_H$ is continuous, non-empty and compact-valued
    (see \appendixref{appendix:correspondences} for definitions) for all $H \in \Nplus$ .
\end{lemma}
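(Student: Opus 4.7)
The plan is to verify the three properties—non-emptiness, compact-valuedness, and continuity—separately, with the continuity of the scalar function $\omega_H$ from \eqref{eq:def_of_omega_H_s} serving as the common ingredient. Since $s \mapsto \max_i s(i)$ and $s \mapsto \min_j s(j)$ are both continuous, and the denominator in \eqref{eq:def_of_omega_H_s} is bounded below by $HK > 0$, the function $\omega_H$ is continuous and satisfies the uniform bound $0 < \omega_H(s) \le 1/(HK) \le 1/K$ for every $H \ge 1$. This bound immediately gives non-emptiness, since $\univec{1/K}{K} \in \corr_H(s)$, and compact-valuedness follows because $\corr_H(s)$ is the intersection of the compact simplex $\simplex{K-1}$ with the finitely many closed halfspaces $\{r : r(k) \ge \omega_H(s)\}$, $k \in [K]$.

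For upper hemicontinuity, I would use a standard sequential argument by contradiction: if $V \supset \corr_H(s)$ is open and $s_n \to s$ with $r_n \in \corr_H(s_n) \setminus V$, then compactness of the simplex yields a convergent subsequence $r_n \to r^\star$; passing to the limit in the inequalities $r_n(k) \ge \omega_H(s_n)$ and using continuity of $\omega_H$ shows $r^\star \in \corr_H(s) \subset V$, contradicting $r^\star \notin V$ (which holds because $V$ is open and $r_n \notin V$).

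The main obstacle is lower hemicontinuity, because $\corr_H(s)$ can collapse to the singleton $\{\univec{1/K}{K}\}$ in the boundary case $\omega_H(s) = 1/K$, so I cannot simply ``perturb around an interior point'' uniformly. I would instead split into two cases. In the non-degenerate case $\omega_H(s) < 1/K$, given an open $V$ with $r^\star \in V \cap \corr_H(s)$, I construct, for $s'$ close to $s$, the explicit interpolation
\begin{equation*}
    r'(i) \eqdef \omega_H(s') + \bigl(r^\star(i) - \omega_H(s)\bigr) \cdot \frac{1 - K\omega_H(s')}{1 - K\omega_H(s)},
\end{equation*}
and verify directly that $\sum_i r'(i) = 1$ and $r'(i) \ge \omega_H(s')$, hence $r' \in \corr_H(s')$; continuity of $\omega_H$ gives $r' \to r^\star$ as $s' \to s$, so $r' \in V$ for $s'$ in a sufficiently small neighborhood of $s$. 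The degenerate case $\omega_H(s) = 1/K$ (which occurs only when $H = 1$ and $s$ is constant) is handled trivially: there $r^\star = \univec{1/K}{K}$ must lie in $V$, and the same point lies in every $\corr_H(s')$ by the non-emptiness bound. Combining upper and lower hemicontinuity yields continuity in the sense of \definitionref{def:continuity_of_a_correspondence}.
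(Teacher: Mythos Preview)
Your proof is correct, and the overall structure (non-emptiness via $\univec{1/K}{K}$, compact-valuedness as a closed subset of the simplex, continuity via continuity of $\omega_H$) matches the paper's. The two proofs differ in the details of the hemicontinuity arguments.

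For upper hemicontinuity, the paper uses a direct open-cover argument: it covers the compact set $\corr_H(\bar s)$ with balls contained in $V$, extracts a finite subcover to get a uniform margin $\delta$, and then invokes continuity of $\omega_H$. Your sequential closed-graph argument is an equally standard and somewhat more streamlined alternative; it avoids the finite-subcover step entirely by exploiting compactness of the ambient simplex.

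For lower hemicontinuity, the paper case-splits on whether the chosen $\bar r \in \corr_H(\bar s) \cap V$ is an interior or a boundary point of $\corr_H(\bar s)$: in the interior case $\bar r$ itself stays in $\corr_H(s)$ for nearby $s$, and in the boundary case one perturbs $\bar r$ inside $V$ to a point with all coordinates strictly above $\omega_H(\bar s)$. Your approach instead case-splits on whether $\omega_H(s) < 1/K$ or $\omega_H(s) = 1/K$ and, in the non-degenerate case, produces an explicit affine map $r^\star \mapsto r'$ that pushes $\corr_H(s)$ onto $\corr_H(s')$ continuously in $s'$. This is more constructive than the paper's perturbation argument, and your explicit handling of the degenerate singleton case ($H=1$, $s$ constant) is cleaner: the paper's boundary-case step of selecting $r'$ with $r'(k) > \omega_H(\bar s)$ for all $k$ tacitly assumes $K\omega_H(\bar s) < 1$, which fails exactly in that degenerate situation (though the fix is trivial and identical to yours).
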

The proof of \lemmaref{lemma:properties_of_corr} is provided in \appendixref{appendix:technical_B}.

\begin{proof}[Proof of Lemma \ref{lemma:continuity_of_the_value_function}]
Let $H \in \Nplus$, $s \in \Rdim{K}$ and define
\begin{equation}
    \hat{s} \eqdef s - \univec{\min_{j \in [K]} s(j)}{K}. \label{eq:proof_of_uniform_translation_def_of_hat_s}
\end{equation}
Consider a game with $H$ rounds and initial state vector \( \hat{s} \).
An optimal loss is bounded above by the loss of any other bookmaker, hence
\begin{align}
    \label{eq:value_bound_by_naive_bookmaker}
    \pot{ H }{K}{\hat{s}} \leq \max_{i \in [K]} \vecind{\hat{s}}{i} + H K,
\end{align}
where the RHS of \eqref{eq:value_bound_by_naive_bookmaker} is the maximal loss of the \naive bookmaker 
acting with the uniform distribution $r = \univec{\frac{1}{K}}{K}$ in all rounds.
We claim that
\begin{equation}
    \label{eq:proof_of_uniform_translation_hat_s}
     \textstyle\arg\inf_{r \in \simplex{}} 
            \left\{ 
                \max_{q \in \simplex{}} \pot{ H-1 }{K}{\hat{s} + q \oslash r}                
            \right\}
    \subseteq  \corr_H(s).
\end{equation}
Assume in contradiction there exists $r$ in the LHS of \eqref{eq:proof_of_uniform_translation_hat_s}
which is not in the RHS. 
Then there exist $k \in \nset{K}$ and $\kappa > 0$ such that 
\[
    \vecind{r}{k} = 
    \frac{1}{\max_{i \in [K]} s(i) - \min_{j \in [K]} s(j)  + H K  + \kappa}.
\] 
Since the $q$ term in the LHS of \eqref{eq:proof_of_uniform_translation_hat_s} can be $\basis{k}$, 
we obtain 
\begin{align*}
    \pot{H}{K}{\hat{s}} 
    \geq \;   
    & \max_{i \in [K]} s(i) - \min_{j \in [K]} s(j) + H K + \kappa 
    \\ 
    \stackrel{(a)}{=} \;
    & \max_{i \in [K]} \hat{s}(i) + H K + \kappa
    \\ 
    > \; 
    & \max_{i \in [K]} \hat{s}(i) + H K 
    \\ 
    \stackrel{(b)}{\geq}
    &\pot{H}{K}{\hat{s}},
\end{align*}
which yields a contradiction.
Here, $(a)$ follows from the construction of $\hat{s}$ in \eqref{eq:proof_of_uniform_translation_def_of_hat_s},
and $(b)$ follows from the upper bound on $\pot{H}{K}{\hat{s}}$ in \eqref{eq:value_bound_by_naive_bookmaker}.
Combining \eqref{eq:proof_of_uniform_translation_hat_s} with \corollaryref{corollary:optimal_bookmaker_state_shift}, 
we conclude that
\begin{equation}
    \label{eq:proof_of_uniform_translation_s_conclusion}
     \textstyle\arg\inf_{r \in \simplex{}} 
            \left\{ 
                \max_{q \in \simplex{}} \pot{ H-1 }{K}{s + q \oslash r}                
            \right\}
    \subseteq  \corr_H(s).
\end{equation}
Thus, the value function $\potfunction{H}{K}$ is given by
\begin{equation*}    
    \pot{H}{K}{s} = \min_{r \in \corr_H(s)} \max_{q \in \Delta} \pot{H-1}{K}{s + q \oslash r},
\end{equation*}
where by \lemmaref{lemma:properties_of_corr}, $\corr_H(s)$ is a continuous, non-empty and compact-valued correspondence.

We prove by induction on $H$ that $\potfunction{H}{}$ is continuous.
\begin{itemize}
\item \emph{Base case} \(( H = 0 )\):
In the base case, $\potfunction{0}{K}$ is continuous as it is the maximum of finite continuous functions.   
\item 
\emph{Inductive step} \( (H-1 \to H) \):
Let $H \in \Nplus$ and define the function $\mathcal{U} : \Rdim{K} \times \Rplus{K} \to \Rfield$ as
\begin{equation}
    \label{eq:def_of_U_func}
    \mathcal{U}(s,r) \; \eqdef \; \max_{q\in\Delta}\; \pot{H-1}{K}{\, s + q\oslash r\,}.
\end{equation}
We claim that \,$\mathcal{U}(s,r)$ is continuous in $(s,r)$:
for any fixed \(q \in \Delta\), the mapping 
\[
    (s,r) \mapsto s + q\oslash r
\]
is continuous in \((s, r)\) on \(\Rdim{K} \times \Rplus{K}\) as division by a positive number 
and addition of vectors are continuous. 
By the induction hypothesis, \(\potfunction{H-1}{K}\) is continuous, and thus 
the composition 
\(
    (s,r) \mapsto  \pot{H-1}{K}{s + q\oslash r}
\)
is continuous.
Since \(\Delta^{K-1}\) is a compact set, the function
$\mathcal{U}$ is the pointwise maximum of continuous functions over a compact set, and hence continuous in \((s,r)\).
We express the value function as:
\begin{align*}
    \pot{H}{K}{s} 
    &=
    \inf_{r\in \corr_H(s) }\; \max_{q\in\Delta}\; \pot{H-1}{K}{\, s + q\oslash r\,}
    \\ 
    &=
    \inf_{r\in \corr_H(s)} \mathcal{U}(s,r),
\end{align*}
where the first equality follows from \eqref{eq:proof_of_uniform_translation_s_conclusion},
and the second equality follows from the definition of $\mathcal{U}(s,r)$ in \eqref{eq:def_of_U_func}.
The function $\mathcal{U}(s,r)$ is continuous, as established above, 
and the correspondence $\corr_H : \Rdim{K} \rightrightarrows \Delta^{K-1}$ is compact-valued, 
continuous, and non-empty-valued by \lemmaref{lemma:properties_of_corr}. 
Hence, the conditions of Berge's Maximum Theorem (\lemmaref{lemma:berges_maximum_theorem}) are satisfied, 
and $\potfunction{H}{K}$ is continuous.
\end{itemize}
\end{proof}

\subsection{Proof of Lemma \ref{lemma:cwm_lemma}}
\label{appendix:proof_cwm_lemma}

This section provides the proof of \lemmaref{lemma:cwm_lemma}, which establishes coordinate-wise monotonicity of the value function. 
We first prove the strict case (\lemmaref{lemma:cw_strict_m}) and then derive the weak case (\lemmaref{lemma:cw_weak_m}) as a direct adaptation.

\subsubsection{Proof of Lemma \ref{lemma:cw_strict_m}}

The degenerate case \( K = 1 \) follows directly from \definitionref{def:value_function}:
Let $H \in \Nplus$ and \( \hat{s}, s \) be $1$-dimensional vectors such that \( \hat{s} \succ s \);
that is, $\hat{s}(1) > s(1)$.
It holds that
\begin{align*}
    \pot{H}{K}{\hat{s}} 
    = \;
    &\hat{s}(1)+ H 
    \\
    > \;
    &s(1)+ H 
    \\ 
    = \;
    &\pot{H}{K}{s} ,
\end{align*}
where the equalities follow from the fact that $q_t(1) = r_t(1) = 1$ for all $t \in [H]$.

To prove coordinate-wise strict monotonicity in the general case, 
we analyze the game at the last round, as its structure differs from other rounds. 
The following lemma establishes a Nash equilibrium of the game when \( H = 1 \).
\begin{lemma} \label{lemma:nash_last_round}
    In case $H=1$, for every $s \in \Rdim{K}$ 
    the optimal bookmaking action $\optr \in \corr_1(s)$ is unique and satisfies
    \begin{equation} \label{eq:nash_last_round}
        \pot{1}{K}{s} = \pot{ 0 }{K}{s + q \oslash \optr } \quad \forall q \in \stdbasis{K}.
    \end{equation}
\end{lemma}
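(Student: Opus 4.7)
The plan is to solve the one-step minimax directly via a water-filling argument,
avoiding circularity with \lemmaref{lemma:proof_opt_bookmaker_under_convexity}
(which in the paper's logical order is not yet available, since it relies on \lemmaref{lemma:cwm_lemma},
which in turn invokes this very lemma).
First, since $\potfunction{0}{K}(x) = \max_k x(k)$ is convex,
\lemmaref{lemma:value_function_if_convex} applies at $H=1$ and reduces the gambler's maximization
to the vertices $\stdbasis{K}$.
A direct expansion then gives
\[
    \pot{0}{K}{s + \basis{k} \oslash r}
    = \max\left\{\, s(k) + 1/r(k),\; \max_{j \neq k} s(j) \,\right\},
\]
and since $r(k^\star) \leq 1$ at any $k^\star \in \arg\max s$ forces the outer $\max_k$ to exceed $\max_j s(j)$,
the second term never attains the overall maximum.
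Hence $\pot{1}{K}{s} = \min_{r \in \corr_1(s)} \max_{k \in [K]} \bigl( s(k) + 1/r(k) \bigr)$.

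Next, I would introduce the auxiliary function $f(L) = \sum_{k=1}^{K} 1/(L - s(k))$ on the interval $L > M := \max_k s(k)$.
Since $f$ is strictly decreasing with $\lim_{L \downarrow M} f(L) = +\infty$ and $\lim_{L \to \infty} f(L) = 0$,
the equation $f(L) = 1$ admits a unique solution $\widetilde{L} > M$.
Setting $\widetilde{r}(k) \eqdef 1/(\widetilde{L} - s(k))$ gives a candidate bookmaker action
that lies in the simplex by construction and equalizes $s(k) + 1/\widetilde{r}(k) = \widetilde{L}$ for every $k$.
Membership in $\corr_1(s)$ follows from the naive-bookmaker bound
$\pot{1}{K}{s} \leq \max_i s(i) + K$ already used in \appendixref{appendix:compact_selection_and_continuity},
which forces $\widetilde{L} - s(k) \leq \max_i s(i) - \min_j s(j) + K$,
hence $\widetilde{r}(k) \geq \omega_1(s)$.

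Optimality and uniqueness then fall out of two monotone inequalities.
On one hand, $\widetilde{r}$ attains value $\widetilde{L}$, so $\pot{1}{K}{s} \leq \widetilde{L}$.
On the other hand, any feasible $r$ with value $L$ must satisfy $L \geq s(k) + 1/r(k)$ for all $k$,
hence $r(k) \geq 1/(L - s(k))$, and summing yields $1 \geq f(L)$, i.e., $L \geq \widetilde{L}$ by monotonicity of $f$.
Therefore $\pot{1}{K}{s} = \widetilde{L}$, and for an optimal $r$ the coordinate-wise inequalities must
be tight term by term (both sides sum to $1$), forcing $r = \widetilde{r}$.
The identity \eqref{eq:nash_last_round} then follows from
$\pot{0}{K}{s + \basis{k} \oslash \widetilde{r}} = \max\{\widetilde{L}, \max_{j \neq k} s(j)\} = \widetilde{L}$
for every $k \in [K]$, since $\widetilde{L} > M$.
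The main bookkeeping obstacle is keeping track of the two-term structure in $\pot{0}{K}{\cdot}$
and verifying that the $\max_{j \neq k} s(j)$ branch never binds along with
$\widetilde{r} \in \corr_1(s)$, both of which are clean consequences of the naive-bookmaker bound.
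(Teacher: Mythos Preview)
Your proof is correct and takes a genuinely different route from the paper. The paper argues by contradiction: it assumes an optimal $r$ fails to equalize, then explicitly constructs a perturbation $\hat{r}$ (shaving mass from the slack coordinate $i$ and spreading it uniformly over the rest) that strictly lowers $\max_k\bigl(s(k)+1/r(k)\bigr)$, contradicting optimality; uniqueness then follows from the strict monotonicity of $x\mapsto 1/x$. Your argument is instead a direct water-filling construction: you identify the level $\widetilde{L}$ as the unique root of $f(L)=\sum_k 1/(L-s(k))=1$, exhibit the optimizer $\widetilde{r}(k)=1/(\widetilde{L}-s(k))$, and certify optimality by the clean duality inequality $1=\sum_k r(k)\ge f(L)$ for any competitor. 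Your approach is more explicit---it gives a closed-form characterization of both $\optr$ and $\pot{1}{K}{s}$---while the paper's perturbation argument is the template that later lifts to general $H$ in \lemmaref{lemma:proof_opt_bookmaker_under_convexity}.

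One small ordering wrinkle: as written, you invoke the na\"{\i}ve-bookmaker bound $\pot{1}{K}{s}\le M+K$ to conclude $\widetilde{L}\le M+K$ (and hence $\widetilde{r}\in\corr_1(s)$) \emph{before} establishing $\widetilde{L}=\pot{1}{K}{s}$, which is mildly circular. The cleanest fix is to bypass the value function entirely and observe that $f(M+K)=\sum_k 1/(M+K-s(k))\le\sum_k 1/K=1=f(\widetilde{L})$, whence $\widetilde{L}\le M+K$ by the monotonicity of $f$ that you already have in hand. With that one-line patch the argument is airtight.
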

The proof of \lemmaref{lemma:nash_last_round} is presented in \appendixref{appendix:technical_B}.

\begin{proof}[Proof of \lemmaref{lemma:cw_strict_m}]
Let \( \hat{s}, s \in \Rdim{K} \) be such that \( \hat{s} \succ s \).  
We prove by induction on~$H$. 

\begin{itemize}
\item \emph{Base case} (\( H=1 \)):  
By \lemmaref{lemma:nash_last_round}, it holds that 
\begin{equation*}
    \pot{1}{K}{s} = \vecind{s}{k} + \frac{1}{\vecind{r}{k}} \quad \forall k \in [K]
\end{equation*}
for some \( r \in \simplex{} \).
Assume in contradiction that \( \pot{1}{K}{\hat{s}} \leq \pot{1}{K}{s} \).  
Then there exists \( \hat{r} \in \simplex{} \) such that  
\begin{equation}
    \label{eq:proof_of_coordinate_wise_monotonicity_base}
    \vecind{\hat{s}}{k} + \frac{1}{\vecind{\hat{r}}{k}} \leq \vecind{s}{k} + \frac{1}{\vecind{r}{k}} \quad \forall k \in \nset{K}.
\end{equation}

Since \( \hat{s} \succ s \), there exists \( i \in \nset{K} \) such that \( \vecind{\hat{s}}{i} > \vecind{s}{i} \).  
By \eqref{eq:proof_of_coordinate_wise_monotonicity_base}, it must hold that \( \vecind{\hat{r}}{i} > \vecind{r}{i} \).  
Then, there must exist \( j \in \nset{K}\setminus \{i\} \) such that \( \vecind{\hat{r}}{j} < \vecind{r}{j} \).  
% But since \( \vecind{\hat{s}}{j} \geq \vecind{s}{j} \), it follows that  
It follows that
\begin{align*}
    \vecind{\hat{s}}{j} + \frac{1}{\vecind{\hat{r}}{j}} 
    \stackrel{(a)}{\geq} 
    \vecind{s}{j} + \frac{1}{\vecind{\hat{r}}{j}} 
    \stackrel{(b)}{>} 
    \vecind{s}{j} + \frac{1}{\vecind{r}{j}},
\end{align*}
where $(a)$ holds as $\hat{s} \succeq s$, and $(b)$ holds since $\vecind{\hat{r}}{j} < \vecind{r}{j}$.
This contradicts \eqref{eq:proof_of_coordinate_wise_monotonicity_base}.

\item \emph{Inductive step} (\( H \to H+1 \)):  
Define 
\begin{align*}    
    \simplex{}_\omega 
    &= 
    \left\{ r \in \simplex{} \, \mid \, r \succeq \univec{ \min \{ \omega_{H+1}(s) ,\, \omega_{H+1}(\hat{s})\} }{K} \right\}.
\end{align*}
$\simplex{}_\omega$ is a compact set for which 
\(
    \corr_H(s), \corr_H(\hat{s}) \subseteq \simplex{}_\omega.
\)
Hence, by \lemmaref{lemma:continuity_of_the_value_function},
\begin{align}    
    \pot{H+1}{K}{s} 
    &= \min_{ r \in \simplex{}_\omega } \; \max_{ q \in \simplex{} } \pot{H}{K}{ s + q \oslash r }
    \label{eq:proof_of_coordinate_wise_monotonicity_inductive_step_value_functions_s}
    \\[6pt]
    \pot{H+1}{K}{\hat{s}} 
    &= \min_{ r \in \simplex{}_\omega } \; \max_{ q \in \simplex{} }  \pot{H}{K}{ \hat{s} + q \oslash r }.
    \label{eq:proof_of_coordinate_wise_monotonicity_inductive_step_value_functions_hat_s}
\end{align}

By \eqref{eq:proof_of_coordinate_wise_monotonicity_inductive_step_value_functions_hat_s}, 
there exists \( \hat{r} \in \simplex{}_\omega \) such that  
\begin{equation}
    \label{eq:monotonicity_of_value_function_b}
    \pot{H+1}{K}{\hat{s}} = \max_{ q \in \simplex{} } \pot{H}{K}{ \hat{s} + q \oslash \hat{r} }.
\end{equation}
Fix
\begin{equation}
    \label{eq:proof_of_coordinate_wise_monotonicity_def_of_bar_q}
    \bar{q} \in \arg \max_{q \in \simplex{}} \pot{H}{K}{ s + q \oslash \hat{r} }.
\end{equation}  
It holds that  
\begin{align*}
    \pot{H+1}{K}{s} 
    &\stackrel{(a)}{=} \;
    \min_{ r \in \simplex{}_\omega } \max_{ q \in \simplex{} } \pot{H}{K}{ s + q \oslash r }
    \\ 
    &\stackrel{(b)}{\leq} \;
    \max_{ q \in \simplex{} } \pot{H}{K}{ s + q \oslash \hat{r} }
    \\ 
    &\stackrel{(c)}{=} \;
    \pot{H}{K}{ s + \bar{q} \oslash \hat{r} }
    \\
    &\stackrel{(d)}{<} \;
    \pot{H}{K}{ \hat{s} + \bar{q} \oslash \hat{r} }
    \\
    &\stackrel{(e)}{\leq} \; 
    \max_{ q \in \simplex{} } \pot{H}{K}{ \hat{s} + q \oslash \hat{r} } 
    \\
    &\stackrel{(f)}{=} \;
    \pot{H+1}{K}{\hat{s}},
\end{align*}
where the steps are justified as follows:
\begin{enumerate}[label={$(\alph*)$}]
    \item 
    Follows from \eqref{eq:proof_of_coordinate_wise_monotonicity_inductive_step_value_functions_s}.
    \item 
    Holds since the minimizing $r$ yields an objective no larger than that induced by any $\hat{r}$.  
    \item 
    Follows from the choice of $\bar{q}$ in \eqref{eq:proof_of_coordinate_wise_monotonicity_def_of_bar_q}.
    \item 
    Follows from the induction hypothesis on $H$. 
    In particular, since \(s \prec \hat{s} \), it holds that
    \[
        s + \bar{q} \oslash \hat{r} \prec \hat{s} + \bar{q} \oslash \hat{r},
    \] 
    and thus, by the induction hypothesis,
    \[
        \pot{H}{K}{ s + \bar{q} \oslash \hat{r} } < \pot{H}{K}{ \hat{s} + \bar{q} \oslash \hat{r} }.
    \]
    \item 
    Holds since taking the maximum over $q$, with $\hat{s}$ and $\hat{r}$ fixed, can only increase the value.
    \item 
    Follows from the choice of $\hat{r}$ as one that satisfies \eqref{eq:monotonicity_of_value_function_b}.
\end{enumerate}
This establishes that \( \pot{H+1}{K}{s} < \pot{H+1}{K}{\hat{s}} \), completing the proof.
\end{itemize}
\end{proof}

\subsubsection{Proof of Lemma \ref{lemma:cw_weak_m}}

\begin{proof}[Proof of \lemmaref{lemma:cw_weak_m}]
    Let \( \hat{s}, s \in \Rdim{K} \) be such that \( \hat{s} \succeq s \).  
    For the case $H=0$, it holds that
    \begin{equation*}
        \pot{0}{K}{\hat{s}}
        \eqdef \;
        \max_{k \in [K]} \hat{s}(k) 
        \geq \;
        \max_{k \in [K]} s(k) 
        = \;
        \pot{0}{K}{s}.
    \end{equation*}
    Assume $H \geq 1$. 
    If $\hat{s} = s$ the statement is trivial.
    Otherwise, since $\hat{s} \succ s$, the result follows from \lemmaref{lemma:cw_strict_m}.
\end{proof}

\subsection{Proof of Lemma \ref{lemma:proof_opt_bookmaker_under_convexity}}
\label{appendix:proof_opt_bookmaker_under_convexity}

\begin{proof}[Proof of \lemmaref{lemma:proof_opt_bookmaker_under_convexity}]
    Let $H \geq 1,\, K \geq 2$ and $s \in \Rdim{K}$.
    The case $H=1$  is treated in \lemmaref{lemma:nash_last_round}, and for $K=1$ the result follows immediately.
    By \lemmaref{lemma:value_function_if_convex}, there exists $r \in \corr_H(s)$ for which
    \begin{equation*}
        \pot{H}{K}{s} = \max_{k \in [K]} \pot{H-1}{K}{s + \basis{k} \oslash r}.
    \end{equation*}
    By coordinate-wise strict monotonicity (\lemmaref{lemma:cw_strict_m}),
    there exists vector $u \in \Rpplus{K}$ such that 
    \begin{equation} \label{eq:opt_bookmaker_def_of_u}
        \pot{ H - 1 }{K}{s + \basis{k} \oslash r}
        = \pot{ H -1 }{K}{s} + u(k)  \quad \forall k \in [K].
    \end{equation}
    Assume, towards a contradiction, 
    that $r$ does not satisfy \eqref{eq:main_results_thm_the_game_dynamics};
    that is, there exists $i \in \nset{K}$ and $\varepsilon > 0$ such that
    \begin{equation}
        \label{eq:definition_of_kappa}
        \max_{k \in \nset{K}} u(k) - u(i) = \varepsilon.
    \end{equation}
    Relying on the continuity of the value function (\lemmaref{lemma:continuity_of_the_value_function})
    and the fact that 
    \begin{equation*}
        \lim_{r(i) \to 0} \pot{ H -1 }{K}{s + \frac{\basis{i}}{\vecind{r}{i}}} = \infty,
    \end{equation*}
    the intermediate value theorem guarantees the existence of a scalar 
    \(0 < \tilde{r} < r(i) \) for which
    \begin{equation}
        \label{eq:definition_of_new_tilde_r}
        \pot{ H -1 }{K}{s + \frac{\basis{i}}{\tilde{r}}}
        = \pot{ H -1 }{K}{s} + u(i) + \frac{\varepsilon}{2}.
    \end{equation}
    We construct a new action $\hat{r}$ as follows:
    \begin{align*}
        \hat{r}(k) = 
        \begin{cases}
            \; \tilde{r} & \text{if } k = i, \\[5pt]
            \; r(k) + \displaystyle\frac{r(i) - \tilde{r} }{K-1} & \text{if } k \neq i.
        \end{cases} 
    \end{align*}
    It is easy to verify that $\hat{r} \in \simplex{K-1}$.
    By acting with $\hat{r}$ the bookmaker's loss decreases:
    \begin{itemize}
    \item 
    \emph{For $k = i$}: 
    \begin{align*}
        \pot{ H -1 }{K}{s + \frac{\basis{i}}{\hat{r}(i)}} 
         &\stackrel{(a)}{=} 
        \pot{ H-1 }{K}{s} + u(i) + \frac{\varepsilon}{2} 
        \\
        &\stackrel{(b)}{ < } 
        \pot{ H -1 }{K}{s} + \max_{k \in \nset{K}} u(k),
    \end{align*}
    where $(a)$ follows from our choice of $\hat{r}(i) = \tilde{r}$ to satisfy \eqref{eq:definition_of_new_tilde_r},
    and $(b)$ follows from \eqref{eq:definition_of_kappa}, verifying a gap of $\frac{\varepsilon}{2} > 0$.
    \item 
    \emph{For $k \in \nset{K} \setminus \{i\}$}:
    \begin{align*}
        \pot{ H -1}{K}{s + \frac{\basis{k}}{\hat{r}(k)}}
        &\stackrel{(a)}{<} 
        \pot{ H -1}{K}{s + \frac{\basis{k}}{r(k)}}
        \\
        &\stackrel{(b)}{\leq} 
        \pot{ H -1}{K}{s} + \max_{k \in \nset{K}} u(k),
    \end{align*}
    where $(a)$ follows from coordinate-wise strict monotonicity (\lemmaref{lemma:cw_strict_m})
    combined with the fact that \( \hat{r}(k) > r(k) \),
    and $(b)$ follows from \eqref{eq:opt_bookmaker_def_of_u}.
    \end{itemize}
    It follows that $r$ is suboptimal, contradicting the optimality assumption.

    It remains to show that the optimal bookmaking action is unique.
    Let $\optr, \hat{r} \in \corr_H(s)$ be two vectors that satisfy \eqref{eq:main_results_thm_the_game_dynamics}.
    I.e., 
    \[
        \pot{ H-1 }{K}{s + \frac{\basis{k}}{\optr(k)}} 
        = \pot{ H-1 }{K}{s + \frac{\basis{k}}{r(k)}} \quad \forall k \in [K].
    \]
    Suppose, for the sake of contradiction,  
    that there exists $k \in [K]$ such that $\optr(k) \neq r(k)$,
    and without loss of generality, assume $r(k) > \optr(k)$.
    By coordinate-wise strict monotonicity (\lemmaref{lemma:cw_strict_m}),
    it follows that
    \[
        \pot{ H-1 }{K}{s + \frac{\basis{k}}{\optr(k)}} 
        < \pot{ H-1 }{K}{s + \frac{\basis{k}}{r(k)}},
    \]
    contradicting the optimality of $r$.
\end{proof}

\section{Proof of Lemma \ref{lemma:main_lemma_in_conditions}}
\label{appendix:value_vectors}

In this section, we prove \lemmaref{lemma:main_lemma_in_conditions}, which establishes necessary constraints for $H$-achievable vectors.
We begin by examining the partial derivatives of the polynomial $\denomfunction{H}{K}$, defined in \eqref{eq:main_results_def_of_denom_poly},
as presented in the following lemma.
\begin{lemma}[Partial Derivatives of $\denomfunction{H}{K}$] \label{lemma:der_of_denom}
For every \(H, K \in\Nplus\),
\begin{enumerate}[label={\arabic*.}, ref={\ref{lemma:der_of_denom}.\arabic*}]
\item \label{lemma:partial_derivatives_of_denom_1}
$\denomfunction{H}{K}$ is an infinitely differentiable function.
  
\item   \label{lemma:partial_derivatives_of_denom_2}
For every \(m \in \nset{K}\) and \( \mathfrak{I} \in \binom{\nset{K}}{m} \) 
\begin{align*}
    \frac{ \partial^m  \denomfunction{H}{K}}{\partial^m v_{\mathfrak{I}} } (v)
    = \denom{H}{K-m}{v\minind{\mathfrak{I}}},
\end{align*}
where for \( \mathfrak{I} = \{ i_1, \ldots, i_m \} \) the expression \( \partial^m v_{\mathfrak{I}} \) 
stands for \( \partial v(i_1) \ldots \partial v(i_m) \).
    
\item \label{lemma:partial_derivatives_of_denom_3}
For every \(k\in[K]\) and $m > 1$, 
\[
    \frac{\partial^m \denomfunction{H}{K}}{\partial v(k)^m}\, (v) = 0.
\]
\end{enumerate}
\end{lemma}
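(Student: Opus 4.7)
The plan is to dispatch Parts~1 and~3 as easy structural observations, while tackling Part~2 by induction on $m = |\mathfrak{I}|$ using the multilinearity of elementary symmetric polynomials. Part~1 is immediate: $\denomfunction{H}{K}$ is a finite linear combination (with coefficients $\risingfact{(-H)}{K-m}$) of ESPs, each of which is itself a polynomial in the entries of $v$, so $\denomfunction{H}{K}$ is a polynomial and therefore $C^\infty$.

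For Part~2, I would first establish the base case $m=1$, $\mathfrak{I}=\{k\}$. The ESP recurrence (\lemmaref{lemma:elementary_symmetric_polynomial_recurrence_relation}) gives
\[
    \esp{m}{v} = v(k)\cdot \esp{m-1}{v\minind{k}} + \esp{m}{v\minind{k}},
\]
exhibiting $\esp{m}{v}$ as linear in $v(k)$ with slope $\esp{m-1}{v\minind{k}}$, since both terms on the right are independent of $v(k)$. Differentiating the definition of $\denomfunction{H}{K}$ termwise and relabeling the summation index (using the convention $\esp{-1}=0$ to drop the vanishing $m=0$ term) yields
\[
    \frac{\partial \denomfunction{H}{K}}{\partial v(k)}(v)
    = \sum_{m=0}^{K-1} \risingfact{(-H)}{K-1-m}\,\esp{m}{v\minind{k}}
    = \denom{H}{K-1}{v\minind{k}},
\]
as desired. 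The inductive step is routine: writing $\mathfrak{I} = \mathfrak{I}' \cup \{i_m\}$ with $|\mathfrak{I}'|=m-1$, apply the induction hypothesis to pull out the first $m-1$ derivatives, obtaining $\denom{H}{K-m+1}{v\minind{\mathfrak{I}'}}$, and then apply the base case to the polynomial $\denomfunction{H}{K-m+1}$ with respect to the remaining variable $v(i_m)$ to arrive at $\denom{H}{K-m}{v\minind{\mathfrak{I}}}$. The order of differentiation is immaterial because partial derivatives of smooth functions commute, so the result is independent of the order in which the elements of $\mathfrak{I}$ are processed.

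Part~3 follows as an immediate corollary of Part~2 applied to $\mathfrak{I}=\{k\}$: the first derivative $\denom{H}{K-1}{v\minind{k}}$ depends only on $v\minind{k}$ and is therefore independent of $v(k)$, so the second and all higher partial derivatives with respect to $v(k)$ vanish. This is really a reflection of the broader fact that $\denomfunction{H}{K}$ is multilinear in the entries of $v$, as a linear combination of multilinear ESPs. No part of the argument poses a genuine obstacle; the only care needed is in the index bookkeeping of the base case of Part~2, which is driven entirely by the ESP recurrence and the convention $\esp{-1}=0$.
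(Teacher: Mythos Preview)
Your proposal is correct and follows essentially the same approach as the paper: all three parts are handled the same way (polynomiality for Part~1, induction on $|\mathfrak{I}|$ for Part~2, and the corollary of Part~2 for Part~3). The only cosmetic difference is that in the base case of Part~2 you differentiate the defining sum term-by-term via the ESP recurrence, whereas the paper invokes the pre-packaged recurrence $\denom{H}{K}{v} = v(k)\,\denom{H}{K-1}{v\minind{k}} - H\,\denom{H-1}{K-1}{v\minind{k}}$ (\lemmaref{lemma:recurrence_relation_or_the_denom}) and reads off the derivative directly; the two computations are equivalent.
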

The proof of \lemmaref{lemma:der_of_denom} is provided in \appendixref{appendix:technical_C}
and builds on the recurrence relation established in \lemmaref{lemma:recurrence_relation_or_the_denom}:
\[
     \denom{ H }{K}{v} = \vecind{v}{k} \cdot \denomfunction{ H }{K-1}( v\minind{k} ) - H \cdot \denomfunction{ H-1 }{K-1}(v\minind{k} ),
\]
for all $H,K \in \Nplus$.

For the analysis, we use an equivalent form of the polynomial \(\denomfunction{H}{K}\), given by
\begin{equation}
  \label{eq:def_of_ppoly_explicit}
  \denom{H}{K}{v}
  = \sum_{m=0}^{K} (-1)^m \fallingfact{H}{m}\,\esp{K-m}{v}
\end{equation}
for all $H,K\in\Nfield$, which follows from \lemmaref{lemma:falling_factorial_identities_3}.
For convenience, we introduce an alternative notation for the polynomial \(\denomfunction{H}{K}\).
For $H \in \Nplus$ and $K \in \Nfield$, define the polynomial
\begin{equation}
    \label{eq:def_of_num_poly}
    \num{H}{K}{v} \; \eqdef \; H \cdot \denom{H-1}{K}{v}.
\end{equation}
This allows us to express \lemmaref{lemma:recurrence_relation_or_the_denom} in the following equivalent form:  
\begin{equation}    
    \label{eq:appendix_expansion_of_denom}
     \denom{ H }{K}{v} = \vecind{v}{k} \cdot \denomfunction{ H }{K-1}( v\minind{k} ) - \numfunction{ H }{K-1}(v\minind{k} ),
\end{equation}
for all \(H,K \in \Nplus \).

We prove \lemmaref{lemma:proof_of_bp_2} using the equivalent from of \eqref{eq:necessary_conditions_proof_value_of_v_k},
expressed with the notation $\num{H}{K}{v}$:
\begin{equation*} 
    \vecind{v}{k} 
    = \frac{
        \num{ H }{K-1}{v\minind{k}}
    }{
        \denom{ H }{K-1}{v\minind{k}}
    }.
\end{equation*}
To prove \lemmaref{lemma:proof_of_bp_3}, we will show that
\begin{equation} \label{eq:derivation_of_proof_of_bp_3}
    \textstyle \forall\, m \in [K],\; \forall \, \mathfrak{I} \in \binom{\nset{K}}{m}, \quad \denom{H}{K-m}{v\minind{\mathfrak{I}}} > 0.
\end{equation}
The following lemma establishes that this condition implies the same result:
\begin{lemma} \label{lemma:derivation_of_proof_of_bp_3}
    Let \(H,K\in\Nplus\) and \(v\in\Rdim{K}\).  
    If \(v\) satisfies the constraint in \lemmaref{lemma:proof_of_bp_2}, then the positivity condition  
    \eqref{eq:derivation_of_proof_of_bp_3} implies that \(v\) also satisfies the constraint in \lemmaref{lemma:proof_of_bp_3}.
\end{lemma}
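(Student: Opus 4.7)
My plan is to exploit the multilinear structure of $\denomfunction{H}{K}$ that follows from \lemmaref{lemma:der_of_denom}. Specifically, \lemmaref{lemma:partial_derivatives_of_denom_3} implies that $\denom{H}{K}{\cdot}$ is linear in each coordinate $v(k)$ separately (all pure second derivatives vanish), so it is a multilinear polynomial in $v$. Consequently, if we write $u = v + \delta$ with $\delta = u - v \succeq 0$, the Taylor expansion of $\denomfunction{H}{K}$ about $v$ terminates after the squarefree multi-indices:
\[
    \denom{H}{K}{u} \; = \; \sum_{\mathfrak{I} \subseteq [K]} \left( \prod_{i \in \mathfrak{I}} \delta(i) \right) \cdot \frac{\partial^{|\mathfrak{I}|} \denomfunction{H}{K}}{\partial^{|\mathfrak{I}|} v_{\mathfrak{I}}}(v).
\]

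Next, I would identify each of these partial derivatives via \lemmaref{lemma:partial_derivatives_of_denom_2}, which gives $\frac{\partial^{|\mathfrak{I}|} \denomfunction{H}{K}}{\partial^{|\mathfrak{I}|} v_{\mathfrak{I}}}(v) = \denom{H}{K-|\mathfrak{I}|}{v\minind{\mathfrak{I}}}$ (with the convention $\denomfunction{H}{0} \equiv 1$ for the empty-vector case $\mathfrak{I} = [K]$). Substituting, the expansion becomes
\[
    \denom{H}{K}{u} \; = \; \denom{H}{K}{v} + \sum_{\emptyset \neq \mathfrak{I} \subseteq [K]} \left( \prod_{i \in \mathfrak{I}} \delta(i) \right) \denom{H}{K-|\mathfrak{I}|}{v\minind{\mathfrak{I}}}.
\]

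Now I combine the two hypotheses. By \lemmaref{lemma:proof_of_bp_2}, the vector $v$ satisfies $\denom{H}{K}{v} = 0$, so the $\mathfrak{I} = \emptyset$ term vanishes. For any non-empty $\mathfrak{I}$, the product $\prod_{i \in \mathfrak{I}} \delta(i)$ is non-negative since $\delta \succeq 0$, and the factor $\denom{H}{K-|\mathfrak{I}|}{v\minind{\mathfrak{I}}}$ is strictly positive by the hypothesis \eqref{eq:derivation_of_proof_of_bp_3}. Finally, since $u \succ v$ there is at least one index $k_0$ with $\delta(k_0) > 0$, so the singleton term $\mathfrak{I} = \{k_0\}$ contributes $\delta(k_0) \cdot \denom{H}{K-1}{v\minind{k_0}} > 0$. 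Therefore the entire sum is strictly positive, yielding $\denom{H}{K}{u} > 0$, which is exactly the constraint in \lemmaref{lemma:proof_of_bp_3}.

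The only real subtlety is justifying the multilinear (squarefree) Taylor expansion cleanly from \lemmaref{lemma:partial_derivatives_of_denom_3}; this is routine since a polynomial whose second pure partial in every variable vanishes is affine in each variable separately, and iterating this observation over the $K$ coordinates gives exactly the squarefree expansion above. Everything else is just bookkeeping with signs and set sizes, so I do not expect a significant obstacle.
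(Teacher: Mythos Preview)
Your proof is correct and follows the same underlying idea as the paper: exploit the multilinear structure of $\denomfunction{H}{K}$ coming from \lemmaref{lemma:der_of_denom}, combine $\denom{H}{K}{v}=0$ with positivity of the derivatives at $v$, and conclude $\denom{H}{K}{u}>0$ for $u\succ v$.

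The execution differs in a way worth noting. The paper's proof records three facts---$\denom{H}{K}{v}=0$, $\nabla\denom{H}{K}{v}\succ 0$, and the vanishing of all pure second partials---and then concludes. Taken literally, those three items are not enough (consider $f(x,y)=x+y-xy$ at the origin); the argument implicitly leans on the full positivity hypothesis \eqref{eq:derivation_of_proof_of_bp_3} for all $m$, which supplies positivity of every mixed partial. Your squarefree Taylor expansion makes this dependence explicit: each nonempty $\mathfrak{I}$ contributes $\bigl(\prod_{i\in\mathfrak{I}}\delta(i)\bigr)\,\denom{H}{K-|\mathfrak{I}|}{v\minind{\mathfrak{I}}}\ge 0$, with the singleton $\{k_0\}$ term strictly positive. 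So your route is the same idea carried out more carefully, and it transparently uses the entire hypothesis rather than just the $m=1$ case.
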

\begin{proof}[Proof of \lemmaref{lemma:derivation_of_proof_of_bp_3}] 
For a fixed $k \in [K]$, it holds that
\begin{equation} \label{eq:proof_of_thm_partial_derivative}
    \frac{ \partial  \denomfunction{H}{K}}{\partial v(k) } (v)
    \stackrel{(a)}{=} \denom{H}{K-1}{v\minind{k}}  
    \stackrel{(b)}{>} 0\,,
\end{equation}
where $(a)$ follows from \lemmaref{lemma:partial_derivatives_of_denom_2} and $(b)$ follows by \eqref{eq:derivation_of_proof_of_bp_3}.
Thus, 
\begin{itemize}
    \item By \lemmaref{lemma:proof_of_bp_2}, $\denom{H}{K}{v} = 0$ .
    \item By \eqref{eq:proof_of_thm_partial_derivative}, \( \nabla \denom{H}{K}{v} \) is a strictly positive vector.
    \item By \lemmaref{lemma:partial_derivatives_of_denom_3}, for every \(k\in[K]\) and $m > 1$, 
    \[
        \frac{\partial^m \denomfunction{H}{K}}{\partial v(k)^m}\, (v) = 0,
    \]
    indicating \(\denomfunction{H}{K}\) has no concavity along any axis.
\end{itemize}
We conclude that for all \(  u \in \Rdim{K}, \ \ u \succ v \implies \denom{H}{K}{u} > 0 \),
as \lemmaref{lemma:proof_of_bp_3} states.
\end{proof}

The following lemma provides a lower bound on any $H$-achievable vector.
\begin{lemma}[Lower Bound on $H$-Achievable Vectors] \label{lemma:lower_bound_for_value_vectors}  
    For all $H,K \in \Nplus$, if $v \in \vvset{H,K}$ then $v \succeq \univec{H}{K}$.
    Moreover, when $K>1$, the inequality is strict: $v \succ \univec{H}{K}$.
\end{lemma}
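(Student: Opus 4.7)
The plan is to reduce the statement to an elementary pointwise argument by exploiting the freedom to pick a decisive gambler sequence. Fix any $k \in [K]$ and consider the constant sequence $q_1 = q_2 = \cdots = q_H = \basis{k}$. Since $v \in \vvset{H,K}$, there exists a strategy $\bm^H$ achieving $v$; along this particular history the achievability condition \eqref{eq:def_bpf} collapses to
\begin{equation*}
    v(k) \;=\; \sum_{h=1}^{H} \frac{1}{r_h(k)},
\end{equation*}
where $r_1,\ldots,r_H$ are the odds the strategy outputs against this fixed history. Because $r_h \in \simplex{K-1}$ implies $r_h(k) \le 1$, every summand is at least $1$, so $v(k) \ge H$. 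Applying this for each $k$ yields the weak bound $v \succeq \univec{H}{K}$.

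For the strict version when $K > 1$, I would argue by contradiction that the boundary case $r_h(k) = 1$ cannot occur along the all-$k$ history. If it did, then $r_h(j) = 0$ for every $j \neq k$ (such a $j$ exists since $K > 1$). Now construct a \emph{perturbed} decisive sequence: leave $q_1 = \cdots = q_{h-1} = \basis{k}$ unchanged, and set $q_h = q_{h+1} = \cdots = q_H = \basis{j}$. Because $\bm^H$ is a deterministic sequence of mappings whose round-$h$ output depends only on past bets (see \eqref{eq:def_strategy}), the bookmaker produces the same $r_1,\ldots,r_h$ as in the all-$k$ run up through round $h$. Along the perturbed sequence, round $h$ contributes $1/r_h(j) = \infty$ to the achieved value in coordinate $j$, contradicting $v(j) \in \mathbb{R}$ guaranteed by $v \in \vvset{H,K} \subseteq \Rdim{K}$. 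Therefore $r_h(k) < 1$ strictly for every $h$, giving $1/r_h(k) > 1$ and hence $v(k) > H$.

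The only subtlety is the usual one for strategies defined sequentially: one must invoke determinism of $\bm^H$ and the prefix property of histories to ensure the two gambler sequences I compare induce identical bookmaker outputs through round $h$. This is immediate from \eqref{eq:def_strategy} once one writes $r_h$ as a function of $q^{h-1}$ only. I do not foresee any serious obstacle—the argument is essentially a one-line use of $r_h(k) \le 1$ combined with a boundary-exclusion argument driven by the finiteness of $v$.
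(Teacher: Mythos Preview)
Your proof is correct and follows essentially the same approach as the paper's: both fix the all-$k$ decisive sequence to obtain $v(k)=\sum_h 1/r_h(k)\ge H$, and both handle the strict case for $K>1$ by arguing that $r_h(k)=1$ would force $r_h(j)=0$ and hence an infinite payout in coordinate $j$. Your write-up is slightly more explicit than the paper's (you spell out the perturbed sequence and invoke the prefix/determinism property of $\bm^H$ to guarantee the same $r_h$ is produced), but the underlying idea is identical.
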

This lower bound follows from the fact that the gambler may commit to a single outcome in each of the remaining $H$ rounds.
According to the definition of the ESP (\definitionref{def:esp}), it follows that for any $H$-achievable vector $v$, 
\begin{equation} \label{eq:pos_esp}
    \textstyle \forall\, m \in [K],\; \forall \, \mathfrak{I} \in \binom{\nset{K}}{m}, \quad \esp{K-m}{v\minind{\mathfrak{I}}} > 0.
\end{equation}
Consequently, throughout the proof of \lemmaref{lemma:main_lemma_in_conditions}, we assume that no division by zero occurs when dividing by sums or products of elements of $v$.
A formal proof of \lemmaref{lemma:lower_bound_for_value_vectors} is provided in \appendixref{appendix:technical_C}.

\begin{proof}[Proof of \lemmaref{lemma:main_lemma_in_conditions}]
The case $K=1$ is immediate from \definitionref{def:bpf}; 
for completeness, a formal proof is provided in \appendixref{appendix:technical_C}.  
We then consider \(K\ge2\), and prove by induction on \(H\).

\paragraph{Base case ($H=1$).}
If $v \in \vvset{1,K}$ then, by \definitionref{def:bpf}, there exists $r \in \Rdim{K}$ such that
\begin{enumerate}[label=1.\roman*., ref=1.\roman*]
    \item \label{base_case_condition_1} 
    $\sum_{k=1}^K \vecind{r}{k} = 1$.
    
    \item \label{base_case_condition_3} 
    $\forall k \in \nset{K}, \; \vecind{v}{k} = \frac{1}{\vecind{r}{k}}$.
\end{enumerate}
The following hold for every $k\in [K]$:
\begin{align}  
    \denom{0}{K}{v} &= \esp{K}{v} \label{eq:base_denom_0_K}
    \\
    \denom{0}{K-1}{ v\minind{k} } &= \esp{K-1}{ v\minind{k} }  \label{eq:base_denom_0_K-1}
    \\
    \denom{1}{K-1}{ v\minind{k} } &= 
        \textstyle\prod_{\indneq{i}{\nset{K}}{k}} \vecind{v}{i} \left(1 - \sum_{\indneq{j}{\nset{K}}{k}} \frac{1}{\vecind{v}{j}}\right)  \label{eq:base_denom_1_K-1}
    \\
    \num{1}{K-1}{ v\minind{k} } &= \esp{K-1}{ v\minind{k} } \label{eq:base_num_1_K-1}
\end{align}
% =========================================================================================================
% =========================================================================================================
% =========================================================================================================
Condition \ref{base_case_condition_3} is satisfied if and only if
\begin{equation}
    \label{eq:proof_of_bpe_r_equation}
    \forall k \in [K] \quad 
    \vecind{r}{k} 
    = \frac{1}{\vecind{v}{k}}.
\end{equation}
For every $k \in [K]$, it holds that 
\begin{align*}
    \frac{1}{\vecind{v}{k}} 
    &\stackrel{(a)}{=} \frac{
        \esp{K-1}{v\minind{k}}
    }{
        \esp{K}{v}
    }
    \\
    &\stackrel{(b)}{=} \frac{
        \denom{0}{K-1}{ v\minind{k} }
    }{
        \denom{0}{K}{v}
    },
\end{align*}
where  $(a)$ follows from \definitionref{def:esp} of the ESP, 
and $(b)$ follows from \eqref{eq:base_denom_0_K-1} and \eqref{eq:base_denom_0_K}.
Therefore, \lemmaref{lemma:proof_of_bp_1} holds for the base case.

% =========================================================================================================
% =========================================================================================================
% =========================================================================================================

The vector $r$, which is defined as in \eqref{eq:proof_of_bpe_r_equation}, should satisfy Condition \ref{base_case_condition_1}; i.e.,
\begin{equation} \label{eq:proof_of_bpe_base_case_sum_of_r_by_v}    
    \sum_{k \in [K]} \frac{1}{v(k)} = 1.
\end{equation}
For every $k \in \nset{K}$, it holds that
\begin{align*}
    \frac{1}{\vecind{v}{k}} 
    &= \;
    1 - \sum_{\indneq{j}{\nset{K}}{k}} \frac{1}{\vecind{v}{j}}
    \\
    &= \;
    \frac{ \prod_{\indneq{i}{\nset{K}}{k}} \vecind{v}{i} }
            { \prod_{\indneq{i}{\nset{K}}{k}} \vecind{v}{i} }
      \left(1 -  \sum_{\indneq{j}{\nset{K}}{k}} \frac{1}{\vecind{v}{j}} \right)
    \\ 
    &= \;  
    \frac{\denom{1}{K-1}{ v\minind{k} }}{\num{1}{K-1}{ v\minind{k} }},
\end{align*}
where the last equality follows from \eqref{eq:base_denom_1_K-1} and \eqref{eq:base_num_1_K-1}.
Therefore,
\begin{equation}
    \label{eq:value_of_v_k_in_terms_of_N_and_D_base_case}
    \vecind{v}{k} = \frac{\num{1}{K-1}{ v\minind{k} }}{\denom{1}{K-1}{ v\minind{k} }},
\end{equation}
and \lemmaref{lemma:proof_of_bp_2} holds for the base case.

% =========================================================================================================
% =========================================================================================================
% =========================================================================================================

We prove that \eqref{eq:derivation_of_proof_of_bp_3} holds by induction on \(m\).  
Combined with \lemmaref{lemma:derivation_of_proof_of_bp_3}, this completes the proof of \lemmaref{lemma:proof_of_bp_3} for the base case.
\begin{itemize}
\item 
\emph{Base case} \(( m = 1 )\):
For every $k \in [K]$, it holds that  
\begin{align*}
    0 
    &\stackrel{(a)}{=} 
    \denom{1}{K}{v}
    \\
    &\stackrel{(b)}{=} v(k) \cdot \denom{1}{K-1}{ v\minind{k} } - \num{1}{K-1}{ v\minind{k} }
    \\ 
    &\stackrel{(c)}{=}  v(k) \cdot \denom{1}{K-1}{ v\minind{k} } - \esp{K-1}{v\minind{k}}
\end{align*}
where 
$(a)$ follows from \lemmaref{lemma:proof_of_bp_2}; 
$(b)$ follows from the identity in \eqref{eq:appendix_expansion_of_denom};
and $(c)$ follows from \eqref{eq:base_num_1_K-1}.
By \eqref{eq:pos_esp}, both $v(k)$ and $\esp{K-1}{v\minind{k}}$ are $> 0$, 
and therefore,
\( \mathcal{D}_{1,K-1}\left( v\minind{k} \right) \) 
must be $ > 0$. 

\item
\emph{Inductive step} \( (m \to m+1) \):
Let \(m \in [K-1] \,, \mathfrak{I} \in \binom{\nset{K}}{m} \) and \( k \in \nset{K} \setminus \mathfrak{I} \) be fixed.
By \eqref{eq:appendix_expansion_of_denom}, it holds that
\begin{equation*}
    \denom{1}{K-m}{v\minind{\mathfrak{I}}} 
    = v(k) \cdot \denom{1}{K-(m+1)}{ v\minind{\mathfrak{I} \cup \{k\}} } 
        - \esp{K-(m+1)}{ v\minind{\mathfrak{I} \cup \{k\}} }.
\end{equation*}
From the induction hypothesis (on $m$) The LHS of the equation is positive. 
By \eqref{eq:pos_esp}, the term
\( \denom{1}{K-(m+1)}{ v\minind{\mathfrak{I} \cup \{k\}} } \)
must be $ > 0$. 
\end{itemize}

% =========================================================================================================
% =========================================================================================================
% =========================================================================================================
% =========================================================================================================
% =========================================================================================================
% =========================================================================================================
% =========================================================================================================
% =========================================================================================================
% =========================================================================================================

\bigbreak

\paragraph{Inductive step ($H \to H+1$).}
If $v \in \vvset{H+1,K}$ then
there exists an action $r \in \Rdim{K}$ such that 
\begin{enumerate}[label=2.\roman*., ref=2.\roman*]
    \item \label{condition_induction_1} 
    $\sum_{k=1}^K \vecind{r}{k} = 1$.
    
    \item \label{condition_induction_3} 
    For every $k \in \nset{K}$, the vector $\prescript{k}{}{v}$, defined as
    \begin{equation}
        \label{eq:lemma_definition_of_k_v}
        \vecind{\prescript{k}{}{v}}{i}
        \eqdef 
        \begin{cases}
            \vecind{v}{i} - \frac{1}{\vecind{r}{k}} & \text{if } k = i, \\
            \vecind{v}{i} & \text{otherwise}.
        \end{cases} 
    \end{equation} 
    is $H$-achievable.
\end{enumerate}
Note that 
\begin{equation} \label{eq:helper_eq_proof_res}
    \prescript{k}{}{v}\minind{k} = v\minind{k}, \quad \forall k \in [K].
\end{equation}
For every $k \in [K]$, it holds that
\begin{align}
    0 
    &\stackrel{(a)}{=} \;
    \denom{H}{K} {\prescript{k}{}{v} }  \label{eq:proof_of_bpe_inductive_step_eq_1}
    \\
    &\stackrel{(b)}{=} \; 
     \vecind{\prescript{k}{}{v}}{k} \cdot \denom{H}{K-1}{ \prescript{k}{}{v}\minind{k} }
        - \num{H}{K-1}{ \prescript{k}{}{v}\minind{k} } \nonumber
    \\ 
    &\stackrel{(c)}{=} \;
    \left( \vecind{v}{k} - \frac{1}{\vecind{r}{k}} \right)  \cdot \denom{H}{K-1}{ v\minind{k} }
        - \num{H}{K-1}{ v\minind{k} } \nonumber
    \\ 
    &= \;
    \vecind{v}{k} \cdot \denom{H}{K-1}{ v\minind{k} } - \num{H}{K-1}{ v\minind{k} }
        - \frac{\denom{H}{K-1}{ v\minind{k} }}{\vecind{r}{k}} \nonumber
    \\ 
    &\stackrel{(d)}{=} \;
    \denom{H}{K}{ v } - \frac{\denom{H}{K-1}{ v\minind{k} }}{\vecind{r}{k}},  \label{eq:last_eq_in_proof_of_bpe_inductive_step_eq_1}
\end{align}
where the steps are justified as follows:
\begin{enumerate}[label={$(\alph*)$}]
    \item Follows from Condition \ref{condition_induction_3} and the induction hypothesis on \lemmaref{lemma:proof_of_bp_2}.
    \item Follow from the identity in \eqref{eq:appendix_expansion_of_denom}.
    \item Follow from the definition of the vector $\prescript{k}{}{v}$ in \eqref{eq:lemma_definition_of_k_v} and \eqref{eq:helper_eq_proof_res}.
    \item Follow from the identity in \eqref{eq:appendix_expansion_of_denom}.
\end{enumerate}
Condition \ref{condition_induction_3}, \eqref{eq:helper_eq_proof_res} and the induction hypothesis on \lemmaref{lemma:proof_of_bp_3}, 
imply
\begin{equation}
    \label{eq:proof_of_bpe_inductive_step_eq_2}
    \denom{H}{K-1}{v\minind{k}} > 0 \quad \forall k \in [K].
\end{equation}
By \eqref{eq:last_eq_in_proof_of_bpe_inductive_step_eq_1}, \eqref{eq:proof_of_bpe_inductive_step_eq_2} 
and the fact that $r(k)$ must be $>0$, we conclude that  
\begin{equation}
    \label{eq:proof_of_bpe_denom_in_r_is_non_zero}
   \denom{H}{K}{v} \neq 0,
\end{equation}
and we can divide by this term.
We obtain that $r$ is given by
\begin{equation}
    \label{eq:lemma_value_vector_inductive_step_formula_for_r}
    \vecind{r}{k} = \frac{
                \denom{ H }{K-1}{ v\minind{k} }
            }{
                \denom{ H }{K}{v}
            }
        \quad \quad \forall k \in \nset{K}.
\end{equation}
This completes the inductive step for \lemmaref{lemma:proof_of_bp_1}.

% =========================================================================================================
% =========================================================================================================
% =========================================================================================================

Let $k \in \nset{K}$ be fixed and let $r$ be a vector that is generated as in
\eqref{eq:lemma_value_vector_inductive_step_formula_for_r}.
$r$ satisfies Condition \ref{condition_induction_3} if and only if 
\begin{equation}
    \label{eq:semi_value_vector_inductive_step_r_k_as_sum}
    r(k) = 1 - \sum_{\indneq{i}{\nset{K}}{k}} r(i).
\end{equation}
By \eqref{eq:proof_of_bpe_denom_in_r_is_non_zero}, \eqref{eq:semi_value_vector_inductive_step_r_k_as_sum} holds if and only if 
\begin{equation}
    \label{eq:semi_value_vector_inductive_step_r_k_as_sum_mult}
    \denom{H}{K}{v} \cdot r(k) = \denom{H}{K}{v} - \denom{H}{K}{v} \sum_{\indneq{i}{\nset{K}}{k}} r(i).
\end{equation}
By \eqref{eq:lemma_value_vector_inductive_step_formula_for_r},
the LHS of \eqref{eq:semi_value_vector_inductive_step_r_k_as_sum_mult} is 
\(
    \denom{H}{K-1}{v\minind{k}},
\)
and for every $i \in \nset{K} \setminus \{k\}$,
\begin{align*}
    \denom{H}{K}{v} \cdot\vecind{r}{i} 
    = \;
    &\denom{H}{K-1}{v\minind{i}}
    \\
    = \;
    &\vecind{v}{k} \cdot \denom{H}{K-2}{v\minind{\{i,k\}}} - \num{H}{K-2}{v\minind{\{i,k\}}},
\end{align*}
where the second equality follows from the identity in \eqref{eq:appendix_expansion_of_denom}. 
Thus,
\begin{equation*}
    \denom{H}{K}{v} \sum_{\indneq{i}{\nset{K}}{k}} \vecind{r}{i} 
    = \vecind{v}{k}  \sum_{\indneq{i}{\nset{K}}{k}} \denom{H}{K-2}{v\minind{\{i,k\}}} 
        - \sum_{\indneq{i}{\nset{K}}{k}} \num{H}{K-2}{v\minind{\{i,k\}}},
\end{equation*}
and \eqref{eq:semi_value_vector_inductive_step_r_k_as_sum_mult} holds if and only if
\[
    \denom{H}{K-1}{v\minind{k}}
    = 
    \denom{H}{K}{v} - 
    \vecind{v}{k}  \sum_{\indneq{i}{\nset{K}}{k}} \denom{H}{K-2}{v\minind{\{i,k\}}} 
    +
    \sum_{\indneq{i}{\nset{K}}{k}} \num{H}{K-2}{v\minind{\{i,k\}}}
    .
\]
Further expanding $\denom{H}{K}{v}$ using \eqref{eq:appendix_expansion_of_denom} and rearrange, we get that
\begin{equation}
    \label{eq:proof_of_bp_inductive_step_part_2_main_eq}
    \mathtt{A} = v(k) \cdot \mathtt{B},
\end{equation}
with
\begin{align} 
    \mathtt{A} 
    &= 
    \denom{H}{K-1}{v\minind{k}}
        + \num{H}{K-1}{v\minind{k}}
        - \sum_{\indneq{i}{\nset{K}}{k}} \num{H}{K-2}{v\minind{\{i,k\}}},
    \label{eq:proof_of_bp_inductive_step_part_2_def_A}
    \\
    \mathtt{B} 
    &=
    \denom{H}{K-1}{v\minind{k}} - \sum_{\indneq{i}{\nset{K}}{k}} \denom{H}{K-2}{v\minind{\{i,k\}}}.
    \label{eq:proof_of_bp_inductive_step_part_2_def_B}
\end{align}
A simplification of 
\equationref{eq:proof_of_bp_inductive_step_part_2_def_A,eq:proof_of_bp_inductive_step_part_2_def_B}
is presented in 
\appendixref{appendix:proof_of_bp_inductive_step_part_2_simplification}, 
yielding:
\begin{equation}
    \label{eq:proof_of_bp_inductive_step_part_2_simplification}
    \mathtt{A} = \num{H+1}{K-1}{v\minind{k}} 
    \quad 
    \mathtt{B} = \denom{H+1}{K-1}{v\minind{k}}.
\end{equation}
Thus, \eqref{eq:proof_of_bp_inductive_step_part_2_main_eq} holds if and only if 
\begin{equation}
    \label{eq:proof_of_bp_inductive_step_part_2_main_eq_expanded}
     \num{H+1}{K-1}{v\minind{k}} = v(k) \cdot \denom{H+1}{K-1}{v\minind{k}}.
\end{equation}
The LHS of \eqref{eq:proof_of_bp_inductive_step_part_2_main_eq_expanded} $ \neq 0$;
otherwise, by \eqref{eq:def_of_num_poly}, it implies that
\(
    \denom{H}{K-1}{v\minind{k}} = 0
\), which contradicts \eqref{eq:proof_of_bpe_inductive_step_eq_2}.
By \lemmaref{lemma:lower_bound_for_value_vectors}, $v(k) > 0$. 
It follows that
\(
    \denom{H+1}{K-1}{v\minind{k}} \neq 0,
\)
allowing us to divide both sides of \eqref{eq:proof_of_bp_inductive_step_part_2_main_eq_expanded}, yielding  
\[
    v(k) = \frac{\num{H+1}{K-1}{v\minind{k}}}{\denom{H+1}{K-1}{v\minind{k}}}.
\]
This completes the inductive step for \lemmaref{lemma:proof_of_bp_2}.

% =========================================================================================================
% =========================================================================================================
% =========================================================================================================

We prove that \eqref{eq:derivation_of_proof_of_bp_3} holds by induction on \(m\).  
Combined with \lemmaref{lemma:derivation_of_proof_of_bp_3}, this completes the proof of \lemmaref{lemma:proof_of_bp_3}.
\begin{itemize}
\item
\emph{Base case} \(( m = 1 ) \):
For every $k \in [K]$, it holds that
\begin{align*}
    0 
    &\stackrel{(a)}{=} 
    \denom{H+1}{K}{v}
    \\ 
    &\stackrel{(b)}{=} 
    v(k) \cdot \denom{H+1}{K-1}{ v\minind{k} } - (H+1) \cdot \denom{H}{K-1}{ v\minind{k} }    
\end{align*}
where $(a)$ follows from \lemmaref{lemma:proof_of_bp_2} 
and $(b)$ follows from \lemmaref{lemma:recurrence_relation_or_the_denom}. 
By \eqref{eq:proof_of_bpe_inductive_step_eq_2} and \lemmaref{lemma:lower_bound_for_value_vectors},
 \( \denom{H+1}{K-1}{ v\minind{k} } \) must be $ > 0$. 

\item
\emph{Inductive step} \(( m \to m+1 )\):
Let $m \in [K-1]$,  \( \mathfrak{I} \in \binom{\nset{K}}{m}\) and $k \in \nset{K} \setminus \mathfrak{I}$ be fixed.
By the recurrence relation in \lemmaref{lemma:recurrence_relation_or_the_denom},
\[
    \denom{H+1}{K-m}{v\minind{\mathfrak{I}}} 
    =
     v(k) \cdot \denom{H+1}{K-(m+1)}{ v\minind{\mathfrak{I} \cup \{k\}} } 
        - (H+1) \cdot \denom{H}{K-(m+1)}{ {v}\minind{\mathfrak{I} \cup \{k\}} }
\]
By the induction hypothesis on $m$, the LHS of the equation is $> 0$.
By \lemmaref{lemma:lower_bound_for_value_vectors}, $v(k) > 0$.
Similarly to \eqref{eq:helper_eq_proof_res}, 
it holds that 
\(
    {v}\minind{\mathfrak{I} \cup \{k\}} = \prescript{k}{}{v}\minind{\mathfrak{I} \cup \{k\}}
\).
By Condition \ref{condition_induction_3} and the induction hypothesis (on $H$) on \lemmaref{lemma:proof_of_bp_3},
\(
    \denom{H}{K-(m+1)}{ {v}\minind{\mathfrak{I} \cup \{k\}} } > 0
\).
Therefore, the term \( \denom{H}{K-(m+1)}{ v\minind{\mathfrak{I} \cup \{k\}} } \) must be $ > 0$.
\end{itemize}
\end{proof}

\begin{remark} \label{remark:validity_of_r}
For every \( v \in \vvset{H, K} \), the vector \( r \) defined in \eqref{eq:optimal_odd_thm} is a valid probability distribution with non-zero entries. 
This follows from the structure of \( \vvset{H, K} \): for each such \( v \), \( r \) is the only choice that satisfies the necessary constraints. 
For completeness, an explicit proof is provided in \appendixref{appendix:validity_of_r}.
\end{remark}

\section{Omitted Proofs for Section \ref{sec:main_results} (Main Results)}
\label{appendix:main_results}

This appendix contains the proofs of \theoremref{theorem:regret_factor} 
and the remarks from \sectionref{sec:main_results}.
\appendixref{appendix:proof_of_theorem_B} proves \theoremref{theorem:regret_factor},
while 
\appendixref{appendix:proof_remark_opt_loss_grows_to_infinity_for_fixed_T,appendix:proof_of_all_polys_are_denom,appendix:complexity_analysis,appendix:prev_algo_comparison,appendix:oracle_optimal_bookmaking} 
prove 
\remarkref{remark:opt_loss_grows_to_infinity_for_fixed_T,remark:all_polys_are_denom,remark:algo_complexity,remark:prev_algo_comparison,remark:approximate_root_finding}, respectively.

\subsection{Proof of Theorem \ref{theorem:regret_factor} (The Asymptotic Regret Factor)}
\label{appendix:proof_of_theorem_B}

To characterize the regret in \eqref{eq:main_results_optimal_loss_decomposition}, 
we begin by defining the polynomial 
\begin{equation}
    \label{eq:def_of_hatppoly}
   \hatppoly{T}{K}{x} \eqdef \ppoly{T}{K}{x+T}.
\end{equation}
By \theoremref{theorem:optimal_loss}, it holds that 
\begin{equation}
    \label{eq:optimal_loss_in_terms_of_hatppoly}
    \optimalloss{T}{K} = T + \argmaxroot{\hatppolyfunction{T}{K}}.
\end{equation}
Hence, the regret can be expressed as 
\begin{equation} \label{eq:appendix_def_of_regret}
    R_{T,K} = \argmaxroot{\hatppolyfunction{T}{K}}.
\end{equation}
We show in \appendixref{appendix:derivation_of_the_explicit_form_of_tilde_P} that 
$\hatppolyfunction{T}{K}$
can be written as
\begin{equation}
    \label{eq:expansion_of_hatppoly}
    \hatppoly{T}{K}{x} = \sum_{m=0}^{K} x^{K-m}\,\binom{K}{m}\left(\sum_{d=0}^{m} \sum_{i=0}^{d} (-1)^{d} \binom{m}{d}\, \stirling{d}{i}\, T^{m-(d-i)}\right),
\end{equation}
where $\stirling{\cdot}{\cdot}$ stands for the signed Stirling numbers of the first kind (see \definitionref{def:stirling_numbers_of_the_first_kind}).
We define the polynomial \(\betapolyfunction{T}{K}\) as
\[
    \betapoly{T}{K}{x} \,\eqdef\, \hatppoly{T}{K}{\sqrt{T} x}.
\]
Following \equationref{eq:optimal_loss_in_terms_of_hatppoly,eq:appendix_def_of_regret}, 
\[
    \optimalloss{T}{K} = T + \sqrt{T} \cdot\argmaxroot{\betapolyfunction{T}{K}},
\]
and 
\begin{equation}
    \label{eq:appendix_beta_def}
    \beta_{T,K} \eqdef \frac{R_{T,K}}{\sqrt{T}} = \argmaxroot{\betapolyfunction{T}{K}}.
\end{equation}
In \appendixref{appendix:derivation_of_the_explicit_form_of_betapoly} we show that
\begin{align}
    \betapoly{T}{K}{x} 
    &= 
    \sum_{m=0}^{K} x^{K-m}\,\binom{K}{m} \cdot T^{\frac{K}{2}} \cdot \widetilde{c}_{T,m}, 
    \label{eq:expnasion_of_betapoly}
    \\
    \text{where} \quad \quad
    \widetilde{c}_{T,m} &= 
    \sum_{n=0}^{m} T^{\frac{m}{2} - n} \sum_{d=0}^{m} (-1)^{d} \binom{m}{d}\, \stirling{d}{d-n}. \label{eq:expnasion_of_betapoly_coeff}
\end{align}
\begin{lemma} \label{lemma:max_power_of_T_in_betapoly}
    For all $m \in \Nfield$, the maximal power of $T$ in $\widetilde{c}_{T,m}$ is $\leq 0$.
    As a result, the maximal power of $T$ in $\betapolyfunction{T}{K}$ is at most $\frac{K}{2}$.
\end{lemma}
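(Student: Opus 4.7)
The plan is to reduce the claim to showing that the inner sum
\[
S(m,n) \,\eqdef\, \sum_{d=0}^{m} (-1)^{d} \binom{m}{d}\, \stirling{d}{d-n}
\]
vanishes whenever $m > 2n$. Granted this, in the expansion $\widetilde{c}_{T,m} = \sum_{n=0}^{m} T^{m/2 - n} S(m,n)$ only indices with $n \geq m/2$ survive, each contributing a power $T^{m/2 - n} \leq T^0$. Hence the maximal power of $T$ in $\widetilde{c}_{T,m}$ is at most $0$, and substituting back into $\betapoly{T}{K}{x} = \sum_{m=0}^{K} x^{K-m}\binom{K}{m} T^{K/2}\widetilde{c}_{T,m}$ gives the claimed $T^{K/2}$ bound on $\betapolyfunction{T}{K}$.

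The second step is to recognize $S(m,n)$ as an iterated finite difference. Using the convention $\unstirling{d}{k} = 0$ for $k < 0$, we have $\stirling{d}{d-n} = (-1)^n \unstirling{d}{d-n}$, and the standard identity $\sum_{d=0}^{m} (-1)^d \binom{m}{d} g(d) = (-1)^m \Delta^m g(0)$, where $\Delta$ is the forward difference operator, yields
\[
S(m,n) = (-1)^{m+n}\, \Delta^{m} f_n(0), \qquad f_n(d) \,\eqdef\, \unstirling{d}{d-n}.
\]
Since $\Delta^m$ annihilates every polynomial in $d$ of degree strictly less than $m$, the vanishing reduces to showing that $f_n(d)$ is a polynomial in $d$ of degree $2n$. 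I would derive this from the defining identity $\risingfact{x}{d} = \sum_{k=0}^{d} \unstirling{d}{k}\, x^k$: substituting $x = 1/y$ and extracting the coefficient of $y^n$ gives the closed form
\[
f_n(d) = \esp{n}{(0, 1, 2, \ldots, d-1)},
\]
and expressing this elementary symmetric polynomial via Newton's identities in terms of the power sums $\sum_{i=0}^{d-1} i^j$---each a polynomial in $d$ of degree $j+1$ by Faulhaber's formula---shows that $f_n(d)$ is a polynomial in $d$ of degree exactly $2n$.

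The main obstacle I anticipate is ensuring that the polynomial identity for $f_n(d)$ holds at every nonnegative integer $d$, not merely on the range $d \geq n+1$ where the combinatorial definition is nontrivial. The elementary-symmetric reformulation handles both regimes uniformly: for $d \leq n$ either the set $\{0,1,\ldots,d-1\}$ is too small to admit an $n$-subset or its unique $n$-subset contains the element $0$, so $\esp{n}{(0,1,\ldots,d-1)} = 0$; and these zeros coincide with the combinatorial values $\unstirling{d}{d-n} = 0$ that arise when $d - n$ lies outside the valid cycle range. With the polynomial-degree claim secure, the finite-difference annihilation gives $S(m,n) = 0$ for all $2n < m$, completing the proof of the lemma.
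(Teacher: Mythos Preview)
Your proposal is correct and follows essentially the same route as the paper: both arguments hinge on (i) recognizing the inner sum as an $m$-fold finite difference applied at $0$, and (ii) showing that $d\mapsto\unstirling{d}{d-n}$ is a polynomial of degree at most $2n$, so that the difference vanishes whenever $m>2n$. The only distinction is that the paper invokes the polynomiality of $\unstirling{d}{d-n}$ as a cited result (Gessel), whereas you supply a self-contained derivation via the identity $\unstirling{d}{d-n}=\esp{n}{(0,1,\dots,d-1)}$ together with Newton's identities and Faulhaber's formula; this makes your argument more explicit but not structurally different.
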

The proof of \lemmaref{lemma:max_power_of_T_in_betapoly} is deferred to \appendixref{appendix:proof_of_max_power_of_T_in_betapoly}.

As defined in \eqref{eq:regret_factor}, 
\(
    \beta_{K} = \lim_{T\to \infty} \beta_{T,K}
\).
By \eqref{eq:appendix_beta_def}, 
\(
    \beta_{K} = \argmaxroot{\lim_{T \to \infty} \betapolyfunction{T}{K}}
\).
With the established limit, we proceed to establish \theoremref{theorem:regret_factor}.
\begin{proof}[Proof of \theoremref{theorem:regret_factor}]
We prove  
\begin{equation} \label{eq:proof_of_thm_B_limit}
    \lim_{T\to \infty} \betapoly{T}{K}{x} = \hermp_K(x).
\end{equation}
By \lemmaref{lemma:max_power_of_T_in_betapoly}, 
the maximal power of $T$ in $\betapolyfunction{T}{K}$ is at most $\frac{K}{2}$.
The coefficient of $T^{\frac{K}{2}}$ in \eqref{eq:expnasion_of_betapoly}
contains the summand of $\widetilde{c}_{T,m}$  with $m = 2n$. 
Assuming that the coefficient of $T^{\frac{K}{2}}$ does not vanish, we have 
\begin{align*}
    \lim_{T\to \infty} \betapoly{T}{K}{x} 
    =
    & \sum_{n=0}^{\lfloor {K}/{2} \rfloor} x^{K-2n}\,\binom{K}{2n} \sum_{d=0}^{2n} (-1)^{d} \binom{2n}{d}\, \stirling{d}{d-n}  .
\end{align*}
As for $d < n$ it holds that $\stirling{d}{d-n} = 0$, we denote $d=n+m$, and obtain
\begin{align*}
    \lim_{T\to \infty} \betapoly{T}{K}{x} 
    &= \;
    \sum_{n=0}^{\lfloor {K}/{2} \rfloor} x^{K-2n}\,\binom{K}{2n} 
        \sum_{m=0}^{n} (-1)^{n+m} \binom{2n}{n+m}\, \stirling{n+m}{m}  
    \\
    &= \;
      K! \sum_{n=0}^{\lfloor {K}/{2} \rfloor} \frac{x^{K-2n} (-1)^{n}}{(K-2n)!(2n)!} 
        \, \sum_{m=0}^{n} (-1)^{m} \binom{2n}{n+m}\, \stirling{n+m}{m}.
\end{align*}
By \citet[][Theorem~1]{gould2015stirling},
\[
    \sum_{m=0}^{n} (-1)^{m} \binom{2n}{n+m}\, \stirling{n+m}{m} = \frac{(2n)!}{n! 2^n}.
\]
Therefore,
\begin{equation} \label{eq:last_eq_in_hermite_proof}
    \lim_{T\to \infty} \betapoly{T}{K}{x} 
    =  K!\,\sum_{n=0}^{\lfloor K/2 \rfloor} \frac{(-1)^{n}}{n!(K-2n)!}\, \frac{x^{K-2n}}{2^{n}}.
\end{equation}
The RHS of \eqref{eq:last_eq_in_hermite_proof} 
is precisely the $K$-th probabilist’s Hermite polynomial \(\hermp_K(x)\)
\citep[e.g.][Eq.~3]{patarroyo2020digressionhermitepolynomials}.
\end{proof}

\subsubsection{Proof of Lemma \ref{lemma:max_power_of_T_in_betapoly}}
\label{appendix:proof_of_max_power_of_T_in_betapoly}
We make use of the following two auxiliary results.

\begin{lemma}[Alternating Binomial Sum of Polynomials]
    \label{lemma:alternating_binomial_polynomial}
    Let \( P(x) \) be a polynomial of degree less than \( n \). 
    Then, the alternating sum of binomial coefficients weighted by \( P(j) \) satisfies the identity:
    \begin{equation}
        \sum_{j=0}^{n} (-1)^j \binom{n}{j} P(j) = 0.
    \end{equation}
\end{lemma}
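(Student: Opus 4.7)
The plan is to reduce the statement to monomials and then recognize the resulting sum as a Taylor coefficient of a function with a high-order zero at the origin. By $\mathbb{R}$-linearity of the map $P \mapsto \sum_{j=0}^{n} (-1)^j \binom{n}{j} P(j)$ and since $P$ is a linear combination of $1, x, x^2, \ldots, x^{n-1}$, it suffices to prove that
\[
    S_{n,k} \;\eqdef\; \sum_{j=0}^{n} (-1)^j \binom{n}{j}\, j^k \;=\; 0
    \qquad \text{for every integer } 0 \le k < n.
\]

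To establish this, I would consider the formal power series identity (equivalently, the binomial expansion)
\[
    (1 - e^t)^n \;=\; \sum_{j=0}^{n} (-1)^j \binom{n}{j}\, e^{jt}.
\]
Expanding $e^{jt} = \sum_{k \ge 0} \frac{j^k}{k!}\, t^k$ on the right-hand side and collecting coefficients of $t^k$ yields
\[
    [t^k]\,(1-e^t)^n \;=\; \frac{S_{n,k}}{k!}.
\]
On the other hand, $1 - e^t = -t - \tfrac{t^2}{2} - \tfrac{t^3}{6} - \cdots$ vanishes to exact order $1$ at $t=0$, so $(1-e^t)^n$ vanishes to exact order $n$, meaning $[t^k](1-e^t)^n = 0$ for every $k < n$. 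This immediately gives $S_{n,k} = 0$ in the required range, proving the lemma.

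The argument is essentially routine — the only nontrivial observation is recognizing the alternating binomial sum as (up to a $k!$) the Taylor coefficient of $(1-e^t)^n$. I do not anticipate any obstacle. An equivalent route, if preferred for a more elementary presentation, is induction on $n$ via Pascal's rule $\binom{n}{j} = \binom{n-1}{j} + \binom{n-1}{j-1}$, which reduces the identity at level $n$ to two instances at level $n-1$ applied to $P(x)$ and $P(x+1)$ (both of degree less than $n-1$), with the trivial base case $n=1$ handling constant $P$.
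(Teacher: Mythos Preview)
Your proof is correct. The paper itself does not give a proof of this lemma; it simply cites it as a standard consequence of the theory of finite differences (Graham--Knuth--Patashnik, \emph{Concrete Mathematics}, Ch.~2). In that framework one observes that the forward-difference operator $\Delta f(x) = f(x+1)-f(x)$ lowers polynomial degree by one, so $\Delta^n P \equiv 0$ whenever $\deg P < n$, and then expands $\Delta^n P(0) = (-1)^n\sum_{j=0}^n (-1)^j\binom{n}{j}P(j)$.

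Your main argument takes a genuinely different route: rather than iterating a difference operator, you package all the sums $S_{n,k}$ simultaneously as Taylor coefficients of $(1-e^t)^n$ and read off the vanishing from the order of the zero at $t=0$. This is clean and avoids induction entirely. Your proposed alternative (induction on $n$ via Pascal's rule, reducing to $P(x)$ and $P(x+1)$) is exactly the finite-difference argument the paper has in mind, just phrased without the operator language. Both routes are standard; the generating-function version has the minor advantage of giving, for free, the exact value $S_{n,n} = (-1)^n n!$ from the leading coefficient of $(1-e^t)^n$, though that is not needed here.
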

\lemmaref{lemma:alternating_binomial_polynomial} follows from the theory of finite differences \cite[see][Ch.~2]{graham1994concrete}.

\begin{lemma}[Polynomiality of Stirling Numbers of the First Kind] 
    \label{lemma:polynomiality_of_stirling_numbers_of_the_first_kind}
    For a fixed \( n \in \Nfield \), there exists a polynomial \( G_n(d) \) of degree \( \leq 2n \) such that  
    \[
        \unstirling{d}{d-n} = G_n(d).
    \]
\end{lemma}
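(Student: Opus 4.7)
The plan is to use the combinatorial interpretation of $\unstirling{d}{d-n}$ given in \definitionref{def:stirling_numbers_of_the_first_kind}: it counts permutations of $[d]$ consisting of exactly $d-n$ disjoint cycles. My strategy is to classify such permutations by their set of non-fixed points, observe that this set has size at most $2n$ regardless of $d$, and then express $\unstirling{d}{d-n}$ as a linear combination of binomials $\binom{d}{j}$ with $n \le j \le 2n$. Since each $\binom{d}{j}$ is a polynomial in $d$ of degree $j$, this yields a polynomial $G_n(d)$ of degree at most $2n$.

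First I would introduce, for each permutation $\pi$ of $[d]$ counted by $\unstirling{d}{d-n}$, its non-fixed-point support $S_\pi = \{i \in [d] : \pi(i) \neq i\}$. Writing $j = |S_\pi|$, the number of fixed points is $d - j$, so the number of cycles of length $\geq 2$ equals $(d-n) - (d-j) = j - n$. Since every cycle of length $\geq 2$ contains at least $2$ elements, I would derive the key inequality $j \geq 2(j-n)$, i.e., $j \le 2n$. I would also note that we must have $j \ge n$ (and in fact $j = n$ forces the trivial case where the decomposition has zero non-trivial cycles, only relevant if $n=0$).

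Next I would define $a_{n,j}$ to be the number of permutations of a fixed $j$-element set that have no fixed points and exactly $j - n$ cycles; this quantity depends only on $n$ and $j$, not on $d$. Grouping the permutations by the choice of $S_\pi$ then gives
\begin{equation*}
    \unstirling{d}{d-n} \;=\; \sum_{j=n}^{2n} \binom{d}{j}\, a_{n,j}.
\end{equation*}
Since each $\binom{d}{j}$ is a polynomial in $d$ of degree exactly $j$, and $j$ ranges up to $2n$, the right-hand side is a polynomial $G_n(d)$ of degree at most $2n$, as claimed.

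No step poses a serious obstacle: the only subtlety is keeping the boundary cases straight (especially $n=0$, where $\unstirling{d}{d} = 1 = \binom{d}{0}$, consistent with the formula) and making sure the above identity is valid for all sufficiently large $d$, hence as a polynomial identity. Because both sides agree as integer-valued functions on infinitely many integers, equality as polynomials follows automatically.
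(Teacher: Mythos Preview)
Your proof is correct and self-contained. The paper does not actually prove this lemma; it simply states that the result ``follows from the combinatorial interpretation of Stirling numbers and their polynomial nature as established by \cite{GESSEL197824}.'' Your argument makes this combinatorial interpretation explicit: by classifying permutations according to their non-fixed-point support and bounding its size by $2n$, you obtain the expansion $\unstirling{d}{d-n} = \sum_{j=n}^{2n} \binom{d}{j}\, a_{n,j}$ directly, with the $a_{n,j}$ (fixed-point-free permutations of a $j$-set into $j-n$ cycles) independent of $d$. This is precisely the kind of elementary argument the cited reference would provide, so you have effectively filled in what the paper left to the literature. Your handling of the boundary cases (including $n=0$ and the observation that $\binom{d}{j}$ vanishes as a polynomial for integer $0 \le d < j$) is also sound.
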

\lemmaref{lemma:polynomiality_of_stirling_numbers_of_the_first_kind} 
follows from the combinatorial interpretation of Stirling numbers and their polynomial nature
as established by \cite{GESSEL197824}.

\begin{proof}[Proof of \lemmaref{lemma:max_power_of_T_in_betapoly}]
    Let $0 \leq m \leq K$. 
    It holds that 
    \begin{align*}
    \widetilde{c}_{T,m} 
    \stackrel{(a)}{\eqdef}
    & \sum_{n=0}^{m} T^{\frac{m}{2} - n} \sum_{d=0}^{m} (-1)^{d} \binom{m}{d}\, \stirling{d}{d-n}
    \\
    \stackrel{(b)}{=}
    & \sum_{n=0}^{m} (-1)^{n} \, T^{\frac{m}{2} - n}  \sum_{d=0}^{m} (-1)^{d} \binom{m}{d}\, \unstirling{d}{d-n}
    \\
    \stackrel{(c)}{=}
    &\sum_{n=0}^{m} (-1)^n \, T^{\frac{m}{2} - n} \sum_{d=0}^{m} (-1)^{d} \binom{m}{d}\, G_n(d) ,
    \end{align*}
    where the steps are justified as follows: 
    \begin{enumerate}[label={$(\alph*)$}]
        \item Follows from the definition of the term $\widetilde{c}_{T,m}$ in
        \eqref{eq:expnasion_of_betapoly_coeff}.
        \item Follows from \definitionref{def:stirling_numbers_of_the_first_kind}, 
        where the signed Stirling numbers of the first kind are defined by
        \[
            \stirling{n}{k} = (-1)^{n-k} \unstirling{n}{k}.
        \]
        \item 
        Follows from \lemmaref{lemma:polynomiality_of_stirling_numbers_of_the_first_kind}, 
        which states that each term $\unstirling{d}{d-n}$ can be written as a polynomial $G_n(d)$ of degree at most $2n$.
    \end{enumerate}  
    Thus, if $\frac{m}{2} > n$, then $\text{deg}\left( G_n(d) \right) < m$,
    and by \lemmaref{lemma:alternating_binomial_polynomial} we obtain
    \begin{equation*}
         \sum_{d=0}^{m} (-1)^{d} \binom{m}{d}\, G_n(d) = 0.
    \end{equation*}
\end{proof}

\subsection{Proof of Remark \ref{remark:opt_loss_grows_to_infinity_for_fixed_T}}
\label{appendix:proof_remark_opt_loss_grows_to_infinity_for_fixed_T}

\begin{lemma} \label{lemma:simulating_bookmaker_m_times}
    For any $T, K, m \in \Nplus$ it holds that $\optimalloss{mT}{K} \leq m \cdot \optimalloss{T}{K}$.
\end{lemma}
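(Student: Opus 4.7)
The plan is to establish subadditivity by constructing a strategy for the $mT$-round game that "restarts" the optimal $T$-round strategy $m$ times, and then bounding the total loss by exchanging the maximum over outcomes with the sum over blocks.

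More concretely, let $\opt{\bm}^{T}$ denote an optimal strategy for the $T$-round game, whose existence is guaranteed by \theoremref{theorem:nash} (via the characterization in \theoremref{theorem:optimal_bookmaking_algorithm}). Partition the horizon $[mT]$ into $m$ consecutive blocks $B_j = \{(j-1)T+1,\ldots,jT\}$ for $j \in [m]$. Define a candidate bookmaker strategy $\hat{\bm}^{mT}$ that, within each block $B_j$, ignores the history from previous blocks and plays $\opt{\bm}^{T}$ on the local history $(r_{(j-1)T+1},\ldots,r_{t-1},\,q_{(j-1)T+1},\ldots,q_{t-1})$ of bets placed inside $B_j$. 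This is a legitimate strategy of the form \eqref{eq:def_strategy}, since it defines each $r_t$ as a deterministic function of the full past.

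Next, fix any gambler sequence $q^{mT}\in\simplex{mT}$, and for every outcome $k\in[K]$ decompose
\begin{equation*}
    \sum_{t=1}^{mT}\frac{q_t(k)}{r_t(k)}
    \;=\;\sum_{j=1}^{m}\;\sum_{t\in B_j}\frac{q_t(k)}{r_t(k)}
    \;\le\;\sum_{j=1}^{m}\max_{k'\in[K]}\sum_{t\in B_j}\frac{q_t(k')}{r_t(k')}.
\end{equation*}
Inside block $B_j$, the bookmaker executes $\opt{\bm}^{T}$ against the bets $(q_t)_{t\in B_j}$ starting from state $\zerovec{K}$, so by the definition of $\optimalloss{T}{K}$ in \eqref{eq:optimal_bookmaker_problem_setting} the inner maximum is at most $\optimalloss{T}{K}$. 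Taking the maximum over $k$ on the left and then the supremum over $q^{mT}$ yields
\begin{equation*}
    \max_{q^{mT}\in\simplex{mT}}\max_{k\in[K]}\sum_{t=1}^{mT}\frac{q_t(k)}{r_t(k)}\;\le\;m\cdot\optimalloss{T}{K},
\end{equation*}
which, since $\optimalloss{mT}{K}$ is an infimum over all bookmaker strategies, gives the desired inequality.

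The only subtlety — and the step I would be most careful about — is verifying that the block-restart strategy is well-defined as a single strategy for the $mT$-round game. This reduces to observing that each $\opt{\bm}^{T}$-call operates only on data from its own block, so it is a valid measurable function of the full history (just one that happens to discard part of that history); no measurability or adaptivity issue arises. Everything else is the standard exchange of $\max_k \sum_j \le \sum_j \max_{k'}$, combined with the per-block optimality guarantee.
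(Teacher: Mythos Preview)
Your proof is correct and follows essentially the same approach as the paper: construct a block-restart strategy that runs the optimal $T$-round bookmaker independently on each of the $m$ blocks, then bound the total loss using $\max_k \sum_j \le \sum_j \max_k$. The paper's version is terser (it does not spell out the max–sum exchange or the well-definedness of the block strategy), but the argument is identical.
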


\begin{proof}[Proof of Lemma~\ref{lemma:simulating_bookmaker_m_times}]
Fix \(T, K, m \in \Nplus\), and let \(\bm^T\) be an optimal bookmaker whose loss is \(L^{\star}_{T,K}\).  
Construct a bookmaker \(\tilde\bm\) for horizon \(mT\) by applying \(\bm^T\) independently to each of the \(m\) disjoint blocks of \(T\) rounds.  
That is, partition the gambler’s sequence \(q_{1},\dots,q_{mT}\) into \(m\) segments and apply \(\bm^T\) to each, starting from a zero state.  
The total loss of \(\tilde\bm\) is at most \(m \cdot L^{\star}_{T,K}\),  
and thus \(L^{\star}_{mT,K} \leq m \cdot L^{\star}_{T,K}\), by the optimality of the loss \( L^{\star}_{mT,K} \).
\end{proof}

\begin{proof}[Proof of \remarkref{remark:opt_loss_grows_to_infinity_for_fixed_T}]
Assume, towards a contradiction, there exists \( \widehat{T} \in \Nplus \) and \(C \in \Rfield \)
such that 
\[
    \lim_{K \to \infty} \frac{R_{\widehat{T},K}}{\sqrt{\widehat{T}}} = C.
\]
Since adding an outcome to the game can only increase the bookmaker's optimal loss,
\begin{align}
    \label{eq:false_assumption_on_bound_of_optimal_loss}
    \forall K \in \Nplus \quad 
    \optimalloss{\widehat{T}}{K} \leq \widehat{T} + C \sqrt{\widehat{T}}
\end{align}
\theoremref{theorem:regret_factor} states that
\begin{equation} \label{eq:lim_H_eq_proof_of_remark_1}
      \forall K \in \Nplus \quad \lim_{T \to \infty} \beta_{T,K} = \argmaxroot{\hermp_K}.
\end{equation}
Lower bound on the RHS of \eqref{eq:lim_H_eq_proof_of_remark_1} 
\citep[e.g.,][]{krasikov2004bounds} 
implies there exist \( m \in \Nplus \) for which 
\begin{equation} \label{eq:lower_bound_on_the_redundancy}
    \forall K \in \Nplus \quad \beta_{m \widehat{T}, \, K} \geq \sqrt{K}
\end{equation}
Define 
\begin{equation} \label{eq:def_of_hat_K_proof_of_remark_1}
    \widehat{K} \eqdef \left\lceil  m {C}^{2} \right\rceil + 1,
\end{equation}
and consider the optimal bookmaking loss in a game with $\widehat{K}$ outcomes and $m\widehat{T}$ rounds:
\begin{align*}
    \optimalloss{m\widehat{T}}{\widehat{K}} 
    &\stackrel{(a)}{\leq} \;
    m \cdot \optimalloss{\widehat{T}}{\widehat{K}}
    \\
    &\stackrel{(b)}{\leq} \;
    m \cdot \left(
        \widehat{T} + C \sqrt{\widehat{T}}
    \right) 
    \\ 
    &= \;
    m \widehat{T} + \sqrt{m {C}^{2} } \sqrt{m \widehat{T}} 
    \\
    &\stackrel{(c)}{<} \;
    m \widehat{T} + \sqrt{\widehat{K}} \sqrt{m \widehat{T}}
    \\
    &\stackrel{(d)}{\leq} \;
    m \widehat{T} + \beta_{m\widehat{T}, \widehat{K}} \sqrt{m \widehat{T}}
    \\
    &=   \;
    \optimalloss{m\widehat{T}}{\widehat{K}},
\end{align*}
where $(a)$ follows from \lemmaref{lemma:simulating_bookmaker_m_times}, stated above;
$(b)$ follows from the upper bound on $\optimalloss{\widehat{T}}{\widehat{K}}$ in \eqref{eq:false_assumption_on_bound_of_optimal_loss};
$(c)$ follows from our definition of $\widehat{K}$ in \eqref{eq:def_of_hat_K_proof_of_remark_1};
and, $(d)$ follows from the lower bound on the regret factor $ \beta_{m\widehat{T}, \widehat{K}}$ in \eqref{eq:lower_bound_on_the_redundancy}.
This yields a contradiction, completing the argument.
\end{proof}

\subsection{Proof of Remark \ref{remark:all_polys_are_denom}}
\label{appendix:proof_of_all_polys_are_denom}

\begin{proof}[Proof of \remarkref{remark:all_polys_are_denom}]
We prove that evaluating $\denomfunction{H}{K}$ at $\univec{x}{K} - s$, 
as defined in \eqref{eq:remark_def_of_biaseddenom}, 
yields the polynomial expression in \eqref{eq:biased_denominator_function}.
\begin{align*}
    \denom{ H }{K}{\univec{ x }{K} - s} 
    &\stackrel{(a)}{=} 
    \sum_{j=0}^{K} (-1)^{K-j} \fallingfact{ H }{K-j} \esp{j}{\univec{ x }{K} - s}\\
    &\stackrel{(b)}{=} 
    \sum_{j=0}^{K} (-1)^{K-j} \fallingfact{ H }{K-j} 
        \sum_{i=0}^{j} (-1)^{i} \esp{i}{s} \binom{K - i}{j - i} x^{j - i}\\
    &= \sum_{j=0}^{K} \sum_{i=0}^{j} 
        (-1)^{K-(j-i)} \fallingfact{ H }{K-j} \esp{i}{s} \binom{K - i}{j - i} x^{j - i}\\ 
    &\stackrel{(c)}= 
    \sum_{j=0}^{K} \sum_{n=0}^{j} 
        (-1)^{K-n} \fallingfact{ H }{K-j} \esp{j - n}{s} \binom{K - j + n}{n} x^{n}\\ 
    &= \sum_{n=0}^{K} (-1)^{K-n} \left( \sum_{j=n}^{K} 
         \fallingfact{ H }{K-j} \esp{j - n}{s} \binom{K - (j - n)}{n} 
        \right) x^{n}\\ 
    &\stackrel{(d)}{=} \sum_{m=0}^{K} (-1)^{m} \left( \sum_{j=K-m}^{K} 
        \fallingfact{ H }{K-j} \esp{j - (K - m)}{s} \binom{K - j + K - m}{K-m} 
        \right) x^{K-m} \\ 
    &\stackrel{(e)}{=} \sum_{m=0}^{K} 
        \left(  (-1)^{m} \sum_{n=0}^{m} 
        \fallingfact{ H }{m-n} \binom{K - n}{K-m} \esp{n}{s} 
        \right) x^{K-m},
\end{align*}
where: $(a)$ uses the definition of $\denomfunction{H}{K}$ in \eqref{eq:def_of_ppoly_explicit}; 
$(b)$ follows from \lemmaref{lemma:elementary_symmetric_polynomial_with_shift};
$(c)$ substitutes $n = j - i$;
$(d)$ by the substitution $m = K - n$;
and $(e)$ via the change of variable $n = j - K + m$.
Accordingly, we obtain the expression stated in \eqref{eq:biased_denominator_function}.
\end{proof}

\subsection{Proof of Remark \ref{remark:algo_complexity} (Algorithm \ref{algo:optimal_bookmaking} Computational Complexity)}
\label{appendix:complexity_analysis}

In this section, we analyze the computational complexity of \algorithmref{algo:optimal_bookmaking}.

\begin{remark}  \label{remark:esp_fft}
    All the elementary symmetric polynomials of a vector in $\Rdim{K}$ 
    can be computed in \( O(K \log K) \) 
    time using FFT-based polynomial multiplication techniques 
    \citep[][Thm.~7.1]{harvey2021integer,ben2021elliptic}.
    We refer to this method as the \algfont{FFT-ESP} algorithm.
\end{remark}

We first address the efficient computation of partial elementary symmetric polynomials, 
handled by \algorithmref{algo:partial_esps}.
\begin{algorithm2e}[htbp]
\caption{Partial Elementary Symmetric Polynomials}
\label{algo:partial_esps}
\SetKw{Initialization}{Initialization:}
\SetKw{output}{output}
\SetNoFillComment
\DontPrintSemicolon
\LinesNumbered
\KwIn{$v \in \Rdim{K}$}
\KwOut{$A = \left[ {a}_1 \cdots {a}_K  \right] \in \Rdim{K \times K}$ where $A_{k,m} = \esp{m-1}{v\minind{k}}$}
\Initialization{$A \gets \zerovec{}^{K \times K},\, s \gets \zerovec{K+1}$}

$\esp{0}{v}, \ldots, \esp{K}{v} \gets $ \algfont{FFT-ESP}$(v)$

${a}_1 \gets \mathbf{1}_K$

\For{$m = 1:K-1$}{

    $ {a}_{m+1} \gets \univec{\esp{m}{v}}{K} - v \odot {a}_m$
}
    
\output $A$
\end{algorithm2e}
\begin{lemma} \label{lemma:pesp}
    \algorithmref{algo:partial_esps}, hereafter referred to as \algfont{PESP}, runs in \( O(K^2) \) time and, 
    given a vector \( v \in \Rdim{K} \), returns a matrix \( A \in \Rdim{K \times K} \) such that  
    \[
        (A)_{k,m} = \esp{m-1}{v\minind{k}}
    \]
    for all \( k, m \in [K] \).
\end{lemma}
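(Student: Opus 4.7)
The plan is to establish the correctness of Algorithm~\ref{algo:partial_esps} via induction on the column index $m$, and then tally the operations. The key observation is that the ESP recurrence from Lemma~\ref{lemma:elementary_symmetric_polynomial_recurrence_relation} can be rearranged as
\[
    \esp{m}{v\minind{k}} = \esp{m}{v} - v(k) \cdot \esp{m-1}{v\minind{k}},
\]
which, viewed simultaneously across all $k \in [K]$, is exactly the vector update
\[
    a_{m+1} = \univec{\esp{m}{v}}{K} - v \odot a_m
\]
performed in the inner loop.

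For correctness, the base case is $(a_1)_k = 1 = \esp{0}{v\minind{k}}$ by the convention that the empty symmetric polynomial equals one, which matches the initialization $a_1 \gets \mathbf{1}_K$. For the inductive step, assuming $(a_m)_k = \esp{m-1}{v\minind{k}}$ for every $k$, applying the rearranged recurrence coordinate-wise yields $(a_{m+1})_k = \esp{m}{v\minind{k}}$, which is the desired entry $A_{k,m+1}$. Note that the scalars $\esp{m}{v}$ used in the update are those computed once by the preliminary \algfont{FFT-ESP} call, so the inductive hypothesis is enough to complete the step.

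For the complexity analysis, the \algfont{FFT-ESP} invocation costs $O(K \log K)$ by Remark~\ref{remark:esp_fft}. The main loop runs $K-1$ times; each iteration performs a constant number of $K$-dimensional vector operations (a Hadamard product $v \odot a_m$ and a scalar-vector subtraction), each costing $O(K)$. This gives $O(K^2)$ for the loop, which dominates, yielding the claimed $O(K^2)$ overall bound.

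There is no real obstacle here beyond bookkeeping: the algorithm is essentially the ESP recurrence unrolled, and the only subtlety is keeping the column/index convention straight, namely that column $m$ of $A$ stores the vector of values $\esp{m-1}{v\minind{k}}$ indexed by $k$, so the algorithm's $a_m$ corresponds to the $m$-th column of the output matrix rather than a row.
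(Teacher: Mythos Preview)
Your proof is correct and follows essentially the same approach as the paper: both rearrange the ESP recurrence from Lemma~\ref{lemma:elementary_symmetric_polynomial_recurrence_relation} into the form $\esp{m}{v\minind{k}} = \esp{m}{v} - v(k)\,\esp{m-1}{v\minind{k}}$, verify by induction on the column index that $a_m(k) = \esp{m-1}{v\minind{k}}$, and then count operations exactly as you do.
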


\begin{proof}[Proof of \lemmaref{lemma:pesp}]
Let $K \in \Nplus, k \in [K]$ and $v \in \Rdim{K}$.
By \lemmaref{lemma:elementary_symmetric_polynomial_recurrence_relation}, 
for every $m \in [K]$
\[
    \esp{m}{v} = v(k) \cdot \esp{m-1}{ v\minind{k} } + \esp{m}{ v\minind{k} }.
\]
Hence, for every $m \in [K-1]$,
\begin{equation}
    \label{eq:calc_of_pesp_by_esp}
    \esp{m}{ v\minind{k} } = \esp{m}{v} - v(k) \cdot \esp{m-1}{ v\minind{k} },
\end{equation}
with the base case $\esp{0}{ v\minind{k}} = 1$.
Thus, for every $k \in [K]$,
\[
    A_{k,1} = {a}_1(k) = 1 = \esp{0}{v\minind{k}},
\]
and by induction on $m-1$, for every $m \in [2:K]$,
\begin{align*}    
    A_{k,m}
    % &
    = \esp{m-1}{v} - v(k) \cdot {a}_{m-1}(k)
    % \\ &
    = \esp{m-1}{v} - v(k) \cdot \esp{m-2}{v\minind{k}}
    % \\&
    = \esp{m-1}{v\minind{k}}
\end{align*}
In terms of complexity, \algfont{FFT-ESP} executes once in \( O(K \log K) \) time 
(\remarkref{remark:esp_fft}), 
followed by \( K \) iterations where each of the \( K \) entries performs \( O(1) \) operations. 
This results in a total complexity of \( O(K^2) \).
\end{proof}

\begin{proof}[Proof of \remarkref{remark:algo_complexity}]
To analyze the per-time-step complexity, 
consider an arbitrary round $t \in [2,T]$ with associated state vector $s \in \Rdim{K}$, 
and let $H = T - t + 1$ denote the remaining horizon.
The operations executed at round $t$ are as follows, along with their per-step time complexity.
\begin{enumerate}
    \item \emph{State and Residual loss Vectors Update} (Lines \ref{line:update_the_state_vector} and \ref{line:opt_res_loss_vector_update}): 
    element-wise vector operation, runs in \(O(K)\) time.

    \item \emph{Opportunistic Loss update}  (Line \ref{line:update_the_loss}):
    \begin{enumerate}
        \item \emph{Computing the polynomial \( \biaseddenomfunction{H}{K} \) \eqref{eq:biased_denominator_function}}:
        This process consists of the following steps:
        
        \begin{itemize}
            \item Computing the binomial coefficients 
            \[ 
                \binom{K-n}{m-n}  \quad \text{for all} \quad 0 \leq m, n \leq K .
            \]
            These coefficients can be precomputed before the algorithm begins. Using Pascal’s triangle, they can be computed in \( O(K^2) \) time.
    
            \item Computing the elementary symmetric polynomials
            using \algfont{FFT-ESP} in \( O(K \log K) \) (\remarkref{remark:esp_fft}).
    
            \item Computing the falling factorials 
            \[ 
                \fallingfact{H}{0}, \ldots, \fallingfact{H}{K}.
            \]
            These are computable in \( O(K) \) time directly via \definitionref{def:falling_factorial}.            
        \end{itemize}
    
        Once all components are precomputed in \( O(K^2) \), the polynomial is obtained by iterating over all \( 0 \leq n \leq m \leq K \), requiring an additional \( O(K^2) \) operations.

        \item 
        \emph{Root Finding}:
        Since \(\biaseddenomfunction{H}{s}\) is a degree-\(K\) polynomial, 
        its largest root can be computed numerically to precision \(\varepsilon\) in \(O(K \log(1/\varepsilon))\) time
        using standard root-finding methods,
        such as Newton's Method or Laguerre’s method.
        By applying the upper bound from \theoremref{theorem:regret_factor}, 
        we can efficiently compute a good initial point in constant time, 
        where the number of required iterations is only \( O(\log \log(1/\varepsilon)) \).
    \end{enumerate}

    \item \emph{Optimal Odds Update} (Line \ref{line:optimal_odds_update}):  
    Using \algfont{PESP}, we compute all the necessary ESPs for the polynomials  
    \[
        \denomfunction{H-1}{K-1}(v\minind{1}),\, \ldots, \denomfunction{H-1}{K-1}(v\minind{K})
    \]
    in \( O(K^2) \). The falling factorials required for these computations are obtained in \( O(K) \).  
    Each polynomial, for \( k \in [K] \), involves \( K+1 \) iterations, each requiring \( O(1) \) operations,  
    resulting in an overall complexity of \( O(K^2) \). Finally, computing the quotient for each term in the sum  
    incurs an additional \( O(K) \) cost.
\end{enumerate}
Thus, the overall complexity per time step is \( O(K^2) \), 
leading to a total complexity of \( O(TK^2) \) over \( T \) time steps.
\end{proof}

\subsection{Proof of Remark \ref{remark:prev_algo_comparison}}
\label{appendix:prev_algo_comparison}
\citet{bhatt2025optimal} consider the binary case (\(K = 2\)) and design two algorithms for the bookmaker, 
depending on the gambler's behavior. 
Against a decisive gambler, they propose the \emph{Optimal Strategy For Decisive Gamblers} algorithm (\algfont{ODG}), 
which coincides with \algorithmref{algo:optimal_bookmaking}. 
Against a non-decisive gambler, who may place continuous bets \( q_t \in [0,1] \),
they construct a mixture of \algfont{ODG} strategies, 
based on the observed betting sequence.
In particular, let \( r_t^{\text{\algfont{ODG}}}(x^{t-1}) \) 
denote the odds produced by \algfont{ODG} algorithm for some input \( x^{t-1} \in \{0,1\}^t \). 
Their idea is to view \( q_t \in [0,1] \) as the expected value of a binary random variable \( X_t \sim \text{Ber}(q_t) \). 
The strategy for continuous bets \citep[Eq.~25]{bhatt2025optimal} is
\begin{equation} \label{eq:bhaat_et_al_strategy}
    \bar{r}_t(q^{t-1}) = \mathbb{E} \left[ r_t^{\text{\algfont{ODG}}}(X^{t-1}) \right],
\end{equation}
where the expected value is taken with respect to the sequence of independent random variables \( X_i \sim \text{Ber}(q_i) \).

As computing the odds according to \eqref{eq:bhaat_et_al_strategy} can take exponential time,
they propose to evaluate it via a Monte Carlo simulation 
and provide \emph{Monte Carlo Based Efficient Strategy} Algorithm (referred to as \algfont{MC}).
At each round \( t \), the algorithm runs \( N \) parallel simulations of \algfont{ODG},
and the strategy is computed as the average over these \( N \) independent runs.
The computational complexity of \algfont{MC} is \(\mathcal{O}(NT)\), 
where \(N\) denotes the number of samples per round and \(T\) is the total number of betting rounds.
The computational efficiency of \algfont{MC} hinges on the number of samples required to achieve a desired approximation accuracy. 
Specifically, to ensure an additive error tolerance of \(\varepsilon\) with probability at least \(1 - \delta\), 
the required number of samples must satisfy:
\[
    N \geq \frac{T}{2\varepsilon^2} \log \left( \frac{2T}{\delta} \right).
\]
Substituting this bound on \(N\) yields an overall complexity of
\(
    O \left( T^{3+2\alpha} \log \left( \frac{T}{\delta} \right) \right).
\)

\begin{proof}[Proof of \remarkref{remark:prev_algo_comparison}]
% By  \theoremref{theorem:optimal_bookmaking_algorithm}, \algfont{OOPT} is optimal opportunistic strategy.
We show there exist \(T \in \Nplus\), \(t \in \nset{T}\), and \(s \in \Rdim{2}\) 
for which \eqref{eq:bhaat_et_al_strategy} does not output the same odds as \algorithmref{algo:optimal_bookmaking}
(\algfont{OOPT}).
Let $T=4$.
At the first round, 
\[
    r_1^{\text{\algfont{OOPT}}} = \bar{r}_1 = (0.5, 0.5).
\]
Hence, after any gambler's action 
\(
    q_1 = \left( q_1(1),\, q_1(2) \right),
\)
both algorithms have the same state vector
\[
    s = \left( 2q_1(1) ,\, 2q_1(2) \right).
\]
It holds that 
\begin{align*}
    \bar{r}_2\left( \left(q_1(1)\right)  \right) 
    &= \mathbb{P}\left( \; (1) \; \mid \; \left(q_1(1)\right) \; \right) \cdot r_2^{\text{\algfont{ODG}}}\left((1)\right)
        + \mathbb{P}\left( \; (2) \; \mid \; \left(q_1(1)\right) \; \right) \cdot r_2^{\text{\algfont{ODG}}}\left((2)\right)
    \\ 
    &= q_1(1) \cdot \left(\frac{2}{3}, \frac{1}{3} \right) + q_1(2) \cdot \left(\frac{1}{3}, \frac{2}{3} \right) \\
    &= \frac{1}{3} \left( 2 \cdot q_1(1) + q_1(2),\; q_1(1) + 2 \cdot q_1(2) \right).
\end{align*}
At $t=2$, by \theoremref{theorem:optimal_bookmaking_algorithm}, 
\(
    \optimalloss{3}{2}(s) = \argmaxroot{\biaseddenomfunction{3}{s}}
\)
where 
\begin{equation*}
    \biaseddenomfunction{3}{s}(x) = x^{2} - \left(6 + 2q_1(1) + 2q_1(2) \right)x + \left( 6 + 6q_1(1) + 6q_1(2) + 4q_1(1) q_1(2) \right).
\end{equation*}
Choosing \(q_1 = (0.4, 0.6)\) gives
\(
    \biaseddenomfunction{3}{s}(x) = x^2 - 8x + 12.96,
\)
and the optimal loss is \(5.7435\). 
The residual loss is then
\[
    v \approx (5.7435 - 0.8, 5.7435 - 1.2) = (4.9435, 4.5435). 
\]
Using \eqref{eq:optimal_odd_thm}, we have that 
\[
    r_2^{\text{\algfont{OOPT}}} \approx (0.4635, 0.5364) \neq (0.4666, 0.5333) \approx \bar{r}_2\left( \left( q_1(1) \right) \right).
\]
\end{proof}

\subsection{Proof of Remark \ref{remark:approximate_root_finding} (The Effect of \teq{$\varepsilon$}-Approximate Root-Finding)}
\label{appendix:oracle_optimal_bookmaking}
In this section, 
we analyze a variant of \algorithmref{algo:optimal_bookmaking} that replaces exact root-finding with an $\varepsilon$-approximate oracle. 
The underlying principle of the algorithm is to obtain an upper approximation to the optimal opportunistic loss 
such that the resulting "residual loss vector" $v$ is still achievable.

Let $\varepsilon > 0$, and assume access to an oracle \(\mathbf{O}_\varepsilon\) that, 
given a univariate polynomial $\mathcal{P}$:
\begin{itemize}
    \item \emph{If $\mathcal{P}$ has real roots:}  returns a value $\rho^\varepsilon \in \Rfield$, such that  
    \begin{equation}
        \label{eq:return_of_the_oracle}
        \left|  \rho^{\varepsilon} - \rho \right| < \varepsilon,
    \end{equation}
    where $\rho$ is $\mathcal{P}$'s maximal real root.

    \item \emph{Otherwise:} returns $\infty$.
\end{itemize}
We present \algorithmref{algo:oracle_optimal_bookmaking}, 
where modified lines of \algorithmref{algo:optimal_bookmaking} are marked with $*$.

\begin{algorithm2e}
\caption{Opportunistic Bookmaking Algorithm with Maximal Real Root Oracle}
\label{algo:oracle_optimal_bookmaking}
\SetKw{Initialization}{Initialization:}
\SetKw{output}{output}
\SetNoFillComment
\DontPrintSemicolon
\LinesNotNumbered

\nlset{$*$} \KwIn{$K$, $T$, $\varepsilon$ (precision), $\mathbf{O}_\varepsilon$, $\seq{q}{T-1}$ (bets obtained sequentially)} 
\KwOut{$r_1,\dots,r_T$ (outputs $\seqelem{r}{t}$ after observing $\seq{q}{t-1}$)}
\nlset{$*$} \Initialization{$s \gets \univec{\varepsilon}{K}, \quad L \gets \mathbf{O}_{\varepsilon}(\biaseddenomfunction{T}{s})$}

\output $\seqelem{r}{1}\gets \univec{\frac{1}{K}}{K}$ 

\For{$t = 2:T$}{
    
    $s \gets s + \seqelem{q}{t-1} \oslash \seqelem{r}{t-1}$  
    \tcp*{update the state vector with additional $\univec{\varepsilon}{K}$} 

    \If
    {$\seqelem{q}{t-1} \not\in \stdbasis{K}$}
    {
    \nlset{$*$}  $s \gets s + \univec{\varepsilon}{K}$
    
    \nlset{$*$}  $L \gets \mathbf{O}_\varepsilon(\biaseddenomfunction{T-t+1}{s})$
        \tcp*{call the oracle instead of calculating $\argmaxroot{\biaseddenomfunction{T-t+1}{s}}$}
    }

    $v \gets \left( \univec{L}{K} - s \right)$

    \For{$k=1:K$}{
        $r(k) \gets  \denom{T-t}{K-1}{v\minind{k}}$
    }
    
    \output $r_t \gets r/ \| r\|_1$
}
\end{algorithm2e}

\begin{proof}[Proof of \remarkref{remark:approximate_root_finding}]
We show that for every non-decisive gambler action, 
the bookmaker’s loss exceeds the optimal opportunistic bookmaking loss by at most $2\varepsilon$.

Let $H \in \Nplus$ and $s \in \Rdim{K}$.
Denote by $L_{H,K}^{\varepsilon}(s)$ a fixed response of $\mathbf{O}_\varepsilon(\biaseddenomfunction{H}{s})$.
By \theoremref{theorem:optimal_bookmaking_algorithm}, 
$\optimalloss{H}{K}(s)$ is equal to the largest real root of the polynomial \(\biaseddenomfunction{H}{s}\);
thus, from \eqref{eq:return_of_the_oracle}, it follows that
\begin{equation}
    \label{eq:return_loss_of_oracle_root}
    \optimalloss{H}{K}(s) - \varepsilon < L_{H,K}^{\varepsilon}(s) < \optimalloss{H}{K}(s) + \varepsilon.
\end{equation}
By \definitionref{def:value_function} of the value function,
\begin{equation}
    \label{eq:return_potential_of_oracle_root}
    \pot{H}{H}{s} - \varepsilon < L_{H,K}^{\varepsilon}(s) < \pot{H}{H}{s} + \varepsilon.
\end{equation}
Observe a fixed value of 
\(
    L_{H,K}^{\varepsilon}(s + \univec{\varepsilon}{K}):
\)
By \eqref{eq:return_potential_of_oracle_root}, 
\begin{equation*}
    \potfunction{H}{K}(s + \univec{\varepsilon}{K}) - \varepsilon 
    < L_{H,K}^{\varepsilon}(s + \univec{\varepsilon}{K}) 
    < \potfunction{H}{K}(s + \univec{\varepsilon}{K}) + \varepsilon.
\end{equation*}
By uniform translation property of the value function (\lemmaref{lemma:uniform_translation_property}),
\begin{equation*}
    \left( \potfunction{H}{K}(s) + \varepsilon \right) - \varepsilon 
    < L_{H,K}^{\varepsilon}(s + \univec{\varepsilon}{K}) 
    < \left(\potfunction{H}{K}(s) + \varepsilon \right) + \varepsilon,
\end{equation*} 
i.e.,
\begin{equation*}
    \potfunction{H}{K}(s)  
    < L_{H,K}^{\varepsilon}(s + \univec{\varepsilon}{K}) 
    < \potfunction{H}{K}(s) + 2\varepsilon.
\end{equation*} 
By continuity and coordinate-wise strict monotonicity of the value functions 
(\lemmaref{lemma:continuity_of_the_value_function,lemma:cw_strict_m}), 
and since the value of $L_{H,K}^{\varepsilon}(s + \univec{\varepsilon}{K})$ is bounded,
there exists a vector $b \in \Rdim{K}$ such that $0 \prec b \prec \univec{2\varepsilon}{K}$ and 
\begin{equation}
    \label{eq:oracle_return_alternative_notation}
    L_{H,K}^{\varepsilon}(s + \univec{\varepsilon}{K}) = \optimalloss{H}{K}(s + b).
\end{equation}

Denote by $\mathbf{O}_0$ the oracle who returns the precise value of the maximal real root (if it exists).
By \eqref{eq:oracle_return_alternative_notation}, 
an equivalent interpretation of \algorithmref{algo:oracle_optimal_bookmaking} is as follows: 
\begin{itemize}
    \item 
    Before round $t \in [T]$ with some initial state $s_t$, 
    we sample a vector 
    \begin{equation}
        \label{eq:sample_b_t_oracle_equivalence}
        b_t \in \{b \in \Rplus{K} \; : \; b \prec \univec{2\varepsilon}{K} \}
    \end{equation}
    according to some unknown probability distribution.
    We interpret $b_t(k)$ as a given random tax that we are forced to pay, if outcome $k$ materializes.

    \item
    In general, the state $s_t$ accounts for some given loss when choosing a bookmaking action 
    (see the motivation for the OOBL in \sectionref{subsec:problem_opportunistic}).
    Thus, our effective state to consider is $s_t + b_t$.

    \item 
    After observing $b_t$, we have an access to the oracle $\mathbf{O}_0$.
    By \theoremref{theorem:optimal_bookmaking_algorithm}, the optimal strategy  
    is to query $\mathbf{O}_0$ with the polynomial $\biaseddenomfunction{T-t+1}{s_t + b_t}$.
\end{itemize}
A direct result of coordinate-wise strict monotonicity property (\lemmaref{lemma:cw_strict_m}),
is that the worst random tax $b_t$, as in \eqref{eq:sample_b_t_oracle_equivalence} is $\univec{2\varepsilon}{K}$ 
(formally, infinitesimally close to $\univec{2\varepsilon}{K}$).
By uniform translation property, 
we obtain that the bookmaker’s loss exceeds the optimal opportunistic bookmaking loss by at most $2\varepsilon$.

\end{proof}

\section{Supplementary Technical Details}
\label{appendix:technical_proofs}

This appendix collects additional technical proofs.
\appendixref{appendix:technical_A} supports the preliminary material in \appendixref{appendix:preliminaries},
\appendixref{appendix:technical_B} corresponds to \appendixref{appendix:the_adversarial_game_dynamics},
\appendixref{appendix:technical_C}  provides supporting proofs for both \sectionref{sec:bpf} and \appendixref{appendix:value_vectors},
and \appendixref{appendix:technical_D} presents technical arguments for \appendixref{appendix:main_results} .

\subsection{Proofs for Appendix \ref{appendix:preliminaries} }
\label{appendix:technical_A}

\begin{proof}[Proof of \lemmaref{lemma:elementary_symmetric_polynomial_sum_identity}]
For $m=0$, the lemma holds trivially due to the fact that $\esp{0}{\cdot}=1$. 
For $0 < m \leq K$, by \definitionref{def:esp}, it holds that
    \begin{align*}
        \sum_{i=1}^{K} \esp{m}{x\minind{i} }  
        = \sum_{i=1}^{K} \sum_{\mathfrak{I} \in \binom{\nset{K} \setminus \{ i \}}{m}} \prod_{k \in \mathfrak{I}} \vecind{x}{k}.
    \end{align*}
    Every subset of indices \(\mathfrak{J} \in \binom{[K]}{m}\) appears in \( K - m \) different configurations of $i$ within the summation. This results from choosing any of the \( K - m \) indices that do not belong to \(\mathfrak{J}\) for the value of $i$.
\end{proof}

\begin{proof}[Proof of \lemmaref{lemma:elementary_symmetric_polynomial_with_shift}]
Substituting the vector \( \univec{t}{K} - x \) into the generating function for ESPs (see \remarkref{remark:esps_gen_func}), we obtain:
\begin{align}
    \sum_{n=0}^{K} \esp{n}{\univec{t}{K}-x} y^n 
    &= \prod_{k=1}^{K} \left(1 + \left(t - x(k)\right) y\right)    \label{eq:proof_elementary_symmetric_polynomial_with_shift}
    \\
    &=  \prod_{k=1}^{K} \left( (1 + t y) - x(k) y \right) \nonumber 
\end{align}
Expanding this product involves selecting for each \( k \) either the \( 1 + t y \) term or the \( -x(k) y \) term.
Specifically, all the terms in the product which involves \( i \) of the \( -x(k) y \) terms 
will contribute to the generating function the term
\begin{equation} \label{eq:appendix_proof_elementary_symmetric_polynomial_with_shift}
    (-1)^i \sigma_i(x) y^i  (1 + t y)^{K - i}.
\end{equation}
Thus,
\begin{align}
    \prod_{k=1}^{K} \left(1 + \left(t - x(k)\right) y\right) 
    &\stackrel{(a)}{=} \sum_{i=0}^{K} (-1)^i \sigma_i(x) y^i \cdot (1 + t y)^{K - i} \nonumber
    \\
    &\stackrel{(b)}{=}  \sum_{i=0}^{K} (-1)^i \sigma_i(x) y^i \sum_{j=0}^{K - i} \binom{K - i}{j} t^j y^j \nonumber
    \\
    &=  \sum_{i=0}^{K} \sum_{j=0}^{K - i} (-1)^i \sigma_i(x) \binom{K - i}{j} t^j y^{i + j} \nonumber
    \\
    &\stackrel{(c)}{=} 
    \sum_{n=0}^{K} \left( \sum_{i=0}^{n} (-1)^i \sigma_i(x) \binom{K - i}{n - i} t^{n - i} \right) y^n,
    \label{eq:proof_elementary_symmetric_polynomial_with_shift_last}
\end{align}
where
$(a)$ follows by \eqref{eq:appendix_proof_elementary_symmetric_polynomial_with_shift},
$(b)$ follows from the expansion of \( (1 + t y)^{K - i} \) using the binomial theorem,
and $(c)$ uses the variable substitution $n = i + j$.
By equating the coefficients of \( y^n \) on the LHS of \eqref{eq:proof_elementary_symmetric_polynomial_with_shift} 
with those in \eqref{eq:proof_elementary_symmetric_polynomial_with_shift_last}, we obtain:
\[
    \sigma_n(\univec{t}{K} - x) = \sum_{i=0}^{n} (-1)^i \sigma_i(x) \binom{K - i}{n - i} t^{n - i}
\] 
\end{proof}

\subsection{Proofs for Appendix \ref{appendix:the_adversarial_game_dynamics} }
\label{appendix:technical_B}

\begin{proof}[Proof of \lemmaref{lemma:uniform_translation_property}]
Let $s \in \Rdim{K}$ and $c \in \Rfield$. We prove by induction on $H$.
\begin{itemize}
    \item \emph{Base Case} (\(H=0\)):  
    It holds that 
    \begin{align*}
        \pot{0}{K}{s + \univec{c}{K}}
        \eqdef \;
        &\max_{k \in [K]} \left(s + \univec{c}{K} \right)(k) 
        \\
        = \;
        &\max_{k \in [K]} s(k) + c
        \\
        = \;
        &\pot{0}{K}{s} + c.
    \end{align*}
    \item \emph{Inductive step} (\( H \to H+1 \)):     
     \begin{align*}
        \pot{ H + 1}{K}{s + \univec{c}{K}}
        &\eqdef 
        \inf_{ r \in \simplex{K-1} } \; \max_{ q \in \simplex{K-1}} 
            \pot{ H }{K}{
                 s + \univec{c}{K} + q \oslash r 
             } 
        \\
        &\stackrel{(*)}{=} 
        \inf_{ r \in \simplex{K-1} } \; \max_{ q \in \simplex{K-1}} 
            \pot{ H }{K}{
                 s + q \oslash r 
             } + c
        \\
        &= 
        \pot{ H + 1}{K}{s} + c,
    \end{align*}
    where $(*)$ follows from the induction hypothesis.
\end{itemize}
\end{proof}

\begin{proof}[Proof of Lemma \ref{lemma:properties_of_corr}]
The correspondence $\corr_H$ is compact-valued by its definition in \eqref{eq:def_of_corr_H},
and is non-empty as for every \( s \in \mathbb{R}^K \), \( \frac{1}{K} \in \corr_H(s) \).

A correspondence is continuous if it is both upper and lower hemicontinuous (see \definitionref{def:continuity_of_a_correspondence}).
From the continuity of the maximum and minimum functions on $\mathbb{R}^K$, 
the function $\omega_{H}$, defined in \eqref{eq:def_of_omega_H_s}, is continuous. 
Hence for any $\bar{s} \in \Rdim{K}$ and $\delta > 0$, 
there exists an open neighborhood \(U\) of \(\bar{s}\) such that for all $s \in U$,
\(
    |\omega_H(s) - \omega_H(\bar{s})| < \delta.
\)

Let \(\bar{s} \in \Rdim{K}\) and let \(V \subset \Delta^{K-1}\) be an open set with \(\corr_H(\bar{s}) \subset V.\)
Since every \(r \in \corr_H(\bar{s})\) is an interior point of \(V\), for each there exists \(\varepsilon_r > 0\) such that \(B(r,\varepsilon_r) \subset V\). 
By compactness of \(\corr_H(\bar{s})\), a finite subcover yields a uniform margin \(\delta > 0\) ensuring that if
\(
    |\omega_H(s)-\omega_H(\bar{s})|<\delta,
\)
then for every \(r \in \Delta^{K-1} \) and $ k \in [K]$,  if
\(
    r(k) \geq \omega_H(s) 
\)
then
$r \in V$.
By continuity of \(\omega_H\), 
there exists an open neighborhood \(U\) of \(\bar{s}\), such that for every $s \in U,$
$\corr_H(s)\subset V$,
and thus $\corr_H$ is upper hemicontinuous.

Let \(\bar{s} \in \Rdim{K}\) and suppose \(V \subset \Delta^{K-1}\) is open with
\(
\corr_H(\bar{s})\cap V\neq\emptyset.
\)
Choose \(\bar{r}\in \corr_H(\bar{s})\cap V\). Since \(V\) is open, there exists \(\varepsilon>0\) with \(B(\bar{r},\varepsilon)\subset V\). 
\begin{itemize}
    \item 
    \emph{If \(\bar{r}\) is an interior point of \(\corr_H(\bar{s})\): }
   For all \(k \in [K]\), \(\bar{r}(k) > \omega_H(\bar{s})\). 
   By the continuity of \(\omega_H\), 
   there exists a neighborhood \(U\) of \(\bar{s}\) such that for all \(s\in U\) and $k \in [K]$,
   $\omega_H(s) < \bar{r}(k)$.
   Hence, \(\bar{r}\in \corr_H(s)\cap V\).

   \item 
   \emph{If \(\bar{r}\) lies on the boundary of \(\corr_H(\bar{s})\):}  
   Then for some coordinate, \(\bar{r}(k)=\omega_H(\bar{s})\). Since \(\bar{r}\) is in \(V\), we can select \(r'\in B(\bar{r},\varepsilon)\) with \(r'(k)>\omega_H(\bar{s})\) for all \(k\). Again, by continuity of \(\omega_H\), for \(s\) in a neighborhood \(U\) of \(\bar{s}\) we have
   $\omega_H(s) < r'(k)$ for every $k \in [K]$,
   so that \(r'\in \corr_H(s)\cap V\).
\end{itemize}
In either case, there exists an open neighborhood \(U\) of \(\bar{s}\) such that 
\(\corr_H(s)\cap V\neq\emptyset\) for all \(s\in U\),
and thus $\corr_H$ is lower hemicontinuous.

\end{proof}

\begin{proof}[Proof of \lemmaref{lemma:nash_last_round}]
Let $s \in \Rdim{K}$. 
By \definitionref{def:value_function}, $\pot{0}{K}{s} \eqdef \max_{k \in \nset{K}} s(k)$ and hence a convex function.
By \lemmaref{lemma:value_function_if_convex},
\[
    \pot{1}{K}{s} = \min_{r \in \corr_1(s)} \max_{k \in [K]} s(k)+ \frac{1}{r(k)}.
\]
Assume, towards a contradiction, 
that there exists an optimal bookmaker action $r \in \corr_1(s)$ that does not satisfy 
\eqref{eq:nash_last_round};
that is, $r$ achieves the optimal loss \( \opt\ell  \), yet there exists an index \( i \in \nset{K} \) such that  
\begin{equation}
    \label{eq:proof_nash_last_round_2_cont_assumption}
    \vecind{s}{i} + \frac{1}{\vecind{r}{i}} = \opt\ell - \kappa  \quad \text{for some } \kappa > 0.
\end{equation}
Assume $K \geq 2$ (for the case \( K = 1 \), the statement is trivially false).
It holds that  
\[
    \frac{K}{K-1} \vecind{r}{i} \; , \; \frac{K \kappa (\vecind{r}{i}^2) }{(K-1)(1 + \kappa \vecind{r}{i})} > 0.
\]
Therefore, there exists \( \varepsilon \in \Rpplus{} \) such that  
\[
    0 < \varepsilon < \min \left\{ 
            \frac{K}{K-1} \vecind{r}{i} \; , \;
            \frac{K \kappa (\vecind{r}{i}^2) }{(K-1)(1 + \kappa \vecind{r}{i})}
        \right\}.
\]
We construct a new distribution \( \hat{r} \in \simplex{K-1} \) as follows:  
\[
    \vecind{\hat{r}}{k} = \begin{cases}
        \vecind{r}{k} - \frac{K-1}{K} \varepsilon & \text{if } k = i, \\
        \vecind{r}{k} + \frac{1}{K} \varepsilon & \text{if } k \neq i.
    \end{cases} 
\]
\( \hat{r} \) is indeed in \( \simplex{K-1} \) as it satisfies:
\begin{itemize}
    \item \emph{Sum Constraint}: 
    \[
        \sum_{k=1}^K \vecind{\hat{r}}{k} 
        = \vecind{r}{i} - \frac{K-1}{K} \varepsilon + \sum_{k \neq i} \left(\vecind{r}{k} + \frac{1}{K} \varepsilon\right)
        = 1.
    \]   
    \item \emph{Positivity Constraint}:  
    For \( k = i \), since \( \varepsilon < \frac{K}{K-1} \vecind{r}{i} \) we obtain:  
    \[
        \vecind{\hat{r}}{i} 
        = \vecind{r}{i} - \frac{K-1}{K} \varepsilon     
        > \vecind{r}{i} - \frac{K-1}{K} \frac{K}{K-1} \vecind{r}{i}
        = 0.
    \]
    For \( k \in \nset{K} \setminus \{i\} \):  
    \[
        \vecind{\hat{r}}{k} = \vecind{r}{k} + \frac{1}{K} \varepsilon > \vecind{r}{k} > 0.
    \]
\end{itemize}
We show that acting with \( \hat{r} \) produces a loss that is \(< \opt\ell \).
\begin{itemize}
    \item \emph{For \( k = i \)}:  
        \begin{align}
            \label{eq:nash_last_round_2_a}
            \frac{1}{\vecind{\hat{r}}{i}} - \frac{1}{\vecind{r}{i}}    
            = \frac{\vecind{r}{i} - \vecind{\hat{r}}{i}}{\vecind{r}{i} \vecind{\hat{r}}{i}}
            = \frac{\frac{K-1}{K} \varepsilon}{\vecind{r}{i} (\vecind{r}{i} - \frac{K-1}{K} \varepsilon)}.
        \end{align}
        From the definition of \( \varepsilon \), we have:
        \begin{align}
            \label{eq:nash_last_round_2_b}
            \varepsilon < \frac{\kappa \vecind{r}{i}^2 K}{(K-1)(1 + \kappa \vecind{r}{i})} 
            \quad \implies
            \frac{\frac{K-1}{K} \varepsilon}{\vecind{r}{i} \left( \vecind{r}{i} - \frac{K-1}{K} \varepsilon \right)} < \kappa.
        \end{align}

        Thus, by \equationref{eq:proof_nash_last_round_2_cont_assumption,eq:nash_last_round_2_a,eq:nash_last_round_2_b}
        \begin{align*}
            \vecind{s}{i} + \frac{1}{\vecind{\hat{r}}{i}} 
            = \opt\ell - \kappa + \frac{1}{\vecind{\hat{r}}{i}} - \frac{1}{\vecind{r}{i}}
            < \opt\ell.
        \end{align*}

    \item \emph{For \( k \in \nset{K} \setminus \{i\} \)}:
    \[
            \frac{1}{\vecind{r}{k}} > \frac{1}{\vecind{r}{k} + \frac{1}{K} \varepsilon} \implies
            \vecind{s}{k} + \frac{1}{\vecind{r}{k}} > \vecind{s}{k} + \frac{1}{\vecind{\hat{r}}{k}} 
    \]
    Hence,
    \[
        \opt\ell = \max_{k \in \nset{K} \setminus \{i\}}  \vecind{s}{k} + \frac{1}{\vecind{r}{k}}
            > \max_{k \in \nset{K} \setminus \{i\}} \vecind{s}{k} + \frac{1}{\vecind{\hat{r}}{k}} .
    \]
\end{itemize}
Combining both cases, we obtain:
\[
    \max_{k \in \nset{K}} \left( \vecind{s}{k} + \frac{1}{\vecind{\hat{r}}{k}} \right) < \opt\ell.
\]
It follows that $r$ is suboptimal, contradicting the optimality assumption.

It remains to show that the optimal bookmaking action is unique.
Let $\optr, \hat{r} \in \corr_{1}(s)$ be two vectors that satisfy \eqref{eq:nash_last_round},
i.e.,
\begin{equation*}
    s(k) + \frac{1}{\optr(k)} = s(k) + \frac{1}{\hat{r}(k)} = \pot{1}{K}{s} \quad \forall k \in [K]. 
\end{equation*}
As the function $x \mapsto \frac{1}{x}$ is strictly decreasing in $\Rplus{}$,
for every $k \in [K]$, $\hat{r}(k) = \optr(k)$, 
and thus the optimal bookmaking action $\optr$ is unique.
\end{proof}

\subsection{Proofs and Technical Details for Section \ref{sec:bpf} and Appendix \ref{appendix:value_vectors}}
\label{appendix:technical_C}

\begin{proof}[Proof of \lemmaref{lemma:last_gamblers_bet}]
    Let $k \in [K]$ be the index for which $q_H = \basis{k}$.
    When $K=1$, the claim is immediate.
    Suppose $K \geq 2$, and assume, towards a contradiction, that some 
    $j \in [K] \setminus \{k\}$ maximizes the RHS of \eqref{eq:conclusion_from_nash_and_value_function}.
    By \lemmaref{lemma:continuity_of_the_value_function} (see \appendixref{appendix:the_adversarial_game_dynamics}),
    any optimal bookmaker's action $r_H$ assigns a positive probability mass $r_H(i) > 0$ for every $i \in [K]$.
    Thus, for the betting sequence $(q_1, \ldots, q_{H-1}, \basis{j}) \in (\stdbasis{K})^H$, 
    the resulting loss would exceed $\pot{H}{K}{s}$, a contradiction to the bookmaker's optimality assumption. 
\end{proof}

\begin{proof}[Proof of \lemmaref{lemma:recurrence_relation_or_the_denom}]
By \lemmaref{lemma:elementary_symmetric_polynomial_recurrence_relation}, we have for \(m\ge1\) 
\[
v(k)\,\esp{m-1}{v\minind{k}} + \esp{m}{v\minind{k}} = \esp{m}{v}.
\]
In particular, with \(m\) replaced by \(K-m\) (noting that \(K-m\ge1\) for \(m\le K-1\) ) we obtain
\begin{align}\label{eq:proof_Drecurrence_ESPrecKM}
v(k)\,\esp{K-1-m}{v\minind{k}} = \esp{K-m}{v} - \esp{K-m}{v\minind{k}}.    
\end{align}
By \eqref{eq:proof_Drecurrence_ESPrecKM}, we have  
\begin{align}\label{eq:proof_Drec_1}
    \vecind{v}{k}\cdot\denom{H}{K-1}{v\minind{k}}&= \sum_{m=0}^{K-1} (-1)^m\,\fallingfact{H}{m}\Bigl(\esp{K-m}{v}-\esp{K-m}{v\minind{k}}\Bigr).
\end{align}
We combine the second summand of \eqref{eq:proof_Drec_1} and $H \cdot \denom{ H-1 }{K-1}{ v\minind{k} }$ as follows:
\begin{align}\label{eq:proof_Drec_2}
    &H \cdot \denom{ H-1 }{K-1}{ v\minind{k} } + \sum_{m=0}^{K-1} (-1)^m\,\fallingfact{H}{m}\esp{K-m}{v\minind{k}}
    \nonumber\\
    &\stackrel{(a)}= \sum_{m=0}^{K-1} (-1)^m\,\fallingfact{H}{m+1} \,
\esp{K-1-m}{v\minind{k}} + \sum_{m=0}^{K-1} (-1)^m\,\fallingfact{H}{m}\esp{K-m}{v\minind{k}}\nonumber\\
&\stackrel{(b)}= \sum_{m=1}^{K} (-1)^{m-1}\,\fallingfact{H}{m}\,\esp{K-m}{v\minind{k}} + \sum_{m=0}^{K-1} (-1)^m\,\fallingfact{H}{m}\esp{K-m}{v\minind{k}}\nonumber\\
&\stackrel{(c)}= (-1)^{K-1}\,\fallingfact{H}{K}\,\esp{0}{v\minind{k}} + \esp{K}{v\minind{k}}\nonumber\\
&= (-1)^{K-1}\,\fallingfact{H}{K}\,\esp{0}{v\minind{k}},
\end{align}
where $(a)$ follows from the identity $H\fallingfact{(H-1)}{m} = \fallingfact{H}{m+1}$, 
$(b)$ follows by re-indexing with \(m+1\), and $(c)$ follows by noting that the two sums are telescoping.
We can now combine \eqref{eq:proof_Drec_1} and \eqref{eq:proof_Drec_2} to complete the proof
\begin{align*}
&\vecind{v}{k} \cdot \denom{ H }{K-1}{ v\minind{k} } - H \cdot \denom{ H-1 }{K-1}{ v\minind{k} } \\
= \,&\sum_{m=0}^{K-1} (-1)^m\,\fallingfact{H}{m}\esp{K-m}{v} + (-1)^{K}\,\fallingfact{H}{K}\,\esp{0}{v\minind{k}}\nonumber\\
=\, &\sum_{m=0}^{K} (-1)^m\,\fallingfact{H}{m}\esp{K-m}{v}\nonumber\\
=\, &\denom{ H }{K}{ v },
\end{align*}
where we use the fact that $\esp{0}{v\minind{k}} = \esp{0}{v}=1$. 
\end{proof}

\begin{proof}[Proof of \lemmaref{lemma:main_lemma_in_conditions}, case $K=1$]
When \(K=1\), the only possible state is \(v=(v(1))\), 
and, by \definitionref{def:bpf}, \(v\in\vvset{H,1}\) if and only if \(v(1)=H\).  Moreover,
\[
  \denom{H}{1}{v} = v(1)-H =0,
  \qquad
  \denom{H-1}{0}{\cdot}=1.
\]
Thus the unique action \(r(1)=1\) satisfies
\[
  \frac{\denom{H-1}{0}{v\minind{1}}}{\denom{H-1}{1}{v}}
  \;=\;
  \frac{1}{H - (H-1)}
  \;=\;1
  \;=\;r(1),
\]
verifying \lemmaref{lemma:proof_of_bp_1}, and
\[
  v(1)
  \;=\;
  H
  \;=\;
  \frac{H \cdot \denom{H-1}{0}{\cdot}}{\denom{H}{0}{\cdot}}
  \;=\;
  \frac{\num{H}{0}{v\minind{1}}}{\denom{H}{0}{v\minind{1}}},
\]
verifying \lemmaref{lemma:proof_of_bp_2}.  
Since \(\denom{H}{0}{v\minind{1}}=1>0\), \lemmaref{lemma:proof_of_bp_3} holds.  
\end{proof}

\begin{proof}[Proof of \lemmaref{lemma:der_of_denom}] Let $H,K \in \Nplus$.
    \begin{enumerate}
        \item 
        By \eqref{eq:main_results_def_of_denom_poly}, \(\denom{H}{K}{v}\) is a finite sum of elementary symmetric polynomials
        (each of which is a polynomial in \(v\)).  
        Hence \(\denomfunction{H}{K}\) is itself a multivariate polynomial, and therefore \(C^\infty\) on \(\Rdim{K}\).

  \item We prove by induction on $m$.

      $\bullet$ \; \emph{Base case} ($m=1$):
      Let $k \in [K]$ be some index. 
      By \lemmaref{lemma:recurrence_relation_or_the_denom}, 
      \begin{align*}
        \frac{\partial}{\partial v(k)} \denom{H}{K}{v} 
        &=
        \frac{\partial}{\partial v(k)} \left(
            \vecind{v}{k} \cdot \denom{ H }{K-1}{ v\minind{k} } - H \cdot \denom{ H-1 }{K-1}{v\minind{k}} 
            \right)
        \\
        &=
        \denom{H}{K-1}{v^{\setminus k}}
      \end{align*}
      
      $\bullet$ \; \emph{Inductive step} ($m \to m+1$):
      Let $m \in [K-1]$. Fix any subset of indices \(\mathfrak I=\{i_1,\dots,i_{m}\} \subset [K]\) of size $m$, and $k \in [K]\setminus\mathfrak{I}$.
      It holds that 
      \begin{align*}
        \frac{ \partial^m  \denomfunction{H}{K}}{\partial^m v_{\mathfrak{I}\cup \{ k \}} } (v)
        &=
        \frac{ \partial }{\partial v(k) } 
        \frac{ \partial^m  \denomfunction{H}{K}}{\partial^m v_{\mathfrak{I}} } (v)
        \\
        &\stackrel{(a)}{=} 
        \frac{ \partial }{\partial v(k) } \denom{H}{K-m}{v\minind{\mathfrak{I}}}
        \\
        &\stackrel{(b)}{=} 
        \frac{ \partial }{\partial v(k) } \left(
            v(k) \cdot  \denom{H}{K-(m+1)}{v\minind{\mathfrak{I}\cup \{k\}}} - H \cdot \denom{H-1}{K-(m+1)}{v\minind{\mathfrak{I}\cup \{k\}}}
            \right)
        \\
        &= 
         \denom{H}{K-(m+1)}{v\minind{\mathfrak{I}\cup \{k\}}}
      \end{align*}
      where $(a)$ is implied by the induction hypothesis, 
      and $(b)$ follows from \lemmaref{lemma:recurrence_relation_or_the_denom}.

  \item 
  Let $k \in [K]$ and $m > 1$. 
  By \lemmaref{lemma:partial_derivatives_of_denom_2} it holds that 
   \begin{equation*}
       \frac{\partial^m}{\partial v(k)^m} \denom{H}{K}{v} 
       = 
       \frac{\partial^{m-1}}{\partial v(k)^{m-1}} \denom{H}{K-1}{v\minind{k}}
       =
       0.
   \end{equation*}
  \end{enumerate}
\end{proof}

\begin{proof}[Proof of \lemmaref{lemma:lower_bound_for_value_vectors}]
    Let \( \seq{r}{H} \) be any sequence of actions chosen by the bookmaker \( \bm^{H} \). 
    Then, if the gambler bets on a single outcome $k \in [K]$ in each of the remaining $H$ rounds, 
    the total payout for outcome $k$ will be 
    \begin{equation*}  
        \sum_{h=1}^{H} \frac{1}{\vecind{\seqelem{r}{h}}{k}}  
        \geq \sum_{h=1}^{H} 1  
        = H.  
    \end{equation*}  
    In case \( K = 1 \), we have \( r_h(1) = 1 \) for all \( h \), and the inequality becomes an equality.
    Otherwise, setting $ r_h(k) = 1$ at any $h$ would cause the remaining components of the vector to become infinite.
\end{proof}

\subsubsection{Simplification of Equations \teq{\eqref{eq:proof_of_bp_inductive_step_part_2_def_A}}
and \teq{\eqref{eq:proof_of_bp_inductive_step_part_2_def_B}}}
\label{appendix:proof_of_bp_inductive_step_part_2_simplification}

For any $H \geq 1$ and $K \geq 2$, it holds that
\begin{align} 
    \sum_{\indneq{i}{\nset{K}}{k}} \denom{H}{K-2}{v\minind{\{i,k\}}} 
    &\stackrel{(a)}{=}  \sum_{\indneq{i}{\nset{K}}{k}}
        \sum_{m=0}^{K-2} (-1)^{m} \cdot \fallingfact{H}{m} \cdot \esp{K - 2 - m}{v\minind{\{i,k\}}} 
    \nonumber \\
    &=  \sum_{\indneq{i}{\nset{K}}{k}}
        \sum_{m=0}^{K-2} (-1)^{(m+1)-1} \cdot \fallingfact{H}{(m+1)-1} \cdot \esp{(K - 1) - (m + 1)}{v\minind{\{i,k\}}} 
    \nonumber \\
    &\stackrel{(b)}{=} 
    \sum_{\indneq{i}{\nset{K}}{k}}
        \sum_{n=1}^{K-1} (-1)^{n-1} \cdot \fallingfact{H}{n-1} \cdot \esp{(K - 1) - n}{v\minind{\{i,k\}}} 
    \nonumber  \\
    &=  
     \sum_{n=1}^{K-1} (-1)^{n-1} \cdot \fallingfact{H}{n-1} \cdot 
        \sum_{\indneq{i}{\nset{K}}{k}} \esp{(K - 1) - n}{v\minind{\{i,k\}}} 
    \nonumber \\
    &\stackrel{(c)}{=} 
     \sum_{n=1}^{K-1} (-1)^{n-1} \cdot n \cdot \fallingfact{H}{n-1} \cdot \esp{(K - 1) - n}{v\minind{k}},
    \label{eq:simplification_of_A_and_B_expansion_of_sum}
\end{align}
where: 
$(a)$ follows from the expression of $\denom{H}{K}{v}$ in \eqref{eq:def_of_ppoly_explicit};
$(b)$ follows by reindexing the summation via $n=m+1$;
and $(c)$ follows from \lemmaref{lemma:elementary_symmetric_polynomial_sum_identity}.
We obtain
\begin{align}
    \mathtt{B} \;
    &\stackrel{(a)}{\eqdef}
    \denom{H}{K-1}{v\minind{k}} 
        -  \sum_{\indneq{i}{\nset{K}}{k}} \denom{H}{K-2}{v\minind{\{i,k\}}}
    \nonumber
    \\ 
    &\stackrel{(b)}{=} 
     \sum_{m=0}^{K-1} (-1)^{m} \cdot \fallingfact{H}{m}  \cdot \esp{(K - 1) - m}{v\minind{k}}  
        + \sum_{m=0}^{K-1} (-1)^{m} \cdot m \cdot \fallingfact{H}{m-1} \cdot \esp{(K-1) - m}{v\minind{k}}
    \nonumber \\ 
    &=
     \sum_{m=0}^{K-1} (-1)^{m} \cdot \esp{(K - 1) - m}{v\minind{k}} \cdot 
        \left(  \fallingfact{H}{m} + m \cdot \fallingfact{H}{m-1}  \right)
    \nonumber \\ 
    &\stackrel{(c)}{=}
    \sum_{m=0}^{K-1} (-1)^{m} \cdot \fallingfact{(H+1)}{m} \cdot \esp{(K - 1) - m}{v\minind{k}}
    \nonumber \\ 
    &= 
    \denom{H+1}{K-1}{v\minind{k}},
\end{align}
where:
$(a)$ follows from our definition of $\mathtt{B}$ in \eqref{eq:proof_of_bp_inductive_step_part_2_def_B};
$(b)$ follows from the expression of $\denom{H}{K}{v}$ in \eqref{eq:def_of_ppoly_explicit} and a substitution of  \eqref{eq:simplification_of_A_and_B_expansion_of_sum};
and $(c)$ follows from the identity $\fallingfact{T}{m} + m \cdot \fallingfact{T}{m-1} = \fallingfact{(T+1)}{m}$
in \lemmaref{lemma:falling_factorial_identities_2}.

Similarly to the derivation of \eqref{eq:simplification_of_A_and_B_expansion_of_sum}, 
by the definition of $\num{H}{K}{v}$ in \eqref{eq:def_of_num_poly},
it holds that
\begin{equation} \label{eq:simplification_of_A_and_B_expansion_of_sum_similarly}
    \sum_{\indneq{i}{\nset{K}}{k}} \num{H}{K-2}{v\minind{\{i,k\}}} 
    = \sum_{n=0}^{K-1} (-1)^{n-1} \cdot n \cdot \fallingfact{H}{n} \cdot \esp{(K - 1) - n}{v\minind{k}}.
\end{equation}
It follows that
\begin{align*}
    \mathtt{A}  
    &\stackrel{(a)}{\eqdef} 
    \denom{H}{K-1}{v\minind{k}}
        + \num{H}{K-1}{v\minind{k}}
        - \sum_{\indneq{i}{\nset{K}}{k}} \num{H}{K-2}{v\minind{\{i,k\}}}
    \\ 
    &\stackrel{(b)}{=}   
    \sum_{m=0}^{K-1} (-1)^{m} \cdot \fallingfact{H}{m} \cdot \esp{(K - 1) - m}{v\minind{k}} 
        + H \cdot \sum_{m=0}^{K-1} (-1)^{m} \cdot \fallingfact{(H-1)}{m} \cdot \esp{(K - 1) - m}{v\minind{k}}
        \\ &\qquad + \sum_{m=0}^{K-1} (-1)^{m} \cdot m \cdot \fallingfact{H}{m} \cdot \esp{(K-1) - m}{v\minind{k}}
    \\ 
     &\stackrel{(c)}{=}   
    \sum_{m=0}^{K-1} (-1)^{m} \cdot \esp{(K - 1) - m}{v\minind{k}}
        \left(
            (m+1) \cdot \fallingfact{H}{m} + \fallingfact{H}{m+1}
        \right)
    \\
    &\stackrel{(d)}{=}
    \sum_{m=0}^{K-1} (-1)^{m} \cdot \fallingfact{(H+1)}{m+1} \cdot \esp{(K - 1) - m}{v\minind{k}}
    \\ 
    &= 
    \num{H+1}{K-1}{v\minind{k}},
\end{align*}
where the steps are justified as follows: 
\begin{enumerate}[label={$(\alph*)$}]
    \item 
    Follows from our definition of $\mathtt{A}$ in \eqref{eq:proof_of_bp_inductive_step_part_2_def_A}.
    \item 
    Follows from the expression of $\denom{H}{K}{v}$ in \eqref{eq:def_of_ppoly_explicit},
    the definition of $\num{H}{K}{v}$ in \eqref{eq:def_of_num_poly},
    and a substitution of \eqref{eq:simplification_of_A_and_B_expansion_of_sum_similarly}.
    \item 
    Follows from the identity $H \cdot \fallingfact{(H-1)}{m} = \fallingfact{H}{m+1}$.
    \item 
    Follows from the identity  $\fallingfact{T}{m} + m \cdot \fallingfact{T}{m-1} = \fallingfact{(T+1)}{m}$\, in \lemmaref{lemma:falling_factorial_identities_2}.
\end{enumerate}

\subsubsection{Proof of Remark \ref{remark:validity_of_r} } \label{appendix:validity_of_r}

\begin{lemma}\label{lemma:validity_of_r_sum_to_1}
    Let $H,K \in \Nplus$. If $v \in \vvset{H,K}$ then
    \begin{equation*}   
        \sum_{k=1}^{K} \denom{H-1}{K-1}{v\minind{k}}  = \denom{H-1}{K}{v}
    \end{equation*}
\end{lemma}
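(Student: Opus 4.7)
The plan is a direct algebraic reduction that collapses the identity to $\denom{H}{K}{v} = 0$, which is exactly Lemma \ref{lemma:proof_of_bp_2} applied to $v \in \vvset{H,K}$. First, I will expand $\denom{H-1}{K-1}{v\minind{k}}$ using the explicit form \eqref{eq:def_of_ppoly_explicit}, exchange the two summations, and collapse the inner $k$-sum via Lemma \ref{lemma:elementary_symmetric_polynomial_sum_identity} (with $j = K-1-m$, so $K-j = m+1$), yielding
\[
\sum_{k=1}^{K} \denom{H-1}{K-1}{v\minind{k}} \;=\; \sum_{m=0}^{K-1} (-1)^{m}\,(m+1)\,\fallingfact{(H-1)}{m}\,\esp{K-1-m}{v}.
\]

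Next, I will expand the RHS $\denom{H-1}{K}{v}$ identically, isolate the boundary term at $m=0$, and reindex the remaining sum by $m \mapsto m+1$ so that both sides feature a sum over $m\in\{0,\dots,K-1\}$ with common symmetric factor $\esp{K-1-m}{v}$. Subtracting the two displays then collects, for each $m$, the coefficient $(m+1)\fallingfact{(H-1)}{m} + \fallingfact{(H-1)}{m+1}$, which simplifies to $\fallingfact{H}{m+1}$ by combining the definition of the falling factorial with Lemma \ref{lemma:falling_factorial_identities_1}. One final reindexing $m \mapsto m-1$ restores the expansion of $\denom{H}{K}{v}$ from \eqref{eq:def_of_ppoly_explicit}. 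Lemma \ref{lemma:proof_of_bp_2} gives $\denom{H}{K}{v} = 0$ for $v \in \vvset{H,K}$, which delivers the claim.

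There is no serious obstacle—the only thing to watch is the bookkeeping of the two reindexings and the separation of the $m=K$ boundary term. As an even shorter, more conceptual alternative, one can invoke Lemma \ref{lemma:proof_of_bp_1} directly: any bookmaker that achieves $v$ chooses a first action $r_1 \in \simplex{K-1}$, and Lemma \ref{lemma:proof_of_bp_1} forces $r_1(k) = \denom{H-1}{K-1}{v\minind{k}}\, /\, \denom{H-1}{K}{v}$, where the denominator is nonzero by \eqref{eq:proof_of_bpe_denom_in_r_is_non_zero} after an index shift. Clearing denominators in $\sum_{k} r_1(k) = 1$ then yields the identity immediately.
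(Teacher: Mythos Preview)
Your proposal is correct and follows essentially the same route as the paper: expand the left-hand side via \eqref{eq:def_of_ppoly_explicit}, collapse the inner sum with Lemma~\ref{lemma:elementary_symmetric_polynomial_sum_identity}, use the falling-factorial identity to recombine terms into $-\denom{H}{K}{v}$, and finish with Lemma~\ref{lemma:proof_of_bp_2}. Your alternative shortcut via Lemma~\ref{lemma:proof_of_bp_1} and $\sum_k r(k)=1$ is exactly what the paper has in mind when it remarks that the lemma ``must hold by construction'' before giving the explicit computation.
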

The lemma must hold by construction:
for $H=1$, $v$ was defined to satisfy condition \ref{base_case_condition_1};
for $H>1$, it satisfy condition \ref{condition_induction_1}.
For the sake of completeness, we provide an explicit proof.

\begin{proof}[Proof of \lemmaref{lemma:validity_of_r_sum_to_1}]
Let $H,K \in \Nplus$ and $v \in \vvset{H,K}$.
It holds that
\begin{align*}
    \sum_{k=1}^{K} \denom{H-1}{K-1}{v\minind{k}} 
    &= 
    \sum_{k=1}^{K} \sum_{m=0}^{K-1} (-1)^{m} \fallingfact{(H-1)}{m} \esp{(K-1)-m}{v\minind{k}} 
    \\
    &=
    \sum_{m=0}^{K-1} (-1)^{m} \fallingfact{(H-1)}{m} \sum_{k=1}^{K} \esp{K-(m+1)}{v\minind{k}} 
    \\
    &\stackrel{(a)}{=}
    \sum_{m=0}^{K-1} (-1)^{m} \fallingfact{(H-1)}{m} \cdot (m+1) \cdot  \esp{K-(m+1)}{v} 
    \\
    &\stackrel{(b)}{=}
    \sum_{n=1}^{K} (-1)^{n-1} \fallingfact{(H-1)}{n-1} \cdot (n) \cdot \esp{K-n}{v}
    \\
    &\stackrel{(c)}{=}
    \sum_{n=1}^{K} (-1)^{n-1} \left(  \fallingfact{H}{n} - \fallingfact{(H-1)}{n} \right) \cdot \esp{K-n}{v} 
    \\
    &=
    - \left( {\sum_{n=1}^{K} (-1)^{n}  \fallingfact{H}{n} \cdot \esp{K-n}{v}} \right)
    + \left( {\sum_{n=1}^{K} (-1)^n  \fallingfact{(H-1)}{n} \esp{K-n}{v}} \right)
    \\
    &=
    - \left( \denom{H}{K}{v} - \esp{K}{v} \right)
    + \left( \denom{H-1}{K}{v} - \esp{K}{v} \right)
    \\
    &= 
    - \denom{H}{K}{v} + \denom{H-1}{K}{v}
    \\
    &\stackrel{(d)}{=}
    \denom{H-1}{K}{v}
\end{align*}
where the steps are justified as follows:  
\begin{enumerate}[label={$(\alph*)$}]
\item 
Follows from \lemmaref{lemma:elementary_symmetric_polynomial_sum_identity}.
\item 
Follows from the substitution $n=m+1$.
\item 
Follows from the identity $n \cdot \fallingfact{(H-1)}{n-1} + \fallingfact{(H-1)}{n} = \fallingfact{H}{n}$ stated in \lemmaref{lemma:falling_factorial_identities_2}.
\item 
By \lemmaref{lemma:proof_of_bp_2},  any $v \in \vvset{H,K}$ satisfies $\denom{H}{K}{v} = 0$.
\end{enumerate}
\end{proof}

\begin{proof}[Proof of \remarkref{remark:validity_of_r}]
Let $H,K \in \Nplus$ and $v \in \vvset{H,K}$.
We prove the vector $r$, given by
\begin{equation} \tag{\ref{eq:optimal_odd_thm}}    
    r(k) = \frac{\denom{H-1}{K-1}{v\minind{k}}}{\denom{H-1}{K}{v}}
\end{equation}
for every $k \in [K]$, is a probability distribution  with non-zero elements.
\begin{itemize}
    \item 
    \emph{Sum Constraint:}
    Holds by \lemmaref{lemma:validity_of_r_sum_to_1}.

    \item 
    \emph{Positivity Constraint:} 
    We distinguish two cases based on the value of $H$.
    \begin{itemize}
        \item 
        \emph{Case $H=1$:} 
        For every $k \in [K]$
        \begin{equation} \tag{\ref{eq:proof_of_bpe_r_equation}}
            \vecind{r}{k} 
            = \frac{1}{\vecind{v}{k}}.
        \end{equation}
        By \lemmaref{lemma:lower_bound_for_value_vectors}, $v \succeq \mathbf{1}_K$; 
        thus, $r$ satisfies the positivity constraint.

        \item 
        \emph{Case $H>1$:}
        For every $k \in [K]$ the vector $\prescript{k}{}{v} \eqdef v - \basis{k}/r(k)$
        must be in $\vvset{H-1,K}$. 
        By the proof of \lemmaref{lemma:proof_of_bp_3} (see \lemmaref{lemma:derivation_of_proof_of_bp_3}) for every $m \in [K]$ and $\mathfrak{I} \in \binom{[K]}{m}$, 
        \(
            \denom{H-1}{K-m}{\prescript{k}{}{v}\minind{\mathfrak{J}}} > 0.
        \)
        Choosing $\mathfrak{I}=\{k\}$ gives 
        \[
            \denom{H-1}{K-1}{v\minind{k}} 
            = \denom{H-1}{K-1}{\prescript{k}{}{v}\minind{k}}  
            > 0.
        \]
        Thus, for every $k \in [K]$, the numerator in \eqref{eq:optimal_odd_thm} is positive. 
        By \lemmaref{lemma:validity_of_r_sum_to_1}, 
        the denominator is a sum of positive numbers and hence positive.
        Thus, $r$ satisfies the positivity constraint.
    \end{itemize}
\end{itemize}

\end{proof}

\subsection{Technical Details for Appendix \ref{appendix:main_results} }
\label{appendix:technical_D}

\subsubsection{Derivation of the Explicit Form of \teq{$\hatppolyfunction{T}{K}$} in \teq{\eqref{eq:expansion_of_hatppoly}}}
\label{appendix:derivation_of_the_explicit_form_of_tilde_P}
\begin{align*}
    \hatppoly{T}{K}{x} 
    &\stackrel{(a)}{\eqdef}
    \mathcal{P}_{T,K}(x+T) 
    \\ 
    &= 
    \sum_{m=0}^{K} (-1)^m \binom{K}{K-m} \,\fallingfact{T}{m}\, (x+T)^{\,K-m}
    \\ 
    &\stackrel{(b)}{=} 
        \sum_{n=0}^{K} (-1)^{K-n} \binom{K}{n}\,\fallingfact{T}{K-n}\,(x+T)^{n}.
    \\ 
    &\stackrel{(c)}{=} 
    \sum_{n=0}^{K} (-1)^{K-n} \binom{K}{n}\,\fallingfact{T}{K-n}\, \sum_{m=0}^{n} \binom{n}{m} x^{m} T^{n-m}
    \\
    &=
    \sum_{n=0}^{K} \sum_{m=0}^{n} 
         x^m (-1)^{K-n} \binom{K}{n} \binom{n}{m}\,\fallingfact{T}{K-n} \,T^{n-m} 
   \\ 
   &\stackrel{(d)}{=}  
   \sum_{n=0}^{K} \sum_{m=0}^{n} 
     x^m  \binom{K}{m} (-1)^{K-n} \binom{K-m}{K-n}\,\fallingfact{T}{K-n} \,T^{n-m} 
   \\ 
   &= 
    \sum_{m=0}^{K} x^{m}\,\binom{K}{m}\left(\sum_{n=m}^{K} (-1)^{K-n} \binom{K-m}{K-n}\, T^{n-m}\,\fallingfact{T}{K-n}\right)
    \\ 
    &\stackrel{(e)}{=}     
    \sum_{m=0}^{K} x^{m}\,\binom{K}{m}\left(\sum_{d=0}^{K-m} (-1)^{d} \binom{K-m}{d}\, T^{K-m-d}\,\fallingfact{T}{d}\right)
    \\ 
    &\stackrel{(f)}{=} 
    \sum_{m=0}^{K} x^{m}\,\binom{K}{m}\left(\sum_{d=0}^{K-m} \sum_{i=0}^{d} (-1)^{d} \binom{K-m}{d}\, \stirling{d}{i}\, T^{K-(m+d-i)}\right)
    \\
    &\stackrel{(g)}{=} 
    \sum_{n=0}^{K} x^{K-n}\,\binom{K}{n}\left(\sum_{d=0}^{n} \sum_{i=0}^{d} (-1)^{d} \binom{m}{d}\, \stirling{d}{i}\, T^{n-(d-i)}\right),
\end{align*}
where the steps are justified as follows: 
\begin{enumerate}[label={$(\alph*)$}]
\item 
By definition of $\hatppolyfunction{T}{K}$ in \eqref{eq:expansion_of_hatppoly}.
\item 
Substitution  $n=K-m$.
\item 
From the binomial expansion
\(
    (x + T)^n \;=\; \sum_{m=0}^n \binom{n}{m}\,x^m\,T^{\,n-m}.
\)
\item 
By the identity
\(
    \binom{K}{n} \binom{n}{m} = \binom{K}{m} \binom{K-m}{n-m}.
\)
\item 
Substituting $d=K-n$.
\item 
By \lemmaref{lemma:falling_factorial_expansion}, 
which expresses the falling factorial in terms of Stirling numbers of the first kind.
\item 
Substituting $n=K-m$.
\end{enumerate}

\subsubsection{Derivation of the Explicit Form of \teq{$\betapolyfunction{T}{K}$} in 
\teq{\eqref{eq:expnasion_of_betapoly}}}
\label{appendix:derivation_of_the_explicit_form_of_betapoly}

\begin{align*}
    \betapoly{T}{K}{x} 
    &\stackrel{(a)}{=}
    \hatppoly{T}{K}{\sqrt{T} x}
    \\
    &\stackrel{(b)}{=} 
        \sum_{m=0}^{K} T^{\frac{K-m}{2}} x^{K-m} \,\binom{K}{m}\left(\sum_{d=0}^{m} \sum_{i=0}^{d} (-1)^{d} \binom{m}{d}\, \stirling{d}{i}\, T^{m-(d-i)}\right)
    \\
    &=
    \sum_{m=0}^{K} x^{K-m} \,\binom{K}{m} \, T^{\frac{K}{2}}
        \left(\sum_{d=0}^{m} \sum_{i=0}^{d} (-1)^{d} \binom{m}{d}\, \stirling{d}{i}\, T^{\frac{m}{2} -(d-i)}\right),
\end{align*}
where $(a)$ follows from our definition of $\betapolyfunction{T}{K}$ in \eqref{eq:expnasion_of_betapoly}, 
and in $(b)$ we substitute \eqref{eq:expansion_of_hatppoly}.
Denote
\begin{equation*}
    \widetilde{c}_{T,m} \eqdef \sum_{d=0}^{m} \sum_{i=0}^{d} (-1)^{d} \binom{m}{d}\, \stirling{d}{i}\, T^{\frac{m}{2}-(d-i)}.
\end{equation*}
It holds that 
\begin{align*}
    \widetilde{c}_{T,m}
    &= \sum_{d=0}^{m} \sum_{n=0}^{d} (-1)^{d} \binom{m}{d}\, \stirling{d}{d-n}\, T^{\frac{m}{2} -n }
    \\ 
    &=
    \sum_{n=0}^{m} \sum_{d=n}^{m} (-1)^{d} \binom{m}{d}\, \stirling{d}{d-n}\,T^{\frac{m}{2} -n }
    \\ 
    &= 
    \sum_{n=0}^{m} T^{\frac{m}{2} - n} \sum_{d=0}^{m} (-1)^{d} \binom{m}{d}\, \stirling{d}{d-n}  ,
\end{align*}
where the first equality follows by re-indexing with $n=d-i$.

\end{document}